\def\argmin{\mathop{\arg\min}\limits}
\newcommand{\indicator}{\mathds{1}}
\DeclareMathOperator*{\adj}{adj}
\DeclareMathOperator*{\diag}{diag}
\DeclareMathOperator*{\sgn}{sgn}
\DeclareMathOperator*{\vectorize}{vec}
\newcommand{\scaleMathLine}[2][1]{\resizebox{#1\linewidth}{!}{$\displaystyle{#2}$}}
\newcommand{\prl}[1]{\left(#1\right)}
\newcommand{\brl}[1]{\left[#1\right]}
\newcommand{\crl}[1]{\left\{#1\right\}}
\newcommand{\ubft}{\underline{\bft}}
\newcommand{\ubfb}{\underline{\bfb}}
\theoremstyle{definition}
\newtheorem*{definition*}{Definition}
\newtheorem*{problem*}{Problem}
\newtheorem*{proposition*}{Proposition}
\newtheorem{proposition}{Proposition}
\newtheorem{lemma}{Lemma}
\newcommand{\calE}{{\cal E}}
\newcommand{\calL}{{\cal L}}
\newcommand{\calO}{{\cal O}}
\newcommand{\calX}{{\cal X}}
\newcommand{\bfa}{\mathbf{a}}
\newcommand{\bfb}{\mathbf{b}}
\newcommand{\bfc}{\mathbf{c}}
\newcommand{\bfe}{\mathbf{e}}
\newcommand{\bfg}{\mathbf{g}}
\newcommand{\bfi}{\mathbf{i}}
\newcommand{\bfn}{\mathbf{n}}
\newcommand{\bfo}{\mathbf{o}}
\newcommand{\bfp}{\mathbf{p}}
\newcommand{\bfs}{\mathbf{s}}
\newcommand{\bft}{\mathbf{t}}
\newcommand{\bfu}{\mathbf{u}}
\newcommand{\bfv}{\mathbf{v}}
\newcommand{\bfw}{\mathbf{w}}
\newcommand{\bfx}{\mathbf{x}}
\newcommand{\bfy}{\mathbf{y}}
\newcommand{\bfz}{\mathbf{z}}
\newcommand{\bfzeta}{\boldsymbol{\zeta}}
\newcommand{\bftheta}{\boldsymbol{\theta}}
\newcommand{\bfpi}{\boldsymbol{\pi}}
\newcommand{\bfrho}{\boldsymbol{\rho}}
\newcommand{\bfomega}{\boldsymbol{\omega}}
\newcommand{\bfxi}{\boldsymbol{\xi}}
\newcommand{\bfell}{\boldsymbol{\ell}}
\newcommand{\bfA}{\mathbf{A}}
\newcommand{\bfB}{\mathbf{B}}
\newcommand{\bfF}{\mathbf{F}}
\newcommand{\bfH}{\mathbf{H}}
\newcommand{\bfI}{\mathbf{I}}
\newcommand{\bfJ}{\mathbf{J}}
\newcommand{\bfK}{\mathbf{K}}
\newcommand{\bfN}{\mathbf{N}}
\newcommand{\bfP}{\mathbf{P}}
\newcommand{\bfQ}{\mathbf{Q}}
\newcommand{\bfR}{\mathbf{R}}
\newcommand{\bfT}{\mathbf{T}}
\newcommand{\bfU}{\mathbf{U}}
\newcommand{\bfV}{\mathbf{V}}
\newcommand{\bfY}{\mathbf{Y}}
\newcommand{\bfSigma}{\boldsymbol{\Sigma}}
\newcommand{\bfPhi}{\boldsymbol{\Phi}}
\newcommand{\bbE}{\mathbb{E}}
\newcommand{\bbR}{\mathbb{R}}
\begin{document}
%
\title{OrcVIO: Object residual constrained Visual Inertial Odometry}
%
%
%

\author{Mo Shan,~\IEEEmembership{Student Member,~IEEE,}
        Vikas Dhiman,~\IEEEmembership{Member,~IEEE,}
        Qiaojun Feng,~\IEEEmembership{Student Member,~IEEE,}
        Jinzhao Li,
        and Nikolay Atanasov,~\IEEEmembership{Member,~IEEE}
\thanks{We gratefully acknowledge support from ARL DCIST CRA W911NF-17-2-0181. The authors are with the Department of Electrical and Computer Engineering, University of California San Diego, La Jolla, CA 92093, USA {\tt\footnotesize \{moshan,vdhiman,qjfeng,jil016,natanasov\}@ucsd.edu}.}%
}

%
%



\maketitle

\begin{abstract}
Introducing object models in the variables optimized by simultaneous localization and mapping (SLAM) algorithms not only improves performance but also supports specification and execution of semantically meaningful robotic tasks. This work presents an Object residual constrained Visual Inertial Odometry (OrcVIO) algorithm for online sensor localization, tightly coupled with 3D object pose and shape estimation. OrcVIO initializes and optimizes the pose and shape of detected and tracked objects by differentiating through two new optimization terms capturing object-feature and bounding-box measurements. The estimated object poses and shapes aid in real-time incremental multi-state constraint Kalman filtering (MSCKF) over the visual-inertial sensor states. The ability of OrcVIO for accurate sensor trajectory estimation and large-scale object mapping is evaluated using simulated and real data.
\end{abstract}


%
\IEEEpeerreviewmaketitle

\section*{Supplementary Material}
\label{sec:supp}

\noindent Software and videos supplementing this paper are available at:
\centerline{\url{http://erl.ucsd.edu/pages/orcvio.html}}

\section{Introduction}
\label{sec:introduction}



The foundations of visual environment understanding in robotics, machine learning, and computer vision lie in the twin technologies of inferring geometric structure and semantic content. Researchers have made significant progress in geometric structure reconstruction using Structure from Motion (SfM) \cite{agarwal2009building, schonberger2016structure} and SLAM \cite{cadena2016past} techniques. State of the art SLAM approaches work with monocular or stereo cameras \cite{orbslam2}, often complemented by inertial information \cite{msckf,vinsmono}. However, most real-time incremental SLAM techniques provide purely geometric representations, e.g., of points, lines, or planes, that lack semantic interpretation of the environment. 

Recently, tremendous progress has been achieved in semantic scene understanding using deep neural networks for object detection \cite{bochkovskiy2020yolov4}, instance segmentation \cite{he2017mask}, and object tracking \cite{Wojke2018deep}. Nevertheless, the literature in deep learning is sparse in techniques that provide global positioning of the detected and tracked objects to obtain an object-level map online.


\begin{figure}[t]
  \centering
  \includegraphics[width=\linewidth]{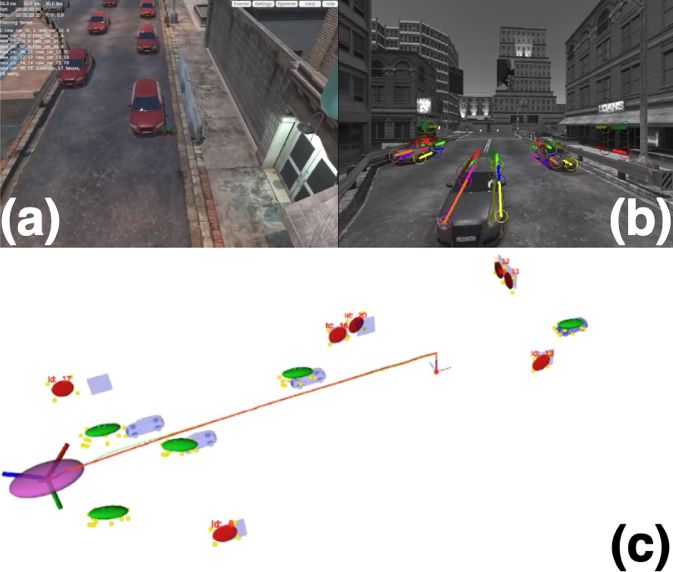}
  \caption{This work develops a tightly coupled visual-inertial odometry and object state estimation algorithm. The figure shows: (a) a quadrotor robot navigating in a simulated Unity environment, containing cars and doors, (b) color-coded semantic keypoint features detected and tracked on the cars and doors, and (c) ground-truth (red) and estimated (green) robot trajectory as well as ground-truth cars and doors (blue meshes) and estimated cars (green ellipsoids) and doors (red ellipsoids) along with their semantic keypoints (yellow points). A video demonstration can be found in the supplementary material.}
  \label{fig:overview}
\end{figure}


%

This paper focuses on joint visual-inertial odometry and object mapping, bridging the gap between geometric and semantic inference in SLAM. Generating geometrically consistent and semantically meaningful maps allows compressed representation, improved loop closure (recognizing already visited locations), and robot mission specifications in terms of human-interpretable objects, e.g., for safe navigation, manipulation, multi-stage interaction \cite{lorbach2014prior, sahin2020review, vasilopoulos2020reactive, garg2020semantics}. Our work introduces object states, modeling the position, orientation, and shape of object instances in the environment. We utilize a coarse category-agnostic and a fine category-specific model of object shape. The coarse model uses an ellipsoid to restrict an object's pose variation and relate its coarse shape to bounding-box detections. The fine model uses a set of mid-level object parts (e.g., car wheels, windshield, doors), called \emph{semantic keypoints}, to obtain a precise part-based shape description. Given streaming inertial measurement unit (IMU) and monocular camera measurements, we develop an algorithm to simultaneously estimate the IMU-camera trajectory and the states of the objects, detected and tracked in the camera images. The \textbf{contributions} of this paper are summarized as follows.
\begin{itemize}

\item We introduce object states in the formulation of a SLAM problem, modeling position, orientation, coarse ellipsoid shape, and fine semantic-keypoint shape.

\item We define residuals relating object states and IMU-camera states to inertial measurements, geometric features, object semantic features, and object bounding-box detections, and explicitly derive their Jacobians.

\item We develop an extension of the multi-state constraint Kalman filter (MSCKF) \cite{msckf} to enable online tightly coupled estimation of object and IMU-camera states. Our innovations include closed-form mean and covariance propagation over the SE(3) pose and velocity manifold of the IMU-camera states and measurement updates based on our new semantic residuals with object states optimized over multiple views. Our algorithm is suitable for real-time incremental odometry and object mapping and is more efficient than nonlinear batch optimization.


\end{itemize}
We name our method \emph{Object residual constrained Visual Inertial Odometry} (OrcVIO) to emphasize the role of the semantic residuals in the optimization process. OrcVIO is capable of producing meaningful object maps and estimating accurate sensor trajectories, as shown in Fig. \ref{fig:overview}.

\section{Related Work}
\label{sec:Review}

Many visual SLAM approaches work with monocular \cite{msckf,civera2008inverse, civera20101, svo2, vinsmono} or stereo cameras \cite{howard2008real, orbslam2, smsckf}. Featureless approaches \cite{newcombe2011dtam, engel2014lsd, gao2018ldso} that minimize image intensity directly have been proposed, and inertial information is often used to complement the visual information \cite{msckf, smsckf, weiss2011real,wu2015square,okvis}. The MSCKF algorithm \cite{msckf} leverages both inertial data and visual features in an extened Kalman filter formulation. Each visual feature whose track is lost provides multi-frame constraints for a corresponding 3D landmark. The constraint residuals are linearized to perform the filter update step. A key idea is to marginalize the landmark states via projection to the null space of the visual feature Jacobians, allowing \emph{structureless} estimation of the IMU-camera states only. A key extension of OrcVIO over the MSCKF and visual-inertial SLAM algorithms in general is to introduce object measurements (bounding boxes and semantic features) and object states whose residuals constrain the IMU-camera states.


Recent methods have considered learning to regress camera poses directly from raw images \cite{kendall2015posenet, zhou2017unsupervised, clark2017vinet, clark2017vidloc, wang2017deepvo, yin2018geonet, li2018undeepvo}. For instance, monocular depth, optical flow, and ego-motion are jointly optimized from video in \cite{yin2018geonet} by relying on a view-synthesis loss. These unsupervised learning techniques have shown impressive performance in localization but do not generate global maps. 
In this paper, we focus on obtaining a joint geometric-semantic representations from measurements in real time, i.e., spatial perception \cite{cao2020representations}. Prior works that utilize both spatial and semantic information include \cite{galindo2005multi, leibe2007dynamic, civera2011towards, pronobis2011semantic, stuckler2015dense, vineet2015incremental, pillai2015monocular, pire2019online}, but the spatial and semantic states are estimated independently and merged later. 
On the other hand, \cite{atanasov2014semantic, reid2014towards, kundu2014joint, galvez2016real, doherty2020probabilistic,       rosinol2020kimera} consider joint metric and semantic mapping. 
Recent works focus on the tightly coupled spatial and semantic estimation, and there are mainly two groups of object-based SLAM techniques: category-specific and category-agnostic. 

Category-specific approaches optimize the pose and shape of object instances, using semantic keypoints \cite{Krishna_ICRA2017,parkhiya2018constructing} or 3D shape models \cite{salas2013slam++,semslam,Atanasov_SemanticSLAM_IJCAI18,dongFS17,fei2018visual,hu2018dense,feng2019localization,hu2019deep,ishimtsev2020cad}. 
For example, \cite{salas2013slam++} introduces a real-time joint 3D object pose and camera pose estimation via pose graph optimization. The objects stored in a database that are also present in the current frame are detected and optimized, using the vertex and normal map from a RGBD sensor. Object pose and shape are optimized in \cite{parkhiya2018constructing} using semantic keypoints to provide additional error terms related to object pose in the SLAM factor graph. Visual-inertial information for object mapping is used in \cite{fei2018visual}, relying on a database to retrieve object shapes. Hu et al.~\cite{hu2018dense} embed object priors into least-squares minimization to incrementally track and map chairs. The object shape represented by a binary voxel grid is compactly described by a latent code obtained from an auto-encoder, which is used for shape initialization and iterative residual minimization. 
These methods in general are computationally demanding due to iterative batch optimization and reliance on instance-specific CAD models.

Category-agnostic approaches use geometric shapes, such as spheres \cite{frost2018recovering,okhierarchical}, cuboids \cite{bao2011semantic,cubeslam}, or ellipsoids \cite{dhiman2016continuous,rubino20173d,GAY2018124,quadric_slam,hosseinzadeh2018structure,ok2019robust,wu2020eao,liao2020object}, to represent objects. SSFM \cite{bao2011semantic} uses the tightest bounding cube enclosing an object to parameterize the object location and pose. The object measurements include the location, size of the object bounding box and the object pose obtained from a 3D object detector~\cite{savarese20073d}. Assuming the semantic measurements are consistent across frames, an object state is optimized via maximum likelihood estimation. CubeSLAM \cite{cubeslam} generates and refines 3D cuboid proposals using multi-view bundle adjustment without relying on prior models. QuadricSLAM \cite{quadric_slam} uses an ellipsoid representation, suitable for defining a bounding-box detection model. Structural constraints based on supporting and tangent planes, commonly observed under a Manhattan assumption, have also been introduced \cite{quadrics_reid}. Using generic symmetric shapes, however, makes the orientation of object instances potentially irrecoverable. For instance, the front and back of an object become indistinguishable.

This paper extends our prior conference publication \cite{orcvio} with several theoretical contributions and a large-scale experimental evaluation of OrcVIO. While \cite{orcvio} used linear velocity measurements, this paper uses a complete description of all IMU states for odometry and derives a novel closed-form expression for covariance propagation. A new bounding-box residual is developed to ensure that it scales equivalently as the keypoint residual. In \cite{orcvio}, the bounding-box residual was quadratic instead of linear as a function of the bounding-box lines. This version also implements a zero-velocity residual to reduce VIO drift when the sensor is completely static. Extensive evaluation is conducted on the KITTI dataset, a photo-realistic Unity dataset, and in indoor and outdoor real-time experiments. We also extend OrcVIO to handle multiple object classes, including cars, doors, barriers, monitors, chairs.


\section{Background and Notation}
\label{sec:background}

We denote the IMU, camera, object, and global reference frames as $\{I\}$, $\{C\}$, $\{O\}$, $\{G\}$, respectively. The transformation from frame $\{A\}$ to $\{B\}$ is specified by a $4\times 4$ matrix:
\begin{equation}
{}^B_A\bfT \triangleq \begin{bmatrix} {}^B_A\bfR & {}^B_A\bfp\\\mathbf{0}^\top & 1 \end{bmatrix} \in SE(3)
\end{equation}
where ${}^B_A\bfR \in SO(3)$ is a rotation matrix and ${}^B_A\bfp \in \mathbb{R}^3$ is a translation vector. To simplify the notation, we will not explicitly indicate the global frame when specifying transformations. For example, the pose of the IMU frame $\{I\}$ in $\{G\}$ at time $t_k$ is specified by ${}_I\bfT_k$.

We overload $(\cdot)_\times$ to denote the mapping from an axis-angle vector $\bftheta \in \mathbb{R}^3$ to a $3 \times 3$ skew-symmetric matrix $\bftheta_{\times} \in \mathfrak{so}(3)$ as well as from a vector $\bfxi \in \mathbb{R}^6$ to a $4 \times 4$ twist matrix:
\begin{equation}
\bfxi = \begin{bmatrix} \bfrho\\\bftheta \end{bmatrix} \in \bbR^6 \qquad \bfxi_\times := \begin{bmatrix} \bftheta_{\times} & \bfrho \\ \mathbf{0}^\top & 0 \end{bmatrix} \in \mathfrak{se}(3).
\end{equation}
We define an infinitesimal change of pose $\bfT \in SE(3)$ using a right perturbation $\bfT\exp\prl{\bfxi_\times} \in SE(3)$ (see \cite[Ch.7]{BarfootBook}).


Let $\underline{\bfx} = \begin{bmatrix}\bfx^\top & 1\end{bmatrix}^\top$ be the homogeneous coordinates of a vector $\bfx$. For $\bfx \in \mathbb{R}^3$, we define the operators $\underline{\bfx}^\odot$ and $\underline{\bfx}^\circledcirc$:
%
\begin{equation} \label{eq:odot}
\scaleMathLine[0.9]{\underline{\bfx}^\odot \triangleq \begin{bmatrix} \bfI_3 & -\bfx_\times\\ \mathbf{0}^\top & \mathbf{0}^\top \end{bmatrix} \in \mathbb{R}^{4 \times 6}, \;\; \underline{\bfx}^\circledcirc \triangleq \begin{bmatrix} \mathbf{0} & \bfx\\ -\bfx_\times & \mathbf{0} \end{bmatrix} \in \mathbb{R}^{6 \times 4},}
\end{equation}
where $\bfI_3$ is the $3 \times 3$ identity matrix. A \emph{quadric shape} \cite[Ch.3]{MVGBook} is a set $\crl{ \bfx \mid \underline{\bfx}^\top \bfQ \underline{\bfx} \leq 0}$, where $\bfQ$ is a symmetric matrix. Consider an axis-aligned ellipsoid centered at $\mathbf{0}$:
\begin{equation}
\mathcal{E}_{\bfu} \triangleq \crl{\bfx \mid \bfx^\top \bfU^{-\top}\bfU^{-1}\bfx \leq 1},
\end{equation}
where $\bfU \triangleq \diag(\bfu)$ and the elements of the vector $\bfu$ are the lengths of the semi-axes of $\mathcal{E}_{\bfu}$. In homogeneous coordinates, $\mathcal{E}_{\bfu}$ is a special case of a quadric shape $\crl{ \bfx \mid \underline{\bfx}^\top \bfQ_{\bfu} \underline{\bfx} \leq 0}$ with $\bfQ_{\bfu} \triangleq \mathbf{diag}(\bfU^{-2},-1)$. A quadric shape can also be defined in dual form, as the set of planes $\underline{\bfpi} = \bfQ \underline{\bfx}$ that are tangent to the shape surface at each $\bfx$. A \emph{dual quadric surface} is defined as $\crl{ \bfpi \mid \underline{\bfpi}^\top \bfQ^* \underline{\bfpi} = 0}$, where $\bfQ^* = \adj(\bfQ)$\footnote{If $\bfQ$ is invertible, $\bfQ^* = \adj(\bfQ) = \det(\bfQ)\bfQ^{-1}$ can be simplified to $\bfQ^* = \bfQ^{-1}$ due to the scale-invariance of the dual quadric surface definition.}. A dual quadric surface defined by $\bfQ^* \in \bbR^{4 \times 4}$ can be transformed by $\bfT \in SE(3)$ to another reference frame as $\bfT \bfQ^* \bfT^\top$. Similarly, it can be projected to a lower-dimensional space by a projection matrix $\bfP \triangleq \begin{bmatrix} \bfI & \mathbf{0}\end{bmatrix}$ as $\bfP \bfQ^* \bfP^\top$.


\begin{figure}[t]
  \centering
  \includegraphics[width=\linewidth]{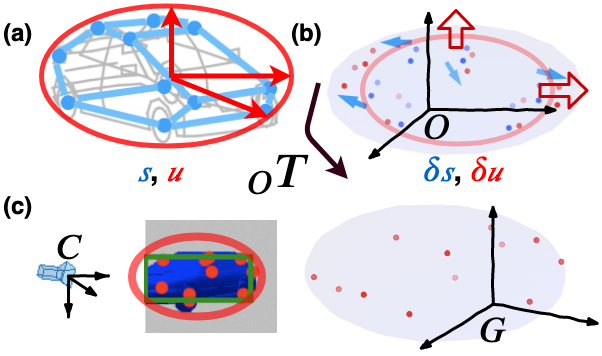}
  \caption{(a) An object class is defined by a semantic class $\sigma$ and average shape specified by semantic landmarks $\bfs$ (blue) and an ellipsoid with semi-axes lengths $\bfu$ (red). 
  (b) A specific instance has landmark and ellipsoid deformations parameterized by $\delta\bfs$ (blue arrows) and $\delta\bfu$ (red arrows). (c) The landmarks and ellipsoid are transformed from the object frame $\{O\}$ to the global frame $\{G\}$ via the instance pose ${}_O\bfT$.}
  \label{fig:tsdv}
\end{figure}

\begin{figure*}
\centering
\includegraphics[width=\linewidth]{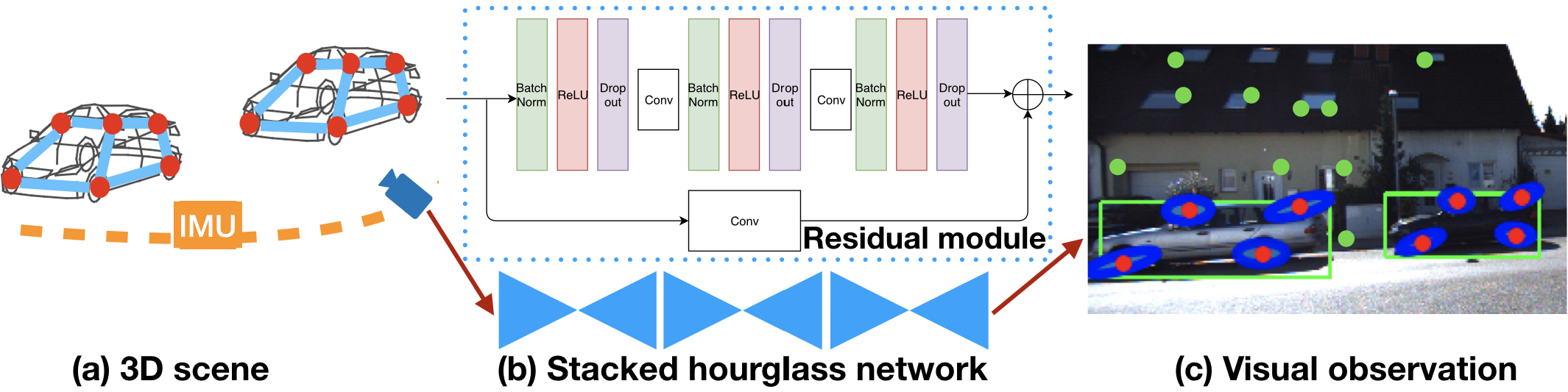}
\caption{OrcVIO utilizes visual-inertial information to optimize the sensor trajectory and the shapes and poses of objects. The observations include geometric keypoints (FAST keypoints indicated by green dots in (c)), semantic keypoints (car parts indicated by red dots in (c)), semantic keypoint covariances (blue ellipses in (c)), bounding-boxes (green boxes in (c)), and inertial data (orange dotted lines in (a)). The semantic keypoints and their covariances are obtained from a Bayesian stacked hourglass CNN in (b) composed of residual modules (blue dotted rectangle in (b)), including convolution, ReLU, batch normalization, and dropout layers. The dropout layers are used to sample different weight realizations to enable a test-time estimate of the semantic keypoint covariances.}
\label{fig:overview_all}
\end{figure*}

\section{Problem Formulation}
\label{sec:problem_formulation}

Consider a system equipped with an IMU-camera sensor. Let ${}_{I}\bfx_k \triangleq \prl{{}_{I}\bfR_k,\; {}_{I}\bfv_k,\; {}_{I}\bfp_k,\; \bfb_{g,k},\; \bfb_{a,k}}$ be the IMU state at time $t_k$, consisting of orientation ${}_{I}\bfR_k \in SO(3)$, velocity ${}_{I}\bfv_k \in \mathbb{R}^3$, position ${}_{I}\bfp_k \in \mathbb{R}^3$, gyroscope bias $\bfb_{g,k} \in \mathbb{R}^3$, and accelerometer bias $\bfb_{a,k} \in \mathbb{R}^3$. Assume that the camera is rigidly attached to the IMU with relative transformation ${}^I_C\bfT \in SE(3)$, known from extrinsic calibration. Given ${}_{I}\bfx_k$, the camera pose can be obtained as ${}_C\bfT_k = {}_I\bfT_k {}^I_C\bfT$, where ${}_I\bfT_k \in SE(3)$ is the IMU pose. To facilitate the use of multi-frame camera information, define an augmented state $\bfx_k \triangleq \prl{{}_{I}\bfx_k,\; {}_{I} \bfT_{k-1},\;\ldots,\; {}_{I} \bfT_{k-W}}$, containing a sliding window of $W$ past IMU poses in addition to the current IMU state ${}_{I}\bfx_k$. The system state over time is $\mathcal{X} \triangleq \crl{\bfx_k}_k$.

Suppose that the system evolves in an unknown environment, containing \emph{geometric landmarks} $\calL \triangleq \crl{\bfell_m}_m$ and \emph{objects} $\calO \triangleq \crl{\bfo_i}_i$, represented in a global frame $\{G\}$. A geometric landmark is a static point $\bfell_m \in \mathbb{R}^3$, detectable via image keypoint algorithms, such as FAST \cite{FAST}. An object $\bfo_i = (\bfc_i,\bfi_i)$ is an instance $\bfi_i$ of a semantic class $\bfc_i$, detectable via object recognition algorithms, such as YOLO \cite{redmon2018yolov3}. The precise definitions of object class and instance follow.

\begin{definition*}
An \emph{object class} is a tuple $\bfc \triangleq \prl{\sigma, \bfs, \bfu}$, where $\sigma \in \mathbb{N}$ specifies a semantic type (e.g., chair, table, monitor) and $\bfs \in \mathbb{R}^{3 \times N_s}$ and $\bfu \in \mathbb{R}^3$ specify the average class shape. The class shape is modeled by an axis-aligned ellipsoid $\calE_{\bfu}$ and a set of \emph{semantic landmarks} $\bfs_l \in \mathbb{R}^3$ corresponding to the columns of $\bfs$. The semantic landmarks $\bfs_l$ define the 3D positions of mid-level object parts\footnote{Category-level object parts, such as the front wheel of a car, may be detected by a stacked hourglass convolutional network \cite{zhou2018starmap}.} in the object class canonical frame $\{O\}$.
\end{definition*}


\begin{definition*}
An \emph{object instance} of class $\bfc$ is a tuple $\bfi \triangleq \prl{{}_O\bfT, \delta\bfs, \delta\bfu}$, where ${}_O\bfT \in SE(3)$ is the instance pose and $\delta\bfs \in \mathbb{R}^{3 \times N_s}$, $\delta\bfu \in \mathbb{R}^3$ are deformations of the average class-level semantic landmarks $\bfs$ and ellipsoid semi-axes lengths $\bfu$.
\end{definition*}

The shape of an object $\bfo_i$ in the global frame $\{G\}$ is obtained by deforming and transforming the semantic landmark positions, ${}_O\bfT\prl{\underline{\bfs}_l + \underline{\delta\bfs}_l}$, and the dual ellipsoid, ${}_O\bfT\bfQ_{(\bfu + \delta\bfu)}^*{}_O\bfT^\top$, using the instance pose ${}_O\bfT$ and deformations $\delta\bfs$, $\delta\bfu$. Fig.~\ref{fig:tsdv} shows an illustration for a car model with $12$ semantic landmarks.



The IMU-camera sensor provides inertial measurements $^i\bfz_k$, geometric keypoint measurements $^g\bfz_{k,n}$, and semantic measurements, containing object class ${}^{c}_{}\mathbf{z}_{k,j}$, bounding-box ${}^{b}\mathbf{z}_{k,l,j}$, and semantic keypoint ${}^{s}_{}\mathbf{z}_{k,l,j}$ detections, illustrated in Fig.~\ref{fig:overview_all}. The inertial measurements ${}^{i}\mathbf{z}_k \triangleq \prl{{}^{i}\bfomega_k,{}^{i}\bfa_k}\in \mathbb{R}^6$ are the IMU's body-frame angular velocity $^i\bfomega_k$ and linear acceleration $^i\bfa_k$ at time $t_k$. The geometric keypoint measurements are noisy detections ${}^{g}_{}\mathbf{z}_{k,n} \in \mathbb{R}^{2}$ in normalized pixel coordinates\footnote{Given pixel coordinates $\bfz \in \mathbb{R}^2$ and a camera intrinsic calibration matrix $\bfK \in \mathbb{R}^{3 \times 3}$, the \emph{normalized pixel coordinates} of $\bfz$ are $\bfP\bfK^{-1} \underline{\bfz} \in \bbR^2$.} of the image projections of the subset of geometric landmarks $\calL$ visible to the camera at time $t$. To obtain semantic observations, an object detection algorithm is applied to the image at time $t_k$, followed by semantic keypoint extraction within each detected bounding-box (see Sec.~\ref{sec:tracking} for details). The $j$-th object detection provides the object class ${}^{c}_{}\mathbf{z}_{k,j} \in \mathbb{N}$, bounding box ${}^{b}\mathbf{z}_{k,l,j} \in \mathbb{R}^{2}$, described by $l = 1,\ldots,4$ lines in normalized pixel coordinates, and semantic keypoints ${}^{s}_{}\mathbf{z}_{k,l,j} \in \mathbb{R}^{2}$ in normalized pixel coordinates associated with the $l = 1,\ldots,N_s$ semantic landmarks\footnote{The semantic landmark-keypoint correspondence is provided by the semantic keypoint detector. Some landmarks may not be detected due to occlusion but we do not make this explicit for simplicity.}.


Let $\indicator_{k,m,n} \in \{0,1\}$ indicate whether the $n$-th geometric keypoint observed at time $t_k$ is associated with the $m$-th geometric landmark. Similarly, let $\indicator_{k,i,j} \in \{0,1\}$ indicate whether the $j$-th object detection at time $t_k$ is associated with the $i$-th object instance. This data association information can be obtained by keypoint and object tracking as described in Sec.~\ref{sec:tracking}.
Given the associations, we introduce error functions:
\begin{equation*}
\begin{aligned}
{}^i\bfe_{k,k+1} &\triangleq {}^i\bfe\prl{\bfx_k,\bfx_{k+1},{}^{i}\mathbf{z}_k} \; &{}^g\bfe_{k,m,n} &\triangleq {}^g\bfe\prl{\bfx_k, \bfell_m, {}^{g}\mathbf{z}_{k,n}}\\
{}^s\bfe_{k,i,l,j} &\triangleq {}^s\bfe\prl{\bfx_k, \bfo_i, {}^{s}\mathbf{z}_{k,l,j}} \; & {}^b\bfe_{k,i,l,j} &\triangleq{}^b\bfe\prl{\bfx_k, \bfo_i, {}^{b}\mathbf{z}_{k,l,j}}
\end{aligned}
\end{equation*}
for the inertial, geometric, semantic keypoint and bounding-box measurements, respectively, defined precisely in Sec.~\ref{sec:reconstruction}. We also introduce an object shape regularization error term ${}^r\bfe\prl{\bfo_i}$ to ensure that the instance deformations $(\delta\bfs, \delta\bfu)$ remain small, and consider the following problem.

\begin{problem*}
Determine the sensor trajectory $\mathcal{X}^*$, geometric landmarks $\calL^*$, and object states $\mathcal{O}^*$ that minimize the weighted sum of squared errors:
\begin{align}
\label{eq:problem}
\min_{\calX, \calL, \calO} \; &{}^iw\sum_k \| {}^i\bfe_{k,k+1} \|_{{}^i\bfV}^2 +{}^gw \sum_{k,m,n} \indicator_{k,m,n}  \| {}^g\bfe_{k,m,n}\|_{{}^g\bfV}^2 \notag\\
+{}^sw& \sum_{k,i,l,j} \indicator_{k,i,j} \| {}^s\bfe_{k,i,l,j} \|_{{}^s\bfV}^2 +{}^bw \sum_{k,i,l,j} \indicator_{k,i,j}  \| {}^b\bfe_{k,i,l,j} \|_{{}^b\bfV}^2\notag\\
+{}^rw& \sum_{i} \|{}^r\bfe\prl{\bfo_i}\|_2^2
\end{align}
where ${}^*w$ are positive constants determining the relative importance of the error terms and ${}^*\bfV$ are positive-definite matrices specifying the covariances associated with the inertial, geometric, semantic, and bounding-box measurements. A measurement covariance $\bfV$ defines a quadratic (Mahalanobis) norm $\|\bfe\|_{\bfV}^2 \triangleq  \bfe^\top\bfV^{-1}\bfe$.
\end{problem*}


Inspired by the MSCKF algorithm \cite{msckf}, we decouple the optimization over $\calL$ and $\calO$ from that over $\calX$ to design an efficient real-time algorithm. When a geometric-keypoint or object track is lost, we perform multi-view iterative optimization over the corresponding geometric landmark $\ell_m$ or object $\bfo_i$ based on the estimate of the latest IMU-camera state $\bfx_t$. The IMU-camera state is propagated using the inertial observations and updated using the optimized geometric landmark and object states and the geometric and semantic observations. This decoupling leads to potentially lower accuracy but higher efficiency compared to window or batch keyframe optimization techniques \cite{semslam}. Our approach is among the first to offer tight coupling between semantic information and geometric structure in visual-inertial odometry. Error functions and Jacobians derived in Sec.~\ref{sec:reconstruction} can be used for batch keyframe optimization in factor-graph formulation of object SLAM \cite{quadric_slam}.


\section{Landmark and Object Reconstruction}
\label{sec:reconstruction}

Our approach consists for a front-end measurement generation stage and a back-end landmark state and sensor pose optimization stage. This section discusses the detection and tracking of geometric keypoint measurements $^g\bfz_{k,n}$ and object class ${}^{c}_{}\mathbf{z}_{k,j}$, bounding-box ${}^{b}\mathbf{z}_{k,l,j}$, and semantic keypoint ${}^{s}_{}\mathbf{z}_{k,l,j}$ measurements in the front-end. It also defines the error functions in \eqref{eq:problem} and their Jacobians needed for the back-end optimization. Finally, it presents the back-end optimization over the geometric landmarks $\calL$ and the object states $\calO$ for a given sensor sensor trajectory $\calX$.



\subsection{Keypoint and Object Tracking}
\label{sec:tracking}
Geometric keypoints ${}^g\bfz_{k,n}$ are detected in the camera images using the FAST detector \cite{FAST} and are tracked temporally using the Lucas-Kanade (LK) algorithm \cite{LK}. Keypoint-based tracking has lower accuracy but higher efficiency than descriptor-based methods, allowing our method to use a high frame-rate camera and process more keypoints. Outliers are eliminated by estimating the essential matrix between consecutive views and removing those keypoints that do not fit the estimated model. 
Assuming that the time between consecutive images is short, the relative orientation is obtained by integrating the gyroscope measurements ${}^i\bfomega_k$ as described in Sec.~\ref{sec:prediction_step} and only the unit translation vector is estimated using two-point RANSAC \cite{troiani20142}.

The YOLO detector \cite{bochkovskiy2020yolov4} is used to detect object classes ${}^{c}_{}\mathbf{z}_{k,j}$ and bounding-box lines ${}^{b}_{}\mathbf{z}_{k,l,j}$. Semantic keypoints ${}^{s}_{}\mathbf{z}_{k,l,j}$ are extracted within each bounding box using the StarMap stacked hourglass convolutional neural network~\cite{zhou2018starmap}. We augment the original StarMap network with dropout layers as shown in Fig.~\ref{fig:overview_all}(b). Several stochastic forward passes may be preformed using Monte Carlo dropout \cite{gal2015dropout} to obtain semantic keypoint covariances ${}^s\bfV$, illustrated in Fig.~\ref{fig:overview_all}(c). 

The bounding boxes ${}^{b}_{}\mathbf{z}_{k,l,j}$ are tracked temporally using the SORT algorithm~\cite{bewley2016simple}, which performs intersection over union (IoU) matching via the Hungarian algorithm. The semantic keypoints ${}^{s}_{}\mathbf{z}_{k,l,j}$ within each bounding box are tracked via a Kalman filter, which uses the LK algorithm for prediction and the StarMap keypoint detections for update.


\subsection{Landmark and Object Error Functions}
\label{sec:error_functions}

Next, we define the geometric-keypoint ${}^g\bfe_{k,m,n}$, semantic-keypoint ${}^s\bfe_{k,i,l,j}$, bounding-box ${}^b\bfe_{k,i,l,j}$, and regularization ${}^r\bfe(\bfo_i)$ error terms in \eqref{eq:problem} and derive their Jacobians. The error function arguments include the IMU, camera, and object poses, defined on the $SE(3)$ manifold, and, hence, particular care should be taken when obtaining the Jacobians. The error functions are linearized around estimates of the IMU-camera state $\hat{\bfx}_k$, geometric landmarks $\hat{\bfell}_m$, and objects $\hat{\bfo}_i$ using perturbations $\tilde{\bfx}_k$, $\tilde{\bfell}_m$, and $\tilde{\bfo}_i$:
\begin{equation}
\bfx_k = \hat{\bfx}_k \oplus \tilde{\bfx}_k, \qquad \bfell_m = \hat{\bfell}_m + \tilde{\bfell}_m, \qquad \bfo_i = \hat{\bfo}_i \oplus \tilde{\bfo}_i,
\end{equation}
where $\oplus$ emphasizes that some additions are over the $SE(3)$ manifold, defined as follows:
\begin{equation}
\label{eq:perturbations}
\scaleMathLine[0.91]{\begin{aligned}
{}_I\bfR &= {}_I\hat{\bfR} \exp\prl{{}_I\bftheta_\times} & {}_I\bfp&= {}_I\tilde{\bfp} + {}_I\hat{\bfp} & {}_I\bfv&= {}_I\tilde{\bfv} + {}_I\hat{\bfv}\\
{}_C\bfT &= {}_C\hat{\bfT} \exp\prl{{}_C\bfxi_\times} & \bfb_g &= \tilde{\bfb}_g + \hat{\bfb}_g & \bfb_a &= \tilde{\bfb}_a + \hat{\bfb}_a \\
{}_O\bfT &= {}_O\hat{\bfT} \exp\prl{{}_O\bfxi_\times} &\delta\bfs &= \delta\tilde{\bfs} + \delta\hat{\bfs} &\delta\bfu &= \delta\tilde{\bfu} + \delta\hat{\bfu},
\end{aligned}}
\end{equation}
where we use right perturbations ${}_I\bftheta_\times \in \mathfrak{so}(3)$, ${}_C\bfxi_\times \in \mathfrak{se}(3)$, and ${}_O\bfxi \in \mathfrak{se}(3)$ for the IMU orientation ${}_I\hat{\bfR}$, camera pose ${}_C\bfT$, and object pose ${}_O\hat{\bfT}$, respectively.




We define the geometric-keypoint error as the difference between the image projection of a geometric landmark $\bfell$ in camera frame ${}_{C}\bfT = \!{}_{I}\bfT\, {}^{I}_{C}\bfT$ and its associated keypoint observation ${}^{g}\mathbf{z}$:
\begin{equation}
\label{eq:gk-error}
{}^g\bfe\prl{\bfx, \bfell, {}^{g}\mathbf{z}} \triangleq \bfP \pi\prl{{}_C\bfT^{-1} \underline{\boldsymbol{\ell}}} - {}^{g}\mathbf{z},
\end{equation}
where $\bfP = \begin{bmatrix} \bfI_2 & \mathbf{0} \end{bmatrix} \in \mathbb{R}^{2 \times 4}$ is a projection matrix and $\pi(\underline{\bfs}) \triangleq \frac{1}{\bfs_3} \underline{\bfs} \in \mathbb{R}^4$ is the perspective projection function.

\begin{proposition}
\label{prop:gk-jacobians}
The Jacobians of ${}^g\bfe$ in \eqref{eq:gk-error} with respect to the IMU pose perturbation ${}_{I}\bfxi = [{}^{}_{I}{\bftheta}^\top\ {}^{}_{I}\tilde{\bfp}^\top]^\top$ and the landmark position perturbation $\tilde{\bfell}$, evaluated at estimates $\hat{\bfx}$, $\hat{\bfell}$, are:
\begin{equation}
\label{eq:gk-jacobians}
\begin{aligned}
\frac{\partial{}^g\bfe}{\partial{}_{I}\bfxi} &= -\bfP \frac{d\pi}{d\underline{\bfs}}\prl{{}_C\hat{\bfT}^{-1} \underline{\hat{\bfell}}} \brl{{}_C\hat{\bfT}^{-1} \underline{\hat{\bfell}}}^{\odot}
\frac{\partial {}_{C}\bfxi}{\partial {}_{I}\bfxi}
\in \mathbb{R}^{2 \times 6},\\
\frac{\partial{}^g\bfe}{\partial\tilde{\bfell}} &= \bfP \frac{d\pi}{d\underline{\bfs}}\prl{{}_C\hat{\bfT}^{-1} \underline{\hat{\bfell}}} {}_C\hat{\bfT}^{-1}\begin{bmatrix} \bfI_{3} \\ \mathbf{0}^\top \end{bmatrix} \;\;\in \mathbb{R}^{2 \times 3}, 
\end{aligned}
\end{equation}
where $[\cdot]^{\odot}$ is defined in \eqref{eq:odot}, $\frac{d\pi}{d\underline{\bfs}}(\underline{\bfs})$ is the Jacobian of $\pi(\underline{\bfs})$ and:
\begin{equation} \label{eq:dcxi-dixi}
\frac{\partial {}_{C}\bfxi}{\partial {}_{I}\bfxi} = 
\begin{bmatrix}
    - {}^{I}_{C}\bfR^\top {}^{I}_{C}\bfp_\times   
    & 
    {}^{}_{C}\hat{\bfR}^\top
    \\
    {}^{I}_{C}\bfR^\top
    & \mathbf{0}
\end{bmatrix} \in \bbR^{6 \times 6}
\end{equation}
The Jacobians with respect to other perturbations in \eqref{eq:perturbations} are $\mathbf{0}$. 
\end{proposition}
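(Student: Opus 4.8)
The plan is to treat ${}^g\bfe$ as a composition and differentiate via the chain rule through the intermediate quantity $\underline{\bfs} \triangleq {}_C\bfT^{-1}\underline{\bfell}$. Writing ${}^g\bfe = \bfP\,\pi(\underline{\bfs}) - {}^g\bfz$ gives $\frac{\partial{}^g\bfe}{\partial(\cdot)} = \bfP\,\frac{d\pi}{d\underline{\bfs}}(\underline{\bfs})\,\frac{\partial\underline{\bfs}}{\partial(\cdot)}$ for every perturbation variable. Since the outer factor $\bfP\,\frac{d\pi}{d\underline{\bfs}}$ is shared by both Jacobians and is evaluated at $\underline{\bfs} = {}_C\hat{\bfT}^{-1}\underline{\hat{\bfell}}$, the whole task reduces to computing the two inner first-order derivatives $\partial\underline{\bfs}/\partial\tilde{\bfell}$ and $\partial\underline{\bfs}/\partial{}_I\bfxi$.

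The landmark derivative is immediate: since $\bfell$ enters additively, $\underline{\bfell} = \underline{\hat{\bfell}} + [\tilde{\bfell}^\top\; 0]^\top$, so $\frac{\partial\underline{\bfs}}{\partial\tilde{\bfell}} = {}_C\hat{\bfT}^{-1}\begin{bmatrix}\bfI_3\\ \mathbf{0}^\top\end{bmatrix}$, which yields the second line of \eqref{eq:gk-jacobians}. For the pose I would first differentiate with respect to the camera twist ${}_C\bfxi$ using the right perturbation ${}_C\bfT = {}_C\hat{\bfT}\exp({}_C\bfxi_\times)$, so that ${}_C\bfT^{-1} = \exp(-{}_C\bfxi_\times){}_C\hat{\bfT}^{-1}$ and a first-order expansion gives $\underline{\bfs}\approx{}_C\hat{\bfT}^{-1}\underline{\hat{\bfell}} - {}_C\bfxi_\times\,{}_C\hat{\bfT}^{-1}\underline{\hat{\bfell}}$. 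The key algebraic step is the identity $\bfxi_\times\underline{\bfy} = \underline{\bfy}^\odot\bfxi$, which follows directly from the definitions of $(\cdot)_\times$ on $\mathfrak{se}(3)$ and of $(\cdot)^\odot$ in \eqref{eq:odot} (both sides equal $[(\bfrho+\bftheta_\times\bfy)^\top\; 0]^\top$). Applying it with $\underline{\bfy} = {}_C\hat{\bfT}^{-1}\underline{\hat{\bfell}}$ gives $\frac{\partial\underline{\bfs}}{\partial{}_C\bfxi} = -[{}_C\hat{\bfT}^{-1}\underline{\hat{\bfell}}]^\odot$, recovering the outer structure of the first line of \eqref{eq:gk-jacobians}.

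It remains to convert the camera-twist derivative into an IMU-pose derivative through $\frac{\partial{}^g\bfe}{\partial{}_I\bfxi} = \frac{\partial{}^g\bfe}{\partial{}_C\bfxi}\,\frac{\partial{}_C\bfxi}{\partial{}_I\bfxi}$, and this kinematic Jacobian is where the real work lies. Starting from ${}_C\bfT = {}_I\bfT\,{}^I_C\bfT$, I would substitute the decoupled IMU perturbation from \eqref{eq:perturbations} (multiplicative ${}_I\bfR = {}_I\hat{\bfR}\exp({}_I\bftheta_\times)$ on rotation, additive ${}_I\bfp = {}_I\hat{\bfp} + {}_I\tilde{\bfp}$ on position) on the right and the genuine $SE(3)$ right perturbation ${}_C\bfT = {}_C\hat{\bfT}\exp({}_C\bfxi_\times)$ on the left, expand both to first order, and match the rotation and translation blocks separately. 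The rotation block, using $\bfR^\top\bfa_\times\bfR = (\bfR^\top\bfa)_\times$, yields ${}_C\bftheta = {}^I_C\bfR^\top{}_I\bftheta$; the translation block, after using $\bftheta_\times\bfp = -\bfp_\times\bftheta$ and ${}_C\hat{\bfR}^\top{}_I\hat{\bfR} = {}^I_C\bfR^\top$, yields ${}_C\bfrho = -{}^I_C\bfR^\top{}^I_C\bfp_\times{}_I\bftheta + {}_C\hat{\bfR}^\top{}_I\tilde{\bfp}$. Stacking these produces exactly \eqref{eq:dcxi-dixi}. The main obstacle is precisely this mismatch of conventions — the IMU uses a split multiplicative/additive parametrization while the camera uses a full $SE(3)$ right perturbation — so the careful bookkeeping in matching first-order terms, rather than any deep idea, is the crux. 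Finally, since ${}^g\bfe$ depends on the state only through ${}_C\bfT$ (equivalently ${}_I\bfR$ and ${}_I\bfp$) and on $\bfell$, it is independent of ${}_I\bfv$, the biases $\bfb_g,\bfb_a$, and every object state, so all remaining Jacobians with respect to the perturbations in \eqref{eq:perturbations} vanish.
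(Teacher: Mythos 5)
Your proposal is correct and follows essentially the same route as the paper's proof: the same first-order expansion of ${}_C\bfT^{-1}\underline{\bfell}$ under the right perturbation, the same identity $\bfxi_\times\underline{\bfy}=\underline{\bfy}^\odot\bfxi$ (which the paper cites from \cite[Ch.7]{BarfootBook} and you verify directly), and the same block-matching derivation of $\partial{}_C\bfxi/\partial{}_I\bfxi$ from ${}_C\bfT={}_I\bfT\,{}^I_C\bfT$ with the split multiplicative/additive IMU parametrization. The only difference is order of presentation (you do the projection chain rule before the kinematic Jacobian; the paper does the reverse), which is immaterial.
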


\begin{proof} 
See Sec. \ref{sec:prop1_proof}.
\end{proof}


The semantic-keypoint error is defined as the difference between the projection of a semantic landmark $\bfs_l+\delta\bfs_l$ from the object frame to the image plane, using instance pose ${}_{O}\bfT$ and camera pose ${}_{C}\bfT$, and its corresponding semantic-keypoint observation ${}^{s}\bfz$:
\begin{equation}
\label{eq:sk-error}
{}^s\bfe(\bfx, \bfo, {}^{s}\mathbf{z}) \triangleq \bfP \pi\prl{{}_C\bfT^{-1} {}_O\bfT\prl{\underline{\bfs}_l + \delta\underline{\bfs}_l} } - {}^{s}\mathbf{z}. 
\end{equation}
%

\begin{proposition}
\label{prop:sk-jacobians}
The Jacobians of ${}^s\bfe$ in \eqref{eq:sk-error} with respect to perturbations ${}_{I}\bfxi$, ${}_{O}\bfxi$, $\delta\tilde{\bfs}_l$, evaluated at estimates $\hat{\bfx}$, $\hat{\bfo}$, are:
\begin{gather}
\scaleMathLine{\begin{aligned}
\frac{\partial{}^s\bfe}{\partial{}_{I}\bfxi} &=
\frac{\partial{}^s\bfe}{\partial{}_{C}\bfxi} \frac{\partial {}_{C}\bfxi}{\partial {}_{I}\bfxi} \in \mathbb{R}^{2 \times 6},\\
\frac{\partial{}^s\bfe}{\partial{}_{C}\bfxi} &= -\bfP \frac{d\pi}{d\underline{\bfs}}\prl{{}_C\hat{\bfT}^{-1} {}_O\hat{\bfT}\prl{\underline{\bfs}_l + \delta\underline{\hat{\bfs}}_l} } \brl{{}_C\hat{\bfT}^{-1} {}_O\hat{\bfT}\prl{\underline{\bfs}_l + \delta\underline{\hat{\bfs}}_l}}^{\odot},\\ 
\frac{\partial{}^s\bfe}{\partial{}_{O}\bfxi} &= \bfP \frac{d\pi}{d\underline{\bfs}}\prl{{}_C\hat{\bfT}^{-1} {}_O\hat{\bfT}\prl{\underline{\bfs}_l + \delta\underline{\hat{\bfs}}_l} } {}_C\hat{\bfT}^{-1} {}_O\hat{\bfT} \brl{\prl{\underline{\bfs}_l + \delta\underline{\hat{\bfs}}_l}}^{\odot},\\
\frac{\partial{}^s\bfe}{\partial\delta\tilde{\bfs}_l} &= \bfP \frac{d\pi}{d\underline{\bfs}}\prl{{}_C\hat{\bfT}^{-1} {}_O\hat{\bfT}\prl{\underline{\bfs}_l + \delta\underline{\hat{\bfs}}_l} } {}_C\hat{\bfT}^{-1} {}_O\hat{\bfT} \begin{bmatrix} \bfI_{3} \\ \mathbf{0}^\top \end{bmatrix} \in \mathbb{R}^{2 \times 3}.
\end{aligned}}
\raisetag{20ex}
\end{gather}
The Jacobians with respect to other perturbations in \eqref{eq:perturbations} are $\mathbf{0}$. 
\end{proposition}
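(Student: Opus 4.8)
The plan is to treat the error as a composition of the fixed perspective projection $\bfP\pi(\cdot)$ with the single homogeneous point
\[
\underline{\bfy} \triangleq {}_C\bfT^{-1}\,{}_O\bfT\prl{\underline{\bfs}_l + \delta\underline{\bfs}_l},
\]
so that every requested Jacobian factors by the chain rule as $\bfP\,\frac{d\pi}{d\underline{\bfs}}\prl{\hat{\underline{\bfy}}}\,\frac{\partial \underline{\bfy}}{\partial(\cdot)}$, where $\hat{\underline{\bfy}} = {}_C\hat{\bfT}^{-1}\,{}_O\hat{\bfT}\prl{\underline{\bfs}_l + \delta\underline{\hat{\bfs}}_l}$ is the point evaluated at the current estimates. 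Since the outer factor $\bfP\,\frac{d\pi}{d\underline{\bfs}}(\hat{\underline{\bfy}})$ is common to all three Jacobians, the only quantity left to compute is the derivative of $\underline{\bfy}$ with respect to each perturbation. This mirrors the proof of Proposition~\ref{prop:gk-jacobians}: there the transformed point was ${}_C\bfT^{-1}\underline{\bfell}$ with $\underline{\bfell}$ a constant global landmark, whereas here the effective global landmark ${}_O\bfT\prl{\underline{\bfs}_l + \delta\underline{\bfs}_l}$ itself depends on the object pose and the deformation.

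The one computational ingredient is the first-order expansion of a right $SE(3)$ perturbation acting on a homogeneous point. From $\exp\prl{\bfxi_\times} = \bfI + \bfxi_\times + O(\|\bfxi\|^2)$ and the identity $\bftheta_\times \bfw = -\bfw_\times\bftheta$, one reads off $\bfxi_\times \underline{\bfw} = \underline{\bfw}^\odot\bfxi$ with $\underline{\bfw}^\odot$ exactly the operator in \eqref{eq:odot}, hence $\exp\prl{\bfxi_\times}\underline{\bfw} = \underline{\bfw} + \underline{\bfw}^\odot\bfxi + O(\|\bfxi\|^2)$. Applying this three times gives the three Jacobians. For the object pose I substitute ${}_O\bfT = {}_O\hat{\bfT}\exp\prl{{}_O\bfxi_\times}$ and pull the constant factor ${}_C\hat{\bfT}^{-1}{}_O\hat{\bfT}$ out to the left, producing the block $\brl{\underline{\bfs}_l + \delta\underline{\hat{\bfs}}_l}^\odot$. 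For the camera pose I use that a right perturbation inverts as $\prl{{}_C\hat{\bfT}\exp\prl{{}_C\bfxi_\times}}^{-1} = \exp\prl{-{}_C\bfxi_\times}\,{}_C\hat{\bfT}^{-1}$, so the expansion acts on the left of $\hat{\underline{\bfy}}$ and carries a minus sign, giving $-\brl{\hat{\underline{\bfy}}}^\odot$. For the deformation, $\delta\bfs_l$ enters only the Euclidean block of $\underline{\bfs}_l + \delta\underline{\bfs}_l$ and leaves the homogeneous coordinate fixed, so $\partial\prl{\underline{\bfs}_l + \delta\underline{\bfs}_l}/\partial\delta\tilde{\bfs}_l = \begin{bmatrix} \bfI_3 \\ \mathbf{0}^\top \end{bmatrix}$, which simply propagates through the constant ${}_C\hat{\bfT}^{-1}{}_O\hat{\bfT}$.

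Finally, the IMU-pose Jacobian follows by composing $\partial{}^s\bfe/\partial{}_C\bfxi$ with the already-established kinematic map $\partial{}_C\bfxi/\partial{}_I\bfxi$ in \eqref{eq:dcxi-dixi}, since the camera pose depends on the IMU state only through ${}_C\bfT = {}_I\bfT\,{}^I_C\bfT$. The remaining claim that all other Jacobians vanish is immediate: ${}^s\bfe$ depends on the state only through ${}_C\bfT$ (hence only the IMU orientation and position) and on the object only through ${}_O\bfT$ and the single deformation $\delta\bfs_l$, so it is constant in the IMU velocity, both biases, the ellipsoid deformation $\delta\bfu$, and every other landmark deformation. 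I expect the only delicate point to be the bookkeeping of sign and side of action in the camera term — that the inverse turns the right perturbation into a left-acting $\exp\prl{-{}_C\bfxi_\times}$ — while the object and deformation derivatives are routine once the $\odot$ expansion is in hand.
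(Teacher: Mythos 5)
Your proposal is correct and follows essentially the same route as the paper: chain rule through the projected homogeneous point, first-order expansion of the right $SE(3)$ perturbations via the identity $\bfxi_\times\underline{\bfw} = \underline{\bfw}^\odot\bfxi$ (the paper cites this as (7.159) in \cite[Ch.7]{BarfootBook}), the sign flip from inverting the perturbed camera pose, and composition with $\partial{}_C\bfxi/\partial{}_I\bfxi$ for the IMU term. The only cosmetic difference is that you factor out the common outer Jacobian $\bfP\,\frac{d\pi}{d\underline{\bfs}}(\hat{\underline{\bfy}})$ up front, whereas the paper writes the chain rule separately for each perturbation and defers the IMU case to the proof of Proposition~\ref{prop:gk-jacobians}.
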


\begin{proof} 
See Sec. \ref{sec:prop2_proof}.
\end{proof}


We define the bounding-box error as the distance between the hyperplane $\underline{\bfb} = \brl{\bfb^\top\ -b_h}^\top \triangleq {}_O\bfT^\top  {}_C\bfT^{-\!\top} \bfP^\top {}^b\underline{\mathbf{z}}$ induced by projecting the bounding-box line ${}^b\mathbf{z}$ to the object frame and the closest hyperplane that is tangent to the quadric surface $\bfQ_{(\bfu+\delta\bfu)}^*$ of object $\bfo$:
\begin{equation} \label{eq:bb-error}
\scaleMathLine[0.89]{{}^b\bfe(\bfx, \bfo, {}^b\mathbf{z}) \triangleq  \frac{1}{\|\bfb\|} \!\prl{\!\sgn(b_h)\sqrt{\bfb^\top\! \diag(\bfu\!+\!\delta \bfu)^{2} \bfb  } - b_h \!}\!}
\end{equation}
where $\sgn(x) = \frac{\partial |x|}{\partial x}$ is the signum function.


\begin{proposition}
\label{prop:bb-jacobians}
The Jacobians of ${}^b\bfe$ in \eqref{eq:bb-error} with respect to perturbations ${}_{I}\bfxi$, ${}_{O}\bfxi$, $\delta\tilde{\bfu}$, evaluated at estimates $\hat{\bfx}$, $\hat{\bfo}$, are:
\begin{equation}
\begin{aligned}
\frac{\partial{}^b\bfe}{\partial {}_{I}\bfxi} &= 
\frac{\partial{}^b\bfe}{\partial \ubfb}
\frac{\partial \ubfb}{\partial {}_{C}\bfxi}
\frac{\partial {}_{C}\bfxi}{\partial {}_{I}\bfxi}
\in \bbR^{1 \times 6},
\\ 
\frac{\partial{}^b\bfe}{\partial{}_{O}\bfxi} &=
\frac{\partial{}^b\bfe}{\partial \ubfb}
\frac{\partial \ubfb}{\partial {}_{O}\bfxi}
\in \bbR^{1 \times 6},
\\ 
\frac{\partial{}^b\bfe}{\partial\delta\tilde{\bfu}} &= 
\frac{
\sgn(\hat{b}_h)(\bfu+\delta\hat{\bfu})^\top \diag(\hat{\bfb})^2 
}{ \|\hat{\bfb}\|\sqrt{\hat{\bfb}^\top \diag(\bfu + \delta \hat{\bfu})^2 \hat{\bfb}}}
\in \mathbb{R}^{1 \times 3},
\end{aligned}
\end{equation}
%
where $\underline{\hat{\bfb}} = {}_O\hat{\bfT}^\top  {}_C\hat{\bfT}^{-\!\top} \bfP^\top {}^b\underline{\mathbf{z}}$ and with $\hat{\bfU} \triangleq \diag(\bfu + \delta \hat{\bfu})$:
\begin{align}
\frac{\partial {}^b\bfe}{\partial \ubfb}
&= 
\frac{\sgn(\hat{b}_h)\hat{\bfb}^\top\hat{\bfU}^2 }{\|\hat{\bfb}\| \sqrt{\hat{\bfb}^\top \hat{\bfU}^2 \hat{\bfb}}}
\begin{bmatrix}
\left(
 \bfI_{3}
- \frac{\hat{\bfb}\hat{\bfb}^\top}{\|\hat{\bfb}\|^2}
\right)
&
\mathbf{0}
\end{bmatrix}
+
\frac{\begin{bmatrix}\hat{b}_h\hat{\bfb}^\top &  \|\hat{\bfb}\|^2\end{bmatrix} }{\|\hat{\bfb}\|^3},
\notag\\
\frac{\partial \underline{\bfb}}{\partial {}_C\bfxi}
&= 
-{}_O\hat{\bfT}^\top {}_C\hat{\bfT}^{-\top} \brl{\bfP^\top {}^b\underline{\mathbf{z}}}^{\circledcirc \top} 
\in \mathbb{R}^{4 \times 6},
\\ 
\frac{\partial \underline{\bfb}}{\partial {}_O\bfxi}
&=
\brl{{}_O\hat{\bfT}^\top {}_C\hat{\bfT}^{-\top} \bfP^\top {}^b\underline{\mathbf{z}}}^{\circledcirc \top} 
\in \mathbb{R}^{4 \times 6}.\notag
\end{align}
The Jacobians with respect to other perturbations in~\eqref{eq:perturbations} are $\mathbf{0}$. 
\end{proposition}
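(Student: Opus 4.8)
The plan is to exploit the fact that the bounding-box error in \eqref{eq:bb-error} depends on the IMU and object poses \emph{only} through the induced hyperplane $\ubfb = {}_O\bfT^\top {}_C\bfT^{-\top}\bfP^\top {}^b\underline{\mathbf{z}}$, and on the shape only through $\delta\bfu$ directly. This immediately gives the chain-rule structure in the statement: viewing ${}^b\bfe$ as a scalar function of $\ubfb$ (and $\delta\bfu$), the pose Jacobians factor as $\frac{\partial {}^b\bfe}{\partial\ubfb}\frac{\partial\ubfb}{\partial{}_{\bullet}\bfxi}$, and since $\ubfb$ sees the IMU perturbation ${}_I\bfxi$ only through the camera pose ${}_C\bfT={}_I\bfT\,{}^I_C\bfT$, a further factor $\frac{\partial{}_C\bfxi}{\partial{}_I\bfxi}$ from \eqref{eq:dcxi-dixi} of Proposition~\ref{prop:gk-jacobians} appears. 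All Jacobians with respect to velocity, biases, $\delta\bfs$, and the landmark states vanish because ${}^b\bfe$ does not depend on them.

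First I would compute $\frac{\partial {}^b\bfe}{\partial\ubfb}$ by plain scalar calculus. Setting $f=\sqrt{\hat{\bfb}^\top\hat{\bfU}^2\hat{\bfb}}$ and differentiating ${}^b\bfe = \|\bfb\|^{-1}(\sgn(b_h)f - b_h)$ with respect to $\bfb$ via the product and quotient rules gives the first three columns, while the fourth column follows by recalling that the last entry of $\ubfb$ is $-b_h$ (hence a sign flip) and treating $\sgn(b_h)$ as locally constant so that $\partial_{b_h}\sgn(b_h)=0$, leaving $1/\|\hat{\bfb}\|$. The compact form stated in the proposition is then obtained by substituting $\hat{\bfb}^\top\hat{\bfU}^2\hat{\bfb} = f^2$ to collapse the $\hat{\bfU}^2\hat{\bfb}\hat{\bfb}^\top$ term, and by regrouping the pieces into the projector $\bfI_3-\hat{\bfb}\hat{\bfb}^\top/\|\hat{\bfb}\|^2$ and the residual $[\hb\hat{\bfb}^\top\ \|\hat{\bfb}\|^2]/\|\hat{\bfb}\|^3$.

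Next I would linearize $\ubfb$ in the right perturbations. Substituting ${}_O\bfT={}_O\hat{\bfT}\exp({}_O\bfxi_\times)$ and ${}_C\bfT^{-\top}\approx {}_C\hat{\bfT}^{-\top}(\bfI-{}_C\bfxi_\times^\top)$ and keeping first-order terms yields the increments ${}_O\bfxi_\times^\top\underline{\hat{\bfb}}$ and $-{}_O\hat{\bfT}^\top{}_C\hat{\bfT}^{-\top}{}_C\bfxi_\times^\top(\bfP^\top{}^b\underline{\mathbf{z}})$. The key algebraic identity I would establish is that for any twist $\bfxi$ and any $4$-vector $\underline{\bfp}$ one has $\bfxi_\times^\top\underline{\bfp}=(\underline{\bfp}^\circledcirc)^\top\bfxi$, with $\circledcirc$ from \eqref{eq:odot}; a direct block computation shows the fourth entry of $\underline{\bfp}$ drops out (the relevant column of $\bfxi_\times^\top$ is zero), so only the leading three components enter, consistent with the definition of $\circledcirc$. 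Applying this identity to $\underline{\hat{\bfb}}$ and to $\bfP^\top{}^b\underline{\mathbf{z}}$ produces $\frac{\partial\ubfb}{\partial{}_O\bfxi}$ and $\frac{\partial\ubfb}{\partial{}_C\bfxi}$ exactly as stated. Finally, $\frac{\partial {}^b\bfe}{\partial\delta\tilde{\bfu}}$ follows from elementwise differentiation of $\sqrt{\bfb^\top\diag(\bfu+\delta\bfu)^2\bfb}=\sqrt{\sum_i b_i^2(u_i+\delta u_i)^2}$, whose gradient is $(\bfu+\delta\bfu)^\top\diag(\bfb)^2/f$, rescaled by $\sgn(b_h)/\|\bfb\|$ and evaluated at the estimates.

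I expect the main obstacle to be the $\frac{\partial {}^b\bfe}{\partial\ubfb}$ calculation: the bookkeeping of the $-b_h$ sign convention in the fourth coordinate, the nonsmoothness of $\sgn$ at $b_h=0$ (which I handle by assuming $b_h\neq 0$ and treating $\sgn$ as locally constant), and the nontrivial regrouping needed to match the stated projector-plus-residual form. The pose-increment steps are comparatively routine once the transpose-twist $\circledcirc$ identity is in hand, and the shape derivative is immediate.
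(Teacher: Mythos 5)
Your proposal is correct and follows essentially the same route as the paper's proof: a chain-rule factorization through $\ubfb$, direct differentiation of the signed-distance expression with respect to $\ubfb$ (rewriting $b_h=-\underline{\mathbf{0}}^\top\ubfb$ and treating $\sgn$ as locally constant), first-order right-perturbation expansion of $\ubfb$ combined with the identity $\bfxi_\times^\top\underline{\bfp}=(\underline{\bfp}^{\circledcirc})^\top\bfxi$, and elementwise differentiation for $\delta\tilde{\bfu}$. The only difference is organizational: the paper isolates the $\partial{}^b\bfe/\partial\ubfb$ computation as a standalone lemma and precedes it with a lemma interpreting \eqref{eq:bb-error} as a tangent-hyperplane distance, neither of which adds content beyond what you supply.
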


\begin{proof}
See Sec. \ref{sec:prop3_proof}.
\end{proof}

Finally, the object shape regularization error is defined as:
\begin{equation}
{}^r\bfe\prl{\bfo} \triangleq \begin{bmatrix} \delta\bfu^\top & \frac{1}{\sqrt{N_s}} \delta\bfs_1^\top & \cdots & \frac{1}{\sqrt{N_s}} \delta\bfs_{N_s}^\top \end{bmatrix}^\top, 
\end{equation}
whose Jacobians with respect to the perturbations $\delta\tilde{\bfu}$, $\delta\tilde{\bfs}_l$ are:
\begin{equation*}
\begin{aligned}
\frac{\partial {}^r\bfe}{\partial \delta\tilde{\bfu}} 
&=
\begin{bmatrix}
\bfI_3 & \mathbf{0}_{3N_s}  
\end{bmatrix}^\top
&\frac{\partial {}^r\bfe}{\partial \delta\tilde{\bfs}_l} 
&=
\begin{bmatrix}
\mathbf{0}_{3l} & \frac{1}{\sqrt{N_s}}\bfI_3 & \mathbf{0}_{3(N_s-l)}  
\end{bmatrix}^\top.
\end{aligned}
\end{equation*}


\subsection{Landmark and Object State Optimization}
We temporarily assume that the sensor trajectory $\calX$ is known. Given $\calX$, the optimization over $\calL$ and $\calO$ decouples into individual geometric landmark and object optimization problems. The error terms in the decoupled problems can be linearized around initial estimates $\hat{\bfell}_m$ and $\hat{\bfo}_i$, using the Jacobians in Propositions \ref{prop:gk-jacobians}, \ref{prop:sk-jacobians}, and \ref{prop:bb-jacobians}, leading to:
\begin{align}
\min_{\tilde{\bfell}_m} \;&{}^gw \sum_{k,n} \indicator_{k,m,n} \bigl\| {}^g\hat{\bfe}_{k,m,n} + \frac{\partial {}^g\hat{\bfe}_{k,m,n}}{\partial \tilde{\bfell}_m}\tilde{\bfell}_m \bigr\|_{{}^g\bfV}^2 \notag\\
\min_{\tilde{\bfo}_i} \; &{}^sw \sum_{k,l,j} \indicator_{k,i,j} \bigl\| {}^s\hat{\bfe}_{k,i,l,j} + \frac{\partial {}^s\hat{\bfe}_{k,i,l,j}}{\partial \tilde{\bfo}_i}\tilde{\bfo}_i \bigr\|_{{}^s\bfV}^2 \;+ \label{eq:object-lm}\\
\!{}^bw &\scaleMathLine[0.95]{\!\sum_{k,l,j} \!\indicator_{k,i,j} \bigl\| {}^b\hat{\bfe}_{k,i,l,j} \!+\! \frac{\partial {}^b\hat{\bfe}_{k,i,l,j}}{\partial \tilde{\bfo}_i}\tilde{\bfo}_i \bigr\|_{{}^b\bfV}^2 \!+ \!{}^rw \bigl\|{}^r\bfe(\hat{\bfo}_i) \!+\! \frac{\partial {}^r\bfe(\hat{\bfo}_i)}{\partial \tilde{\bfo}_i}\tilde{\bfo}_i \bigr\|^2}\notag
\end{align}
%
where $\frac{\partial {}^s\hat{\bfe}}{\partial \tilde{\bfo}} = \brl{\frac{\partial{}^s\bfe}{\partial{}_{O}\bfxi}\ \bf0\ \frac{\partial{}^s\bfe}{\partial\delta\tilde{\bfs}}}$ and $\frac{\partial {}^b\hat{\bfe}}{\partial \tilde{\bfo}} = \brl{\frac{\partial{}^b\bfe}{\partial{}_{O}\bfxi}\ \frac{\partial{}^b\bfe}{\partial\delta\tilde{\bfu}}\ \bf0}$. These unconstrained quadratic programs in $\tilde{\bfell}_m$ and $\tilde{\bfo}_i$ can be solved iteratively via the Levenberg-Marquardt algorithm \cite[Ch.4]{BarfootBook}, updating $\hat{\bfell}_m \gets \tilde{\bfell}_m + \hat{\bfell}_m$ and $\hat{\bfo}_i \gets \tilde{\bfo}_i \oplus \hat{\bfo}_i$ until convergence to a local minimum.



The geometric landmarks $\hat{\bfell}_m$ are initialized by solving ${}^g\hat{\bfe}_{k,m,n} = \mathbf{0}$ via the linear system of equations:
\begin{equation}
\bfP {}_C\hat{\bfT}_k^{-1} \underline{\hat{\bfell}}_m - \lambda_{k,n}{}^{g}\mathbf{z}_{k,n} = \mathbf{0}
\end{equation}
for all $k,m,n$ such that $\indicator_{k,m,n} = 1$, where the unknowns are $\hat{\bfell}_m$ and the keypoint depths $\lambda_{k,n}$. The deformations of an object instance $\hat{\bfo}_i$ are initialized as $\delta\hat{\bfs} = \mathbf{0}$ and $\delta\hat{\bfu} = \mathbf{0}$. The instance pose is determined from the system of equations consisting of semantic keypoint and bounding-box line residuals:
%
\begin{equation} \label{eq:initialization}
\begin{aligned}
\bfP{}_C\hat{\bfT}_k^{-1} {}_O\hat{\bfT} \underline{\bfs}_l - \lambda_{k,l,j} {}^{s}\mathbf{z}_{k,l,j} &= \mathbf{0}\\
{}^b\underline{\mathbf{z}}_{k,l,j}^\top \bfP {}_C\hat{\bfT}_k^{-1} {}_O\hat{\bfT} \bfQ_{\bfu}^*  {}_O\hat{\bfT}^\top  {}_C\hat{\bfT}_k^{-\!\top} \bfP^\top {}^b\underline{\mathbf{z}}_{k,l,j} &= 0
\end{aligned}
\end{equation}
for all $l$ and all $k,j$ such that $\indicator_{k,i,j} = 1$, where the unknowns are ${}_O\hat{\bfT}$ and the semantic keypoint depths $\lambda_{k,l,j}$.

The least squares problem for semantic keypoints is a generalization of the pose from $n$ point correspondences (PnP) problem \cite{PNP}. While this system may be solved using polynomial equations \cite{yang2020teaser}, we perform a more efficient initialization by defining $\bfzeta_l \triangleq  {}_O\hat{\bfR} \bfs_l + {}_O\hat{\bfp}$ and solving the first set of (now linear) equations in \eqref{eq:initialization} for $\bfzeta_l$ and $\lambda_{k,l,j}$.
We recover ${}_O\hat{\bfT}$ via the Kabsch algorithm \cite{Kabsch} between $\crl{\bfzeta_l}$ and $\crl{\bfs_l}$. 
This approach works well as long as there is a sufficient number of semantic keypoints ${}^{s}_{}\mathbf{z}_{k,l,j}$ (at least two per landmark across time for at least three semantic landmarks $\bfs_l$) associated with the object. If fewer semantic keypoints are available, ${}_O\hat{\bfT}$ can be recovered from the second equation in \eqref{eq:initialization}. Let $\bfQ^* \triangleq {}_O\hat{\bfT} \bfQ_{\bfu}^*  {}_O\hat{\bfT}^\top$, ${\bfpi}_{k,l,j} \triangleq {}^b\underline{\mathbf{z}}_{k,l,j}^\top \bfP {}_C\hat{\bfT}_k^{-1}$, then a linear system $\mathbf{M} \mathbf{w} = 0$ can be formed:
\begin{equation}
\begin{aligned}
  \mathbf{M} \triangleq
  \begin{bmatrix}
    \vdots\\
    \bfpi_{k,l,j}^\top \otimes \bfpi_{k,l,j}^\top\\
    \vdots
  \end{bmatrix}
  \quad 
  \mathbf{w} \triangleq \vectorize\prl{\bfQ^*}. 
\end{aligned}
\end{equation}  
for all $l$ and all $k,j$ such that $\indicator_{k,i,j} = 1$. The operator $\vectorize(\cdot)$ vectorizes a matrix by stacking its columns into a single column vector
and $\otimes$ is the Kronecker product. The solution can be found by:  
\begin{equation}
\label{eq:ellipsoid_lsq}
\hat{\mathbf{w}} = \argmin_{\bfw}\left\|\mathbf{M} \mathbf{w}\right\|_{2}^{2} \quad \text { s.t. } \quad\|\mathbf{w}\|_{2}^{2}=1,
\end{equation}
where the equality constraint avoids the trivial solution. The minimization in \eqref{eq:ellipsoid_lsq} can done via SVD of the $\mathbf{M}$ matrix \cite{rubino20173d}. The object pose ${}_O\hat{\bfT}$ can be recovered from $\hat{\mathbf{w}} =  \vectorize(\hat{\bfQ}^*)$ by relating the estimated ellipsoid $\hat{\mathbf{Q}}^*$ in global coordinates to the ellipsoid $\bfQ^*_{\bfu}$ in the object frame:
\begin{equation*}
\begin{aligned}
\hat{\bfQ}^*\! =
{}_O\hat{\bfT} \bfQ_{\bfu}^* {}_O\hat{\bfT}^{\top}\!\!=
\begin{bmatrix} 
  {}_O\hat{\bfR} \bfU\bfU^\top {}_O\hat{\bfR}^\top -  {}_O\hat{\bfp} {}_O\hat{\bfp}^\top & - {}_O\hat{\bfp} \\ -{}_O\hat{\bfp}^\top & -1
\end{bmatrix}.
\end{aligned}
\end{equation*}
The translation ${}_O\hat{\bfp}$ can be recovered from the last column of $\hat{\bfQ}^*$.
To recover the rotation, note that $\bfA \triangleq \bfP\hat{\mathbf{Q}}^*\bfP^\top  + {}_O\hat{\bfp} {}_O\hat{\bfp}^\top = {}_O\hat{\bfR}\bfU\bfU^\top{}_O\hat{\bfR}^\top$ is a positive semidefinite matrix. Let its eigen-decomposition be $\bfA = \bfV\bfY\bfV^\top$, where $\bfY$ is a diagonal matrix containing the eigenvalues of $\bfA$. Since $\bfU\bfU^\top$ is diagonal, it follows that ${}_O\hat{\bfR} = \bfV$.



\section{The OrcVIO Algorithm}
\label{sec:orcvio}

We return to the problem of joint IMU-camera, geometric-landmark, and object optimization and describe the OrcVIO algorithm. The IMU-camera state $\bfx_k$ is tracked using an extended Kalman filter with mean $\hat{\bfx}_k$ and covariance $\bfSigma_k$. Prediction of the mean $\hat{\bfx}_{k+1}^{p}$ and covariance $\bfSigma_{k+1}^{p}$ is performed using the inertial measurements ${}^i\bfz_k$. When a geometric-keypoint or object track is lost, iterative optimization is performed over $\hat{\bfell}_m$ and $\hat{\bfo}_i$ as discussed in Sec.~\ref{sec:reconstruction}. The optimized geometric landmark and object estimates are used to update the IMU-camera mean $\hat{\bfx}_{k+1}^{p}$ and covariance $\bfSigma_{k+1}^{p}$. OrcVIO is an extension of the MSCKF \cite{msckf}, which performs closed-form prediction and integrates object states in the update.



\subsection{Prediction Step}
\label{sec:prediction_step}
The continuous-time IMU dynamics are~\cite{quatekf}:
\begin{align} \label{eq:imu-dynamics}
{}_I\dot{\bfR} &= {}_I\bfR \prl{{}^i\bfomega - \bfb_g - {\bfn}_{\boldsymbol{\omega}}}_{\times}, \quad \dot{\bfb}_g = \bfn_g,  \;\;\quad \dot{\bfb}_a = \bfn_a, \notag\\
{}_I\dot{\bfv} &= {}_I\bfR \prl{{}^i\bfa - \bfb_a - \bfn_{\bfa}} + \bfg, \qquad {}_I\dot{\bfp} = {}_I\bfv,
\end{align}
where ${\bfn}_{\boldsymbol{\omega}}$, $\bfn_{\bfa}$, $\bfn_g$, $\bfn_a \in \mathbb{R}^3$ are white Gaussian noise terms associated with the angular velocity, linear acceleration, gyroscope bias, and accelerometer bias, respectively. The power spectral densities $\sigma_{\bfomega}^2\bfI_3$ $[rad^2/s]$, $\sigma_{\bfa}^2\bfI_3$ $[m^2/s^3]$, $\sigma_{g}^2\bfI_3$ $[rad^2/s^3]$, $\sigma_{a}^2\bfI_3$ $[m^2/s^5]$ can be obtained from the IMU datasheet or experimental data \cite[Appendix E]{sola2012quaternion}. Using the perturbations in \eqref{eq:perturbations}, we can split \eqref{eq:imu-dynamics} into deterministic nominal dynamics:
\begin{align}
\label{eq:imu-error-dynamics-nominal}
{}_I\dot{\hat{\bfR}} &= {}_I\hat{\bfR} \prl{{}^i\bfomega - \hat{\bfb}_g}_{\times}, \qquad \dot{\hat{\bfb}}_g = \mathbf{0},  \quad\qquad \dot{\hat{\bfb}}_a = \mathbf{0}, \notag\\
{}_I\dot{\hat{\bfv}} &= {}_I\hat{\bfR} \prl{{}^i\bfa - \hat{\bfb}_a} + \bfg,\qquad {}_I\dot{\hat{\bfp}} = {}_I\hat{\bfv},
\end{align}
and stochastic error dynamics:
\begin{equation}
\label{eq:imu-error-dynamics-stochastic}   
\begin{aligned}
{}_I\dot{\bftheta} &= -\prl{{}^i\bfomega - \hat{\bfb}_g}_\times {}_I\bftheta - \prl{\tilde{\bfb}_g + {\bfn}_{\boldsymbol{\omega}}},\\
{}_I\dot{\tilde{\bfv}} &= - {}_I\hat{\bfR}\prl{{}^i\bfa - \hat{\bfb}_a}_\times{}_I\bftheta - {}_I\hat{\bfR}\prl{\tilde{\bfb}_a + {\bfn}_{\bfa}},\\
{}_I\dot{\tilde{\bfp}} &= {}_I\tilde{\bfv}, \qquad \dot{\tilde{\bfb}}_g = \bfn_g, \qquad \dot{\tilde{\bfb}}_a = \bfn_a.
\end{aligned}
\end{equation}
%
Given time discretization $\tau_k$ and assuming ${}^i\bfomega(t)$ and ${}^i\bfa(t)$ remain constant over the interval $t \in [t_k,t_{k}+\tau_k)$ with values ${}^i\bfz_k = ({}^i\bfomega_k, {}^i\bfa_k)$, we can compute the predicted IMU-camera state mean and covariance from \eqref{eq:imu-error-dynamics-nominal} and \eqref{eq:imu-error-dynamics-stochastic}, respectively. Let $\hat{\bfx}_k$ and $\bfSigma_k$ be the prior mean and covariance.


\begin{proposition}
\label{prop:closed_form_mean_prop}
The nominal dynamics \eqref{eq:imu-error-dynamics-nominal} can be integrated in \emph{closed-form} to obtain the predicted mean $\hat{\bfx}_{k+1}^{p}$:
\begin{align}
\label{eq:imu-integral}
&{}_I\hat{\bfR}_{k+1}^{p} \!=\! {}_I\hat{\bfR}_{k} \exp\prl{ \tau_k \bigl({}^i\bfomega_k \!- \hat{\bfb}_{g,k}\bigr)_{\times}},\notag\\
&\scaleMathLine{{}_I\hat{\bfv}_{k+1}^{p} \!=\!{}_I\hat{\bfv}_k + \bfg \tau_k + {}_I\hat{\bfR}_{k} \bfJ_L\!\prl{\tau_k \bigl({}^i\bfomega_k \!- \hat{\bfb}_{g,k}\bigr)} \bigl({}^i\bfa_k \!- \hat{\bfb}_{a,k}\bigr) \tau_k},\notag\\
&\scaleMathLine{{}_I\hat{\bfp}_{k+1}^{p} \!=\!{}_I\hat{\bfp}_k \!+\! {}_I\hat{\bfv}_k\tau_k \!+\! \bfg\frac{\tau_k^2}{2} \!+\! {}_I\hat{\bfR}_{k}\bfH_L\!\prl{\tau_k \bigl({}^i\bfomega_k \!-\!\hat{\bfb}_{g,k}\bigr)}\bigl({}^i\bfa_k \!-\! \hat{\bfb}_{a,k}\bigr) \tau_k^2}, \notag\\
&\hat{\bfb}_{g,k+1}^{p} = \hat{\bfb}_{g,k}, \qquad\qquad \hat{\bfb}_{a,k+1}^{p} = \hat{\bfb}_{a,k},\\
&\scaleMathLine{{}_I\hat{\bfT}_{k}^p = \begin{bmatrix} {}_I\hat{\bfR}_k & {}_I\hat{\bfp}_k \\ \mathbf{0}^\top & 1 \end{bmatrix}, {}_I\hat{\bfT}_{k-1}^p = {}_I\hat{\bfT}_{k-1}, \ldots, {}_I\hat{\bfT}_{k-W+1}^p = {}_I\hat{\bfT}_{k-W+1},}\notag
\end{align}
where $\bfJ_L\prl{\bfomega} \triangleq {\bfI}_3 + \frac{\bfomega_{\times}}{2!} + \frac{\bfomega_{\times}^2}{3!} + \ldots$ is the left Jacobian of $SO(3)$ and $\bfH_L\prl{\bfomega} \triangleq \frac{{\bfI}_3}{2!} +  \frac{\bfomega_{\times}}{3!} + \frac{\bfomega_{\times}^2}{4!} + \ldots$. 
Both $\bfJ_L\prl{\bfomega}$ and $\bfH_L\prl{\bfomega}$ admit closed-form (Rodrigues) expressions:
\begin{gather} \label{eq:JH-rodrigues}
\scaleMathLine{
\begin{aligned}
\bfJ_L(\bfomega) &= {\bfI}_3 + \frac{1-\cos\|\bfomega\|}{\|\bfomega\|^2}\bfomega_{\times} + \frac{\|\bfomega\|-\sin\|\bfomega\|}{\|\bfomega\|^3}\bfomega_{\times}^2\\
\bfH_L(\bfomega) &= \frac{1}{2}{\bfI}_3 + \frac{\|\bfomega\|-\sin\|\bfomega\|}{\|\bfomega\|^3}\bfomega_{\times} + \frac{2(\cos\|\bfomega\|-1)+\|\bfomega\|^2}{2\|\bfomega\|^4}\bfomega_{\times}^2.
\end{aligned}}
\raisetag{10ex}
\end{gather}
\end{proposition}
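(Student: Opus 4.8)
The plan is to integrate each block of the nominal dynamics \eqref{eq:imu-error-dynamics-nominal} over the interval $[t_k, t_k+\tau_k)$, exploiting that ${}^i\bfomega_k - \hat{\bfb}_{g,k}$ and ${}^i\bfa_k - \hat{\bfb}_{a,k}$ are held constant there. First, since $\dot{\hat{\bfb}}_g = \dot{\hat{\bfb}}_a = \mathbf{0}$, the bias predictions are immediate. For the rotation, writing $\bfomega_k \triangleq {}^i\bfomega_k - \hat{\bfb}_{g,k}$, the equation ${}_I\dot{\hat{\bfR}} = {}_I\hat{\bfR}\,(\bfomega_k)_\times$ is a linear matrix ODE with constant right-coefficient, whose solution is ${}_I\hat{\bfR}(t) = {}_I\hat{\bfR}_k \exp\prl{(t-t_k)(\bfomega_k)_\times}$; evaluating at $t_k+\tau_k$ gives the first line of \eqref{eq:imu-integral}.

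Next I would substitute this rotation into the velocity equation and integrate. With $\bfa_k \triangleq {}^i\bfa_k - \hat{\bfb}_{a,k}$, gravity contributes $\bfg\tau_k$ and the remaining term is ${}_I\hat{\bfR}_k\brl{\int_0^{\tau_k}\exp\prl{u(\bfomega_k)_\times}\,du}\bfa_k$. Expanding the matrix exponential as a power series and integrating term by term gives $\int_0^{\tau_k}\exp\prl{u(\bfomega_k)_\times}\,du = \tau_k\bfJ_L\prl{\tau_k\bfomega_k}$ directly from the defining series of $\bfJ_L$, yielding the velocity line. The position follows analogously from ${}_I\dot{\hat{\bfp}}={}_I\hat{\bfv}$: integrating the velocity expression once more produces ${}_I\hat{\bfv}_k\tau_k$ and $\bfg\tau_k^2/2$ together with the double integral $\int_0^{\tau_k}\!\int_0^w\exp\prl{u(\bfomega_k)_\times}\,du\,dw = \tau_k^2\bfH_L\prl{\tau_k\bfomega_k}$, again by matching the series against the definition of $\bfH_L$. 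The sliding-window poses require no integration: the current IMU pose is assembled from ${}_I\hat{\bfR}_k,{}_I\hat{\bfp}_k$, while the older windowed poses are carried over unchanged.

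It remains to establish the Rodrigues expressions \eqref{eq:JH-rodrigues}. The key algebraic fact is that any skew-symmetric $\bfomega_\times$ satisfies $\bfomega_\times^3 = -\|\bfomega\|^2\bfomega_\times$, so all higher powers reduce to $\bfI$, $\bfomega_\times$, and $\bfomega_\times^2$: setting $\theta\triangleq\|\bfomega\|$, one has $\bfomega_\times^{2m+1}=(-1)^m\theta^{2m}\bfomega_\times$ and $\bfomega_\times^{2m+2}=(-1)^m\theta^{2m}\bfomega_\times^2$. I would substitute these into the series $\bfJ_L=\sum_{n\ge0}\bfomega_\times^n/(n+1)!$ and $\bfH_L=\sum_{n\ge0}\bfomega_\times^n/(n+2)!$, split each into its odd and even powers, and recognize the resulting scalar coefficient series as reindexed Taylor expansions of $\sin\theta$ and $\cos\theta$. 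This collapses the coefficients to $(1-\cos\theta)/\theta^2$, $(\theta-\sin\theta)/\theta^3$, and $\bigl(2(\cos\theta-1)+\theta^2\bigr)/(2\theta^4)$ as claimed.

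The computation is essentially a chain of routine integrations, so I do not anticipate a genuine obstacle; the only point requiring care is the bookkeeping in the Rodrigues derivation — correctly reindexing the odd and even subseries, subtracting the leading one or two Taylor terms (particularly for $\bfH_L$), and tracking the resulting powers of $\theta$ in the denominators. I would also note that the term-by-term integration of the exponential series that produces the $\bfJ_L$ and $\bfH_L$ identities is justified by the uniform convergence of that series on the bounded interval.
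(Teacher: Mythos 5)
Your proposal is correct and follows essentially the same route as the paper: the paper likewise solves the rotation ODE as a matrix exponential, obtains the velocity and position by one and two term-by-term integrations of the exponential series (packaged as its Lemma on $\int_0^t\int_0^s\exp(r\bfomega_\times)\,dr\,ds = t^2\bfH_L(t\bfomega)$), and derives the Rodrigues forms via the reduction $\bfomega_\times^{2n+1}=(-1)^n\|\bfomega\|^{2n}\bfomega_\times$. No substantive differences.
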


\begin{proof}
See Sec. \ref{sec:imu-integral-proof}. 
\end{proof}

To compute the predicted covariance $\bfSigma_{k+1}^p$, we need to integrate the error dynamics in \eqref{eq:imu-error-dynamics-stochastic}. The IMU error state ${}^{}_{I}\tilde{\bfx}(t)$ satisfies a linear time-variant (LTV) stochastic differential equation (SDE) for $t \in [0,\tau_k)$:
\begin{equation}
\label{eq:smsckf_eq2}
{}^{}_{I}\dot{\tilde{\mathbf{x}}}
=
\bfF(t) {}^{}_{I}\tilde{\mathbf{x}} + {}^{}_{I}\mathbf{n}, \qquad {}^{}_{I}\tilde{\mathbf{x}}(0) \sim \mathcal{N}(\mathbf{0},{}_{I}\bfSigma_k)
\end{equation}
where ${}_{I}\bfSigma_k$ is the top-left $15 \times 15$ block of $\bfSigma_k$ corresponding to the IMU state, ${}^{}_{I}\mathbf{n}$ is white Gaussian noise with power spectral density $\bfQ = \diag(\sigma_{\bfomega}^2 {\bfI}_3, \sigma_{\bfa}^2 {\bfI}_3, \mathbf{0}_3, \sigma_{g}^2 {\bfI}_3, \sigma_{a}^2 {\bfI}_3)$ and ${\bfF}(t)$ is:
\begin{equation*}
{\bfF}(t) = 
\begin{bmatrix}
-\prl{{}^i\bfomega_k - \hat{\bfb}_{g,k}}_\times & 
{\bf0} & 
{\bf0} &
-{\bfI}_3 & 
{\bf0} 
\\
-{}_{I}\hat{\bfR}(t) \prl{{}^i\bfa_k - \hat{\bfb}_{a,k}}_\times& 
{\bf0} &
{\bf0} & 
{\bf0} &
-{}_{I}\hat{\bfR}(t)
\\
{\bf0} &
{\bfI}_3 & 
{\bf0} & 
{\bf0} & 
{\bf0}
\\
{\bf0} & {\bf0} & {\bf0} & {\bf0} & {\bf0} 
\\
{\bf0} & {\bf0} & {\bf0} & {\bf0} & {\bf0}
\end{bmatrix},
\end{equation*}
where ${}_{I}\hat{\bfR}(t) = {}_{I}\hat{\bfR}_k \exp\prl{ t \bigl({}^i\bfomega_k \!- \hat{\bfb}_{g,k}\bigr)_{\times}}$. Since $\bbE\brl{{}^{}_{I}\tilde{\bfx}(t)}$ remains zero over $[0,\tau_k)$, the covariance of ${}^{}_{I}\tilde{\mathbf{x}}(\tau_k)$ is:
\begin{align} \label{eq:prop_closed_form}
    {}^{}_{I}\bfSigma_{k+1}^p \!&= \bbE\brl{{}^{}_{I}\tilde{\bfx}(\tau_k){}^{}_{I}\tilde{\bfx}(\tau_k)^\top} \\
    &= \bfPhi(\tau_k,0) {}^{}_{I}\bfSigma_{k}\bfPhi(\tau_k,0)^\top\! + \int_{0}^{\tau_k} \bfPhi(\tau_k,s) \bfQ \bfPhi(\tau_k,s)^\top\! ds \notag
\end{align}
where $\bfPhi(t,s)$ is the transition matrix\footnote{Note that ${}_{I}\hat{\bfR}(t)$ in $\bfF(t)$ is time varying. Time-invariant approximations of the transition matrix are presented in \cite[App. B, E]{sola2012quaternion} using right-perturbation error dynamics and in \cite{orcvio} using left-perturbation error dynamics.} of \eqref{eq:smsckf_eq2}.


\begin{proposition}
\label{prop:closed_form_cov_prop}

The LTV SDE in \eqref{eq:smsckf_eq2} has a \emph{closed-form} transition matrix: 
%
\begin{equation*}
\bfPhi(t,0) = 
\begin{aligned}
\begin{bmatrix}
\exp\prl{-t\bfomega_{\times}} & 
{\bf0} & 
{\bf0} &
-t\bfJ_L\prl{-t\bfomega} & 
{\bf0} 
\\
{\bfPhi}_{\bfv\bftheta}(t) & 
{\bfI}_3 & 
{\bf0} & 
{\bfPhi}_{\bfv\bfomega}(t) &
{\bfPhi}_{\bfv\bfa}(t)
\\
{\bfPhi}_{\bfp\bftheta}(t) &
t {\bfI}_3 & 
{\bfI}_3 & 
{\bfPhi}_{\bfp\bfomega}(t) & 
{\bfPhi}_{\bfp\bfa}(t)
\\
{\bf0} & {\bf0} & {\bf0} & \mathbf{I}_3 & {\bf0} 
\\
{\bf0} & {\bf0} & {\bf0} & {\bf0} & \mathbf{I}_3 
\end{bmatrix}
\end{aligned}
\end{equation*}
where $\bfw = {}^i\bfomega_k - \hat{\bfb}_{g,k}$, $\bfa = {}^i\bfa_k - \hat{\bfb}_{a,k}$ and the blocks are:
\begin{gather} \label{eq:Phi-blocks}
\scaleMathLine{\begin{aligned}
&{\bfPhi}_{\bfv\bftheta}(t) = -t {}_{I}\hat{\bfR}_k\left[\bfJ_L\prl{t\bfomega} \bfa\right]_\times 
\\
&{\bfPhi}_{\bfv\bfomega}(t) = {}_{I}\hat{\bfR}_k \Delta(t) \frac{\bfa_{\times}}{\|\bfomega\|^2}\prl{\bfI_3 + \frac{\bfomega_{\times}^2}{||\bfomega||^2}}
\\
&\quad + t{}_{I}\hat{\bfR}_k \prl{\frac{\bfomega \bfa^\top}{\|\bfomega\|^2} \prl{\bfJ_L(-t\bfomega) - \bfI_3} +  \frac{\bfa^\top\bfomega }{\|\bfomega\|^2} \prl{\bfJ_L(t\bfomega) - \bfI_3}}
\\
&{\bfPhi}_{\bfv\bfa}(t) = -t {}_{I}\hat{\bfR}_k \bfJ_L\prl{t\bfomega}
\\ 
&{\bfPhi}_{\bfp\bftheta}(t) = -t^2 {}_{I}\hat{\bfR}_k \brl{\bfH_L\prl{t\bfomega} \bfa}_\times 
\\ 
&{\bfPhi}_{\bfp\bfomega}(t) = {}_{I}\hat{\bfR}_k 
\prl{\!t\bfJ_L\prl{t\bfomega}\! - \!\frac{\bfomega_{\times}}{\|\bfomega\|^2}\Delta(t)\! - t\bfI_3\!}
\frac{\bfa_{\times}}{\|\bfomega\|^2}\prl{\!\bfI_3 + \frac{\bfomega_{\times}^2}{\|\bfomega\|^2}\!} \\ 
&\quad + \frac{t^2}{2}{}_{I}\hat{\bfR}_k \prl{\!\frac{\bfomega \bfa^\top}{\|\bfomega\|^2} \prl{2\bfH_L(-t\bfomega)- \bfI_3} + \frac{\bfa^\top\bfomega }{\|\bfomega\|^2} \prl{2\bfH_L(t\bfomega) - \bfI_3} \!}
\\ 
&{\bfPhi}_{\bfp\bfa}(t) = -t^2 {}_{I}\hat{\bfR}_k \bfH_L\prl{t\bfomega}
\end{aligned}}
\raisetag{20ex}
\end{gather}
where $\Delta(t) = \prl{\exp\prl{t\bfomega_{\times}}\prl{\bfI_3 - t\bfomega_{\times}} - \bfI_3}$.
\end{proposition}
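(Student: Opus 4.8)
The plan is to treat $\bfPhi(t,0)$ as the solution of the matrix initial-value problem $\dot{\bfPhi}(t,0) = \bfF(t)\bfPhi(t,0)$ with $\bfPhi(0,0) = \bfI_{15}$, and to integrate it one block-column at a time, exploiting the sparsity of $\bfF(t)$. Throughout I write $\bfomega = {}^i\bfomega_k - \hat{\bfb}_{g,k}$ and $\bfa = {}^i\bfa_k - \hat{\bfb}_{a,k}$, both constant on $[0,\tau_k)$, and use ${}_I\hat{\bfR}(s) = {}_I\hat{\bfR}_k\exp\prl{s\bfomega_\times}$. The only integral identities needed are the integral representations of the left Jacobian and of $\bfH_L$, namely $\int_0^t \exp\prl{\pm s\bfomega_\times}\,ds = t\bfJ_L(\pm t\bfomega)$ and $\int_0^t s\bfJ_L(s\bfomega)\,ds = t^2\bfH_L(t\bfomega)$, both of which follow by integrating the defining power series term by term.

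First I dispose of the easy rows. The gyroscope- and accelerometer-bias block-rows of $\bfF(t)$ vanish, so their block-rows in $\bfPhi$ stay at their initial values, giving the two trailing identity blocks. The orientation row decouples from $\tilde{\bfv},\tilde{\bfp},\tilde{\bfb}_a$ and obeys $\dot{\bfPhi}_{\bftheta\bftheta} = -\bfomega_\times\bfPhi_{\bftheta\bftheta}$ with an extra $-\bfI_3$ source in the $\bfb_g$ column. Hence the $\bftheta$--$\bftheta$ block is $\exp\prl{-t\bfomega_\times}$, and solving $\dot{\bfPhi}_{\bftheta\bfb_g} = -\bfomega_\times\bfPhi_{\bftheta\bfb_g} - \bfI_3$ with zero initial condition gives the $(1,4)$ block $\bfPhi_{\bftheta\bfb_g}(t) = -\int_0^t\exp\prl{-s\bfomega_\times}\,ds = -t\bfJ_L(-t\bfomega)$, as stated.

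Next I handle the velocity and position rows by direct integration, since $\bfF(t)$ has no $\tilde{\bfv}$--$\tilde{\bfv}$ coupling and the position obeys simply $\dot{\tilde{\bfp}} = \tilde{\bfv}$. The crucial tool is the adjoint identity $\exp\prl{s\bfomega_\times}\bfa_\times\exp\prl{-s\bfomega_\times} = \brl{\exp\prl{s\bfomega_\times}\bfa}_\times$, which converts the coupling through the $\bftheta$ column into a single skew-symmetric matrix: $\bfPhi_{\bfv\bftheta}(t) = -{}_I\hat{\bfR}_k\brl{\int_0^t\exp\prl{s\bfomega_\times}\bfa\,ds}_\times = -t\,{}_I\hat{\bfR}_k\brl{\bfJ_L(t\bfomega)\bfa}_\times$. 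The $\bfb_a$ column is even simpler, $\bfPhi_{\bfv\bfa}(t) = -t\,{}_I\hat{\bfR}_k\bfJ_L(t\bfomega)$, and integrating these two against $\int_0^t s\bfJ_L(s\bfomega)\,ds = t^2\bfH_L(t\bfomega)$ reproduces $\bfPhi_{\bfp\bftheta}$ and $\bfPhi_{\bfp\bfa}$, together with the trivial blocks $\bfPhi_{\bfp\bfv} = t\bfI_3$ and $\bfPhi_{\bfp\bfp} = \bfI_3$.

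The main obstacle is the $\bfb_g$ column of the velocity and position rows, i.e.\ $\bfPhi_{\bfv\bfomega}$ and $\bfPhi_{\bfp\bfomega}$, because the gyroscope bias feeds the velocity only indirectly through the $(1,4)$ block $-s\bfJ_L(-s\bfomega)$, producing the nested integral $\bfPhi_{\bfv\bfomega}(t) = {}_I\hat{\bfR}_k\int_0^t s\,\exp\prl{s\bfomega_\times}\,\bfa_\times\,\bfJ_L(-s\bfomega)\,ds$. To evaluate it I would substitute the Rodrigues forms of $\exp\prl{s\bfomega_\times}$ and $\bfJ_L(-s\bfomega)$, split $\bfa$ into its components along and orthogonal to $\bfomega$, and collapse all higher powers using $\bfomega_\times^3 = -\|\bfomega\|^2\bfomega_\times$, leaving only scalar integrals of $s$, $s\cos(s\|\bfomega\|)$, and $s\sin(s\|\bfomega\|)$. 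Grouping the results recovers the $\Delta(t)\,\bfa_\times\prl{\bfI_3 + \bfomega_\times^2/\|\bfomega\|^2}$ term and the rank-one $\bfomega\bfa^\top$ and $\bfa^\top\bfomega$ contributions of the stated formula; one further integration (again via $\int_0^t s\bfJ_L(s\bfomega)\,ds = t^2\bfH_L(t\bfomega)$ and its analogue) gives $\bfPhi_{\bfp\bfomega}$, after which $\bfPhi(0,0) = \bfI_{15}$ is confirmed by inspection. This trigonometric bookkeeping for $\bfPhi_{\bfv\bfomega}$ and $\bfPhi_{\bfp\bfomega}$ is the only genuinely laborious part; every other block reduces to a one-line integral.
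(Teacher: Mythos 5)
Your proposal is correct and follows the same overall framework as the paper: both integrate the homogeneous LTV system block-row by block-row (your block-column view of $\dot{\bfPhi}=\bfF(t)\bfPhi$ is equivalent), both dispose of the bias rows by noting the corresponding rows of $\bfF$ vanish, and both obtain $\bfPhi_{\bftheta\bfb_g}$, $\bfPhi_{\bfv\bftheta}$, $\bfPhi_{\bfv\bfa}$, $\bfPhi_{\bfp\bftheta}$, $\bfPhi_{\bfp\bfa}$ from the same two ingredients, namely the adjoint identity $\exp(s\bfomega_\times)\bfa_\times\exp(-s\bfomega_\times)=\brl{\exp(s\bfomega_\times)\bfa}_\times$ and the integral identities $\int_0^t\exp(\pm s\bfomega_\times)\,ds = t\bfJ_L(\pm t\bfomega)$, $\int_0^t s\bfJ_L(s\bfomega)\,ds = t^2\bfH_L(t\bfomega)$ (the paper's Lemmas~\ref{lemma:JH-rodrigues} and~\ref{lemma:SO3-integration}). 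The one place you genuinely diverge is the evaluation of $\bfPhi_{\bfv\bfomega}(t)=\int_0^t s\,{}_I\hat{\bfR}(s)\bfa_\times\bfJ_L(-s\bfomega)\,ds$ and its antiderivative $\bfPhi_{\bfp\bfomega}$: you propose substituting Rodrigues forms and reducing to scalar integrals of $s$, $s\cos(s\|\bfomega\|)$, $s\sin(s\|\bfomega\|)$, whereas the paper stays at the operator level, rewriting $\bfJ_L(-s\bfomega) = \bfI_3 + \frac{\bfomega_\times}{\|\bfomega\|^2}\prl{\exp(-s\bfomega_\times)-\bfI_3+s\bfomega_\times}$ so the integral splits into a piece proportional to $\int_0^t s\exp(s\bfomega_\times)\,ds = \Delta(t)/\|\bfomega\|^2$ and a piece handled via $\bfa_\times\bfomega_\times = \bfomega\bfa^\top - (\bfa^\top\bfomega)\bfI_3$, $\exp(s\bfomega_\times)\bfomega=\bfomega$, and orthogonality of $\exp(s\bfomega_\times)$, which yields the stated grouping into the $\Delta(t)$ term and the rank-one $\bfomega\bfa^\top$, $\bfa^\top\bfomega$ terms with no trigonometry at all. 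Your route is sound in principle (the integrand is a finite combination of such scalar functions times fixed matrices, by $\bfomega_\times^3=-\|\bfomega\|^2\bfomega_\times$), but it is exactly the part you leave unexecuted, and re-assembling the resulting six-or-so scalar primitives into the specific closed form of \eqref{eq:Phi-blocks} is nontrivial bookkeeping where an error could easily hide; the paper's identity-based decomposition is what makes that step mechanical, so you would do well to adopt it for the two hard blocks.
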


\begin{proof}
See Sec.~\ref{sec:closed_form_cov_prop}. 
\end{proof}


Using \eqref{eq:prop_closed_form} and Proposition~\ref{prop:closed_form_cov_prop}, we can approximate the discrete-time covariance propagation for the IMU state as ${}_{I}\boldsymbol{\Sigma}_{k+1}^p \approx \bfPhi(\tau_k,0) \prl{ {}_{I}\bfSigma_k + \tau_k \bfQ} \bfPhi(\tau_k,0)^\top$. The full covariance propagation, accounting for the addition of the latest IMU pose ${}_I\hat{\bfT}_{k}$ to the sliding window and the dropping of the oldest IMU pose ${}_I\hat{\bfT}_{k-W}$, is:
\begin{align}
\bfSigma_{k+1}^p &\approx \bfA_k\prl{\bfSigma_{k} + \tau_k \diag(\bfQ,\mathbf{0}_{6W})} \bfA_k^\top,\\
\bfA_k &\triangleq \begin{bmatrix} \bfPhi(\tau_k,0) & \mathbf{0} & \mathbf{0}\\ \bfB & \mathbf{0} & \bf0 \\ \bf0 & \bfI_{6(W-1)} & \mathbf{0} \end{bmatrix} \quad 
\bfB \triangleq \begin{bmatrix}
{\bfI}_3 & \bf0 & \bf0 & \mathbf{0}\\
\bf0 & \bf0 & \mathbf{I}_3 & \mathbf{0}
\end{bmatrix}. \notag
\end{align}

\begin{figure*}[t]
  \centering
  \begin{minipage}{.27\linewidth}
    \includegraphics[width=\linewidth]{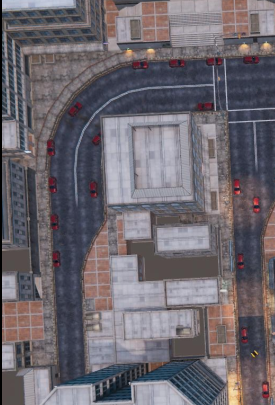}
    \caption{Urban scene simulation in the Unity game engine with car, door, and barrier object instances.}
    \label{fig:unity_scene}
  \end{minipage}%
  \hfill%
  \begin{minipage}{.71\linewidth}
    \includegraphics[width=\linewidth]{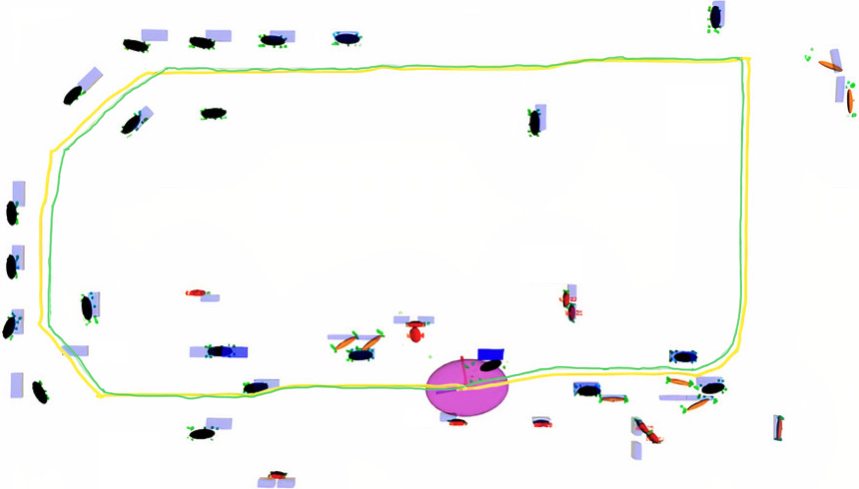}
    \caption{Object map reconstructed by OrcVIO on the Unity dataset. The figure shows the ground-truth trajectory (green curve), estimated trajectory (yellow curve), the covariance of the last pose (purple ellipsoid), the ground-truth objects (blue cuboids), estimated object ellipsoids (cars are black, doors are red, barriers are orange), as well as the reconstructed semantic landmarks (green dots).}
    \label{fig:unity_object_map}
  \end{minipage}
\end{figure*}

\subsection{Update Step}



We perform an update to $\hat{\bfx}_{k+1}^p$ and $\bfSigma_{k+1}^p$ without storing the geometric landmarks $\hat{\bfell}_m$ or object instances $\hat{\bfi}_i$ in the filter state using the null-space projection idea of \cite{msckf}. Let $\hat{\bfy}_i$ denote an estimate (from Sec.~\ref{sec:reconstruction}) of a geometric landmark or object instance whose track gets lost at time $t_{k+1}$. The geometric and semantic error functions, linearized using perturbations ${}_{C}\bfxi_{k}^p$, $\tilde{\bfy}_i$ around estimates ${}_C\hat{\bfT}_{k}^p$, $\hat{\bfy}_i$ are of the form:
\begin{equation}
\bfe_{k,i} \approx \hat{\bfe}_{k,i} + \frac{\partial\hat{\bfe}_{k,i}}{\partial{}_{C}\bfxi_{k}^p} {}_{C}\bfxi_{k}^p + \frac{\partial\hat{\bfe}_{k,i}}{\partial\tilde{\bfy}_i} \tilde{\bfy}_i + \bfn_{k,i}
\end{equation}
where $\bfn_{k,i}$ is the associated noise term with covariance $\bfV_{k,i}$. Stacking the errors for all camera poses in $\hat{\bfx}_{k+1}^p$ associated with $i$, leads to:
\begin{equation}
\label{eq:stacked-residual}
\bfe_{i} \approx \hat{\bfe}_{i} + \frac{\partial\hat{\bfe}_{i}}{\partial\tilde{\bfx}_{k+1}^p} \tilde{\bfx}_{k+1}^p + \frac{\partial\hat{\bfe}_{i}}{\partial\tilde{\bfy}_i} \tilde{\bfy}_i + \bfn_{i}.
\end{equation}
The perturbations $\tilde{\bfy}_i$ can be eliminated by left-multiplication of the errors in~\eqref{eq:stacked-residual} with unitary matrices $\bfN_i$ whose columns form the basis of the left nullspaces of $\frac{\partial \hat{\bfe}_i}{\partial\tilde{\bfy}_i}$:
\begin{equation}
\bfN_i^\top\bfe_i \approx \bfN_i^\top\hat{\bfe}_i + \bfN_i^\top\frac{\partial\hat{\bfe}_i}{\partial\tilde{\bfx}_{k+1}^p} \tilde{\bfx}_{k+1}^p + \bfN_i^\top\bfn_i. 
\end{equation}

In Sec. \ref{sec:zero_velocity_update}, we also introduce a zero-velocity residual to account for complete stops of the system, which is common for autonomous ground and some aerial robots.


Finally, let $\hat{\bfe}$, $\bfJ$, $\bfV$ be the stacked errors, Jacobians, and noise covariances (after null-space projection) across all geometric landmarks and object instances, whose tracks are lost at $t+1$. The updated IMU-camera mean and covariance are:
\begin{align}
\bfK &= \bfSigma_{k+1}^p \bfJ^\top \prl{\bfJ\bfSigma_{k+1}^p\bfJ^\top+\bfV}^{-1}
\notag\\
\hat{\bfx}_{k+1} &= \hat{\bfx}_{k+1}^p \oplus \prl{-\bfK \hat{\bfe}} \label{eq:ekf_update}\\
\bfSigma_{k+1} &= (\bfI - \bfK \bfJ) \bfSigma_{k+1}^p(\bfI- \bfK\bfJ)^\top + \bfK \bfV \bfK^\top.\notag
\end{align}
Note that the dimension of $\bfJ$ can be reduced in the computation above via QR factorization as described in \cite{msckf}.

\section{Evaluation}
\label{sec:evaluation}

This section presents results from large-scale evaluation of OrcVIO using indoor and outdoor simulated and real data. We present experiments in photo-realistic Unity simulation (Fig.~\ref{fig:unity_scene}), on raw and odometry data sequences from the KITTI dataset \cite{geiger2012we}, and on real data collected on UCSD's campus.


\subsection{Metrics}

We use two standard metrics for quantitative VIO evaluation: position Root Mean Square Error (RMSE) \cite{sturm2012benchmark} (also referred to as position ATE \cite{Zhang18iros}) and KITTI's translation error (TE) metric \cite{geiger2012we}. Let $\operatorname{Trans}(\bfT)$ return the position component of a pose $\bfT \in SE(3)$. Let ${}_{I}\mathbf{T}_k$ be the ground-truth pose trajectory and ${}_{I}\hat{\mathbf{T}}_k'$ be the estimated pose trajectory. To measure error, the estimated trajectory is first aligned to the initial frame of the ground-truth trajectory via ${}_{I}\hat{\mathbf{T}}_k = {}_{I}\mathbf{T}_0{}_{I}\hat{\mathbf{T}}_0^{-1}  {}_{I}\hat{\mathbf{T}}_k'$. After alignment, RMSE (m) and TE (\%) are measured as:
\begin{align}
    &\text{RMSE} \triangleq \prl{\frac{1}{K} 
\sum_{k=0}^{K-1} \left\| \operatorname{Trans}\prl{{}_{I}\mathbf{T}_k^{-1}  {}_{I}\hat{\mathbf{T}}_k} \right\|_2^2 }^{1/2},\\
    &\text{TE} \triangleq \frac{1}{|\mathcal{F}|} \sum_{(i, j) \in \mathcal{F}} \frac{\left\|\operatorname{Trans}\biggl(\prl{{}_I\hat{\bfT}_{j}^{-1}{}_I\hat{\bfT}_{i}}^{-1}\prl{{}_I\bfT_{j}^{-1}{}_I\bfT_{i}}\biggr)\right\|_2}{\operatorname{length}(i, j)},\notag
\end{align}
where $\mathcal{F}$ is a set of frames with fixed distances $\operatorname{length}(i, j)$ over the values  $\crl{100,...,800}\, m$ \cite{geiger2012we}.


The object estimates are evaluated using 3D Intersection over Union (IoU). A 3D bounding box $\hat{b}_i$ is obtained from each estimated object $\hat{\bfo}_i$, and IoU is defined as the ratio of the intersection volume over the union volume with respect to the bounding box $b_i$ of the closest ground truth object:
\begin{equation}
    \text{IoU}(\hat{b}_i,b_i) \triangleq \sum\limits_{i}\frac{\text{Volume of Intersection}(\hat{b}_i,b_i)}{\text{Volume of Union}(\hat{b}_i,b_i)}.
\end{equation}
To understand the distribution of the object orientation and translation errors, we define an estimate as \textit{true positive} if the closest ground-truth object pose is within a specific rotation or translation error threshold. Specifically, a rotation error of $\alpha^\circ$ means $\|\log({}_{O}\hat{\bfR}^{\top}{}_{O}\mathbf{R})^\vee\|_2 \leq \alpha^\circ$, and translation error of $\beta\ m$ means $\| {}_{O}\mathbf{p} - {}_{O}\hat{\mathbf{p}}\|_2 \leq \beta\ m$. We define \textit{precision} as the fraction of true positives over all estimated objects and \textit{recall} is the fraction of true positives over all ground-truth objects.

\subsection{Unity Dataset Results}

OrcVIO is evaluated in a Unity simulation containing 50 car, road barrier, and door object instances, shown in Fig. \ref{fig:unity_scene}. A ROS bridge between Unity and Gazebo is used to simulate a quadrotor robot, navigating in the environment and providing IMU and camera measurements. The object map reconstructed by OrcVIO is shown in Fig. \ref{fig:unity_object_map}. Additional qualitative results and a video demo are available in the \hyperref[sec:supp]{\textcolor{Blue}{Supplementary Material}}. The estimated objects are generally quite close to the ground-truth ones. The object poses near the starting position are less accurate due to insufficient motion parallax, since the quadrotor performs a pure rotation in the beginning. The trajectory RMSE and TE are $0.97\,m$ and $0.40\%$, respectively. The odometry drift is mainly due to pure rotation maneuvers at planned path corners executed by the quadrotor controller. 

The 3D IoU of the object estimates is $0.49$. The Precision and Recall of the object reconstruction is shown in Table \ref{tab:pr_unity}. Despite that doors and barriers have a thin structure, causing even small pose estimation drift to reduce the overlap with the ground-truth object instances, OrcVIO is able to produce an accurate object map with good 3D IoU.




\begin{table*}[t]
  \centering
 \caption {Precision-Recall Evaluation on the Unity Dataset} \label{tab:pr_unity}
	\begin{tabular}{l rr rr rr}
    \toprule
    Translation error $\rightarrow$ & \multicolumn{2}{c}{$\leq 0.5\text{ m}$} & \multicolumn{2}{c}{$\leq 1.0\text{ m}$} & \multicolumn{2}{c}{$\leq 1.5\text{ m}$} \\
    \cmidrule(lr){2-3}\cmidrule(lr){4-5}\cmidrule(lr){6-7}
    Rotation error & Precision & Recall & Precision & Recall  & Precision & Recall \\
    \midrule
    \multirow{1}{*}{$\leq 30^{\circ}$} 
      & 0.05 & 0.05 & 0.18 & 0.20 & 0.22 & 0.25  \\
    \midrule[0.5\lightrulewidth]
    \multirow{1}{*}{$\leq 45^{\circ}$} 
      & 0.09 & 0.10 & 0.27 & 0.30 & 0.45 & 0.50  \\
    \midrule[0.5\lightrulewidth]
    \multirow{1}{*}{$-$} 
      & 0.14 & 0.15 & 0.41 & 0.45 & 0.64 & 0.70  \\
    \bottomrule
  \end{tabular}
\end{table*}


\begin{table*}[t]
  \centering
 \caption {Object Detection and Pose Estimation on the KITTI Object Sequences} 
	\begin{tabular}{l l rrrrr rrrrr }
\toprule
Metric & KITTI Sequence $\rightarrow$ &  22 & 23 & 36 & 39 & 61 & 64 & 95 & 96 & 117 & Mean \\
\midrule
\multirow{3}{*}{3D IoU} & SingleView \cite{mousavian20173d} &  0.52 & 0.32 & 0.50 & 0.54 & \bf{0.54} & 0.43 & 0.40 & 0.26 & 0.25 & 0.42 \\
& CubeSLAM \cite{cubeslam} &  \bf{0.58} & 0.35 & \bf{0.54} & \bf{0.59} & 0.50 & \bf{0.48} & \bf{0.52} & 0.29 & \bf{0.35} & \bf{0.47} \\
& OrcVIO & 0.51 & \bf{0.55} & 0.53 & 0.55 & 0.53 & 0.46 & 0.29 & 0.31 & 0.23 & 0.44  \\
\midrule[0.5\lightrulewidth]
\multirow{2}{*}{Trans. error (\%)} & CubeSLAM \cite{cubeslam} & \bf{1.68} & 1.72 & \bf{2.93} & 1.61 & 1.24 & \bf{0.93} & \bf{1.49}  & 1.81 & 2.21 & 1.74 \\
& OrcVIO & \bf{1.68} & \bf{1.50} & 2.95 & \bf{1.44} & \bf{1.22} & 1.02 & \bf{1.49} & \bf{1.59} & \bf{1.92} & \bf{1.65} \\
\bottomrule
  \end{tabular}
\label{tab:3d_iou}
\end{table*}

\begin{table*}[t]
  \centering
 \caption {Precision-Recall Evaluation on the KITTI Object Sequences} \label{tab:pr_kitti}
	\begin{tabular}{l l rr rr rr}
    \toprule
    & Translation error $\rightarrow$ & \multicolumn{2}{c}{$\leq 0.5\text{ m}$} & \multicolumn{2}{c}{$\leq 1.0\text{ m}$} & \multicolumn{2}{c}{$\leq 1.5\text{ m}$} \\
    \cmidrule(lr){3-4}\cmidrule(lr){5-6}\cmidrule(lr){7-8}
    Rotation error & Method & Precision & Recall & Precision & Recall  & Precision & Recall \\
    \midrule
    \multirow{3}{*}{$\leq 30^{\circ}$} & SubCNN~\cite{xiang2017subcategory} & 0.10 & 0.07 & 0.26 & 0.17 & 0.38 & 0.26 \\
    & VIS-FNL~\cite{dongFS17} & \bf{0.14} & 0.10 & \bf{0.34} & 0.24 & \bf{0.49} & \bf{0.35} \\
     & OrcVIO & \bf{0.14} & \bf{0.16} & 0.23 & \bf{0.25} & 0.26 & 0.29  \\
    \midrule[0.5\lightrulewidth]
\multirow{3}{*}{$\leq 45^{\circ}$} & SubCNN~\cite{xiang2017subcategory} & 0.10 & 0.07 & 0.26 & 0.17 & 0.38 & 0.26 \\
    & VIS-FNL~\cite{dongFS17} & 0.15 & 0.11 & \bf{0.35} & 0.25 & \bf{0.50} & 0.36 \\
     & OrcVIO & \bf{0.19} & \bf{0.22} & 0.31 & \bf{0.35} & 0.36 & \bf{0.40}  \\
    \midrule[0.5\lightrulewidth]
    \multirow{3}{*}{$-$} & SubCNN~\cite{xiang2017subcategory} & 0.10 & 0.07 & 0.27 & 0.18 & 0.41 & 0.28 \\
    & VIS-FNL~\cite{dongFS17} & 0.16 & 0.11 & 0.40 & 0.29 & 0.58 & 0.42 \\
     & OrcVIO & \bf{0.39} & \bf{0.43} & \bf{0.61} & \bf{0.69} & \bf{0.70} & \bf{0.78}  \\
    \bottomrule
  \end{tabular}
\end{table*}

\begin{table*}[!tbp]
  \centering
 \caption {Trajectory RMSE (m) on the KITTI Odometry Sequences} 
	\begin{tabular}{l rrrrrrrrrr}
    \toprule
KITTI Sequence $\rightarrow$  & 00 & 02 & 04 & 05 & 06 & 07 & 08 & 09 & 10 & Mean \\ \midrule
Object BA~\cite{frost2018recovering} & 73.4 & 55.5 & 10.7 & 50.8 & 73.1 &  47.1 & 72.2 & 31.2 & 53.5 & 51.9 \\
CubeSLAM~\cite{cubeslam} & 13.9 & 26.2 & 1.1 &  \bf{4.8} &  7.0 &  2.7 & \bf{10.7} & 10.7 & 8.4 & 9.5 \\ 
OrcVIO & \bf{10.9} & \bf{18.9} & \bf{0.8} & 5.5 & \bf{4.5} & \bf{2.5} & 14.1 & \bf{6.6} & \bf{5.3} & \bf{7.7} \\
\bottomrule
  \end{tabular}
\label{tab:ate_odom}
\end{table*}

\begin{figure}[t]
\includegraphics[width=\linewidth]{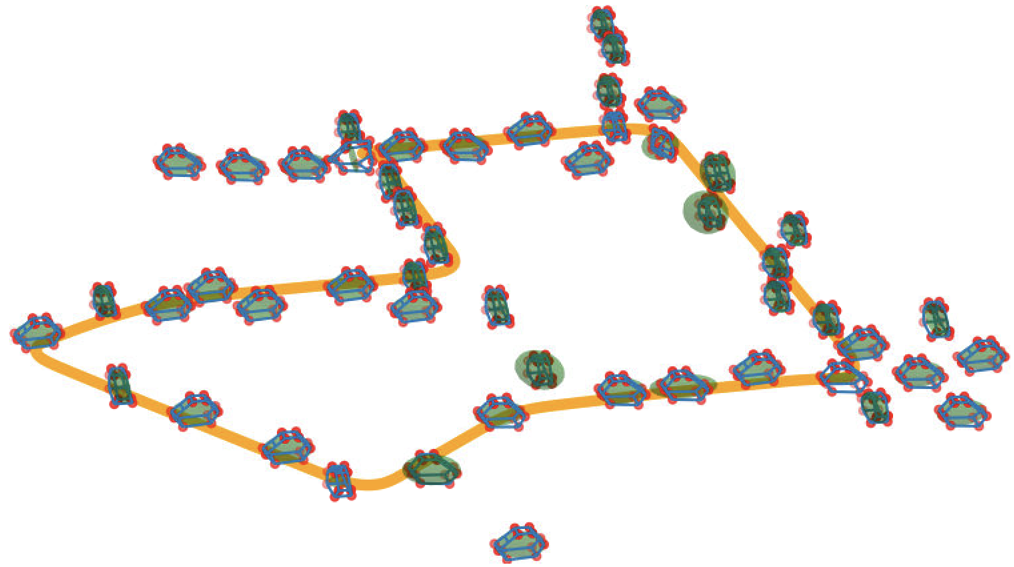}
\caption{Estimated IMU-camera trajectory (yellow) and object states (red landmarks and green ellipsoids) from KITTI Odom. Seq. 07.}
\label{fig:odom07}
\end{figure}



\begin{figure}[t]
\includegraphics[width=\linewidth]{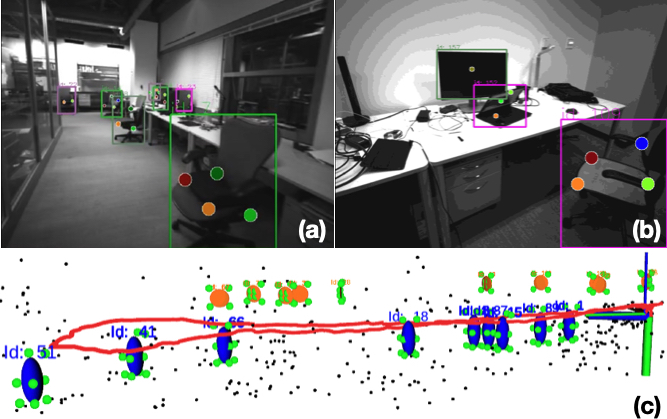}
\caption{OrcVIO localization and object mapping in an indoor scene with chairs and monitors. Bounding-box and semantic-keypoint detections are shown in (a) and (b). The estimated sensor trajectory (red curve), geometric landmarks (black dots), semantic landmarks (green dots), and object ellipsoids (blue for chairs, orange for monitors) obtained by OrcVIO are shown in (c).}
\label{fig:erl_lab_scene_map}
\end{figure}


\begin{figure*}[t]
\includegraphics[width=\linewidth]{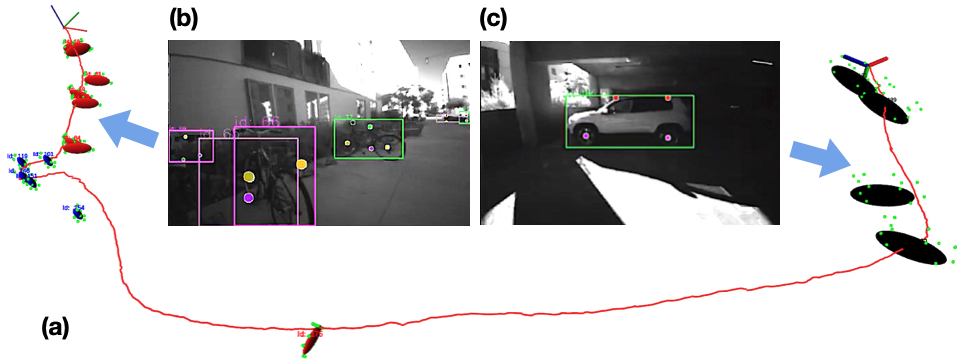}
\caption{OrcVIO localization and object mapping in an outdoor scene with chairs, bikes and cars. The estimated sensor trajectory (red curve), semantic landmarks (green dots), and object ellipsoids (blue for chairs, red for bikes, and black for cars) are shown in (a). Bounding-box and semantic-keypoint detections are shown for bicycle and car instances in (b) and (c).}
\label{fig:mesa_scene_map}
\end{figure*}

\subsection{KITTI Dataset Results}

We also evaluate OrcVIO on the KITTI dataset \cite{geiger2012we}, both qualitatively and quantitatively. We use the raw data sequences with object annotations to evaluate the object state estimation and the odometry sequences without object annotations for trajectory accuracy evaluation. Since the original inertial data from KITTI is not reliable, we use a VIO simulator \cite{geneva2019openvins} to generate realistic noisy high-frequency IMU measurements at 250 Hz from the ground-truth poses. 



Fig.~\ref{fig:odom07} shows the IMU-camera trajectory and object states estimated on KITTI odometry sequence 07, and a video demo is provided in the \hyperref[sec:supp]{\textcolor{Blue}{Supplementary Material}}. The results demonstrate that OrcVIO produces meaningful object maps. We obtained ground-truth 3D annotations from the KITTI tracklets and the KITTI detection benchmark labels for quantiative evaluation of the object reconstruction. Table~\ref{tab:3d_iou} reports 3D IoU results comparing OrcVIO
against state-of-the-art methods, including a deep learning approach for single-view 3-D bounding-box estimation (SingleView \cite{mousavian20173d}), and a multi-view bundle-adjustment algorithm that uses cuboids to represent objects (CubeSLAM \cite{cubeslam}). OrcVIO has better 3D IoU than SingleView for the majority of the sequences (23, 36, 39, 64, 96) because, unlike SingleView, OrcVIO use multi-view optimization over the object states. The performances of OrcVIO and CubeSLAM are similar since both rely on point and bounding-box measurements to optimize the object states. CubeSLAM is better than OrcVIO in terms of 3D IoU, possibly because OrcVIO does not model dynamic objects. Table~\ref{tab:3d_iou} also shows that OrcVIO is slightly better than CubeSLAM according to the TE odometry metric. In contrast with CubeSLAM, OrcVIO uses inertial residuals in addition to the geometric and object residuals.



In Table~\ref{tab:pr_kitti}, we compare the Precision and Recall of OrcVIO on the KITTI raw sequences (2011\_09\_26\_00XX, XX = [01, 19, 22, 23, 35, 36, 39, 61, 64, 93]) against a single-view, end-to-end object estimation approach (SubCNN \cite{xiang2017subcategory}), and a visual-inertial object detector (VIS-FNL \cite{dongFS17}). The first six rows are the Precision and Recall associated with different translation error (row) and rotation error (column) thresholds, whereas the last 3 rows ignore the rotation error. The results demonstrate that OrcVIO retrieves a reasonable amount of the ground-truth objects and provides a high-quality object map. When both rotation and translation errors are considered (first six rows), OrcVIO is better than SubCNN, since the latter does not rely on temporal object tracking. OrcVIO is comparable with VIS-FNL, even though VIS-FNL uses multiple object hypotheses while OrcVIO only keeps one object state. OrcVIO outperforms SubCNN and VIS-FNL when only translation error is taken into account (last three rows), which suggests that the object position estimates are accurate but the orientation estimates could be improved. 

We evaluate the RMSE of the IMU-camera trajectory estimation in Table~\ref{tab:ate_odom}. OrcVIO is compared with two visual object SLAM methods: a monocular visual SLAM that integrates spherical object models to estimate the scale via bundle-adjustment (Object BA \cite{frost2018recovering}), and CubeSLAM \cite{cubeslam}. Table~\ref{tab:ate_odom} shows that OrcVIO outperforms Object BA because spheres are a very coarse shape representation, compared to our ellipsoid and semantic keypoint representation, and thus Object BA cannot maintain the object scales as accurately as OrcVIO. The use of inertial data in OrcVIO leads to superior results as well.
OrcVIO also outperforms CubeSLAM, possibly due to the incorporation of a zero-velocity update (Sec.~\ref{sec:zero_velocity_update}), which is critical in driving datasets with frequent stops.


\subsection{UCSD Campus Dataset Results}

We also evaluated OrcVIO using real data collected with two commercial VIO sensors on UCSD's campus. The results are qualitative due to the lack of ground truth information.

First, we used Intel RealSense D435i with image frequency of 30 Hz, image resolution of $640\times480$, IMU frequency of 200 Hz in an indoor lab scene with chairs and monitors as shown in Fig. \ref{fig:erl_lab_scene_map}. The estimated sensor trajectory and reconstructed object map by OrcVIO are shown in the figure. A video demo can be found in the \hyperref[sec:supp]{\textcolor{Blue}{Supplementary Material}}. The results demonstrate that OrcVIO can map object instances from different categories and operates at real-time speed in a cluttered indoor scene. Since OrcVIO does not currently have a loop-closing mechanism for object re-identification, objects reappearing after getting lost will be mapped twice. Thus, there are more reconstructed chairs in Fig. \ref{fig:erl_lab_scene_map} than in the reality. 

Semantic keypoint detection is challenging due to occlusion, viewpoint change, and the lack of distinctive features on the monitors as shown in Fig. \ref{fig:erl_lab_scene_map} (b). To handle the monitor class succesfully, we decreased the weight of the semantic keypoint residual in the object Levenberg-Marquardt optimization \eqref{eq:object-lm}. Although removing the reliance on semantic keypoints leads to worse object orientation estimation, it allows OrcVIO to work with bounding-box detections only. This simplifies the front-end to an object detector and tracker and makes the algorithm more efficient and easier to deploy on resource-constrained robots. We release code for both the full OrcVIO algorithm and the OrcVIO-Lite version in the \hyperref[sec:supp]{\textcolor{Blue}{Supplementary Material}}.

Finally, we used an INDEMIND Binocular Visual-Inertial Camera to run OrcVIO outdoors with images at 25 Hz with resolution of $640\times400$ and IMU measurements at 200 Hz. The sensor initially observes bikes and chairs, and then makes a transition into a parking structure, as shown in Fig. \ref{fig:mesa_scene_map} (b), (c). A video demonstrating the performance of OrcVIO on this dataset can be found in the \hyperref[sec:supp]{\textcolor{Blue}{Supplementary Material}}. The resulting object map is shown in Fig. \ref{fig:mesa_scene_map} (a), demonstrating that OrcVIO is able to estimate object states from different categories in both indoor and outdoor scenes. This experiment also shows that OrcVIO can handle large illumination changes, transitioning from direct outdoor sunlight to dim lighting inside the parking structure.

\section{Conclusion}
\label{sec:conclusion}

This paper presented a unified formulation of ego-motion and object pose and shape estimation and developed a real-time simultaneous localization and object mapping algorithm. OrcVIO provides computationally constrained robot platforms with the ability to understand their surroundings at geometric and semantic levels, which may enable further capabilities such as object-level loop closure in challenging environments, collaborative semantic SLAM, and object interaction. Estimating object motion and performing collaborative mapping and loop closure are promising directions for future research.


\section{Appendix}
\label{sec:appendix}

\subsection{Proof of Proposition \ref{prop:gk-jacobians}}
\label{sec:prop1_proof}

We first derive the Jacobian of the camera pose perturbation ${}^{}_{C}\bfxi = [{}^{}_{C}{\bfrho}^\top\ {}^{}_{C}\bftheta^\top]^\top$ with respect to the IMU pose perturbation ${}_{I}\bfxi = [{}^{}_{I}{\bftheta}^\top\ {}^{}_{I}\tilde{\bfp}^\top]^\top$ in \eqref{eq:dcxi-dixi}. The camera and IMU poses are related via ${}^{}_{C}\bfT = {}^{}_{I}\bfT {}^{I}_{C}\bfT$. Approximating ${}^{}_{C}\bfT$ using ${}^{}_{C}\hat{\bfT}$ and ${}^{}_{C}\bfxi$ leads to:
\begin{equation*}
  \label{eq:state_aug-jacobians-proof1b}
  \begin{aligned}
  {}^{}_{C}\bfT &\approx {}^{}_{C}\hat{\bfT} \prl{\bfI_4 + {}^{}_{C}\bfxi_\times} 
  = 
  \begin{bmatrix}
  {}^{}_{C}\hat{\bfR} & {}^{}_{C}\hat{\bfp} \\
  \bf0 & 1
  \end{bmatrix} +
  \begin{bmatrix}
  {}^{}_{C}\hat{\bfR} {}^{}_{C}{\bftheta}_\times & {}^{}_{C}\hat{\bfR} {}^{}_{C}\boldsymbol{\rho} \\ 
  \bf0 & \bf0
  \end{bmatrix}.
  \end{aligned}
\end{equation*}
Similarly, approximating ${}^{}_{I}\bfT$ using ${}^{}_{I}\hat{\bfT}$ and ${}^{}_{I}\bfxi$ leads to:
\begin{equation*}
\label{eq:state_aug-jacobians-proof1c}
\begin{aligned}
{}^{}_{I}\bfT {}^{I}_{C}\bfT 
&\approx
\begin{bmatrix}
{}^{}_{I}\hat{\bfR} (\bfI_3 + {}^{}_{I}{\bftheta}_\times) &  {}^{}_{I}\hat{\bfp} + {}^{}_{I}\tilde{\bfp} \\ \bf0 & 1
\end{bmatrix}
\begin{bmatrix}
{}^{I}_{C}\bfR &  {}^{I}_{C}\bfp \\ \bf0 & 1
\end{bmatrix} \\ 
&= 
\begin{bmatrix}
{}^{}_{C}\hat{\bfR} & {}^{}_{C}\hat{\bfp} \\
\bf0 & 1
\end{bmatrix}
+
\begin{bmatrix}
{}^{}_{I}\hat{\bfR} {}^{}_{I}{\bftheta}_\times {}^{I}_{C}\bfR & {}^{}_{I}\hat{\bfR} {}^{}_{I}{\bftheta}_\times {}^{I}_{C}\bfp + {}^{}_{I}\tilde{\bfp}\\ 
\bf0 & \bf0 
\end{bmatrix}.  
\end{aligned}
\end{equation*}
Equating the two expression above, we get:
\begin{equation}
\begin{aligned}
{}^{}_{C}\hat{\bfR} {}^{}_{C}{\bftheta}_\times 
&= 
{}^{}_{I}\hat{\bfR} {}^{I}_{C}\bfR {}^{}_{C}{\bftheta}_\times = {}^{}_{I}\hat{\bfR} {}^{}_{I}{\bftheta}_\times {}^{I}_{C}\bfR\\
{}^{}_{I}\hat{\bfR} {}^{I}_{C}\bfR {}^{}_{C}\boldsymbol{\rho} 
&=
{}^{}_{I}\hat{\bfR} {}^{}_{I}{\bftheta}_\times {}^{I}_{C}\bfp + {}^{}_{I}\tilde{\bfp}
\end{aligned}
\end{equation}
and \eqref{eq:dcxi-dixi} can be concluded from:
\begin{equation}
\begin{aligned}
{}^{}_{C}{\bftheta}_\times 
&=
{}^{I}_{C}\bfR^\top {}^{}_{I}{\bftheta}{}^{I}_{C}\bfR =
\brl{{}^{I}_{C}\bfR^\top {}^{}_{I}{\bftheta}}_\times 
\;\Rightarrow\;
{}^{}_{C}{\bftheta} =
{}^{I}_{C}\bfR^\top {}^{}_{I}{\bftheta},\\
{}^{}_{C}\boldsymbol{\rho} 
&=
- {}^{I}_{C}\bfR^\top {}^{I}_{C}\bfp_\times {}^{}_{I}{\bftheta} + {}^{}_{C}\hat{\bfR}^\top {}^{}_{I}\tilde{\bfp}.
\end{aligned}
\end{equation}

Next, we derive the expressions in \eqref{eq:gk-jacobians}. Note that:
\begin{equation} \label{eq:dcTldxi}
\begin{aligned}
{}_C\bfT^{-1} \underline{\bfell} &\approx 
 (\bfI_4 + {}_C\bfxi_{\times})^{-1} {}_C\hat{\bfT}^{-1}\underline{\bfell}
\approx (\bfI_4 - {}_C\bfxi_{\times}) {}_C\hat{\bfT}^{-1} \underline{\bfell}  \\
&= {}_C\hat{\bfT}^{-1} \underline{\bfell} - \brl{{}_C\hat{\bfT}^{-1} \underline{\bfell}}^{\odot} {}_C\bfxi,
\end{aligned}
\end{equation}
where $(\bfI_4 + \bfxi_\times)(\bfI_4 - \bfxi_\times) = \bfI_4 - \bfxi_\times + \bfxi_\times - (\bfxi_\times)^2 \approx \bfI_4$ by discarding high-order terms in $\bfxi$, and we used (7.159) in \cite[Ch.7]{BarfootBook} in the last equality. Applying the chain rule to \eqref{eq:gk-error}:
\begin{equation} \label{eq:dgedixi}
\frac{\partial{}^g\bfe}{\partial{}_{I}\bfxi} = \bfP  \frac{d\pi}{d\underline{\bfs}}\prl{ {}_C\bfT^{-1} \underline{\bfell}} \frac{\partial {}_C\bfT^{-1} \underline{\bfell}}{\partial{}_{C}\bfxi}  \frac{\partial {}_{C}\bfxi}{\partial{}_{I}\bfxi},
\end{equation}
and evaluating at ${}_C\hat{\bfT}$ and $\hat{\bfell}$, using $\frac{\partial {}_C\bfT^{-1} \underline{\hat{\bfell}}}{\partial{}_{C}\bfxi} = - \brl{{}_C\hat{\bfT}^{-1} \underline{\hat{\bfell}}}^{\odot}$ from \eqref{eq:dcTldxi}, leads to the first equation in \eqref{eq:gk-jacobians}. The second equation in \eqref{eq:gk-jacobians} follows directly from the chain rule:
\begin{equation} \label{eq:duldl}
\frac{\partial{}^g\bfe}{\partial \bfell} = \bfP  \frac{d\pi}{d\underline{\bfs}}\prl{ {}_C\bfT^{-1} \underline{\bfell}} {}_C\bfT^{-1} \frac{d\underline{\bfell}}{d \bfell}, \qquad
\frac{d\underline{\bfell}}{d\bfell} = \begin{bmatrix} \bfI_{3} \\ \mathbf{0}^\top \end{bmatrix}.
\end{equation}

\subsection{Proof of Proposition \ref{prop:sk-jacobians}}
\label{sec:prop2_proof}

The derivation of $\frac{\partial{}^s\bfe}{\partial{}_{I}\bfxi}$ is identical to the derivation of $\frac{\partial{}^g\bfe}{\partial{}_{I}\bfxi}$ in \eqref{eq:dgedixi} with $\underline{\bfell}$ replaced by ${}_O\bfT\prl{\underline{\bfs}_l + \delta\underline{\bfs}_l}$, which does not depend on ${}_{I}\bfxi$. To derive the Jacobians of ${}^s\bfe$ with respect to ${}_{O}\bfxi$ and $\delta\tilde{\bfs}_l$, apply the chain rule to \eqref{eq:sk-error}:
\begin{align}
\frac{\partial{}^s\bfe}{\partial{}_{O}\bfxi} &=   \bfP \frac{d\pi}{d\underline{\bfs}}\prl{{}_C\bfT^{-1} {}_O\bfT\prl{\underline{\bfs}_l + \delta\underline{\bfs}_l}}  {}_C\bfT^{-1} \frac{\partial {}_O\bfT\prl{\underline{\bfs}_l + \delta\underline{\bfs}_l}}{\partial{}_{O}\bfxi} \notag\\
\frac{\partial{}^s\bfe}{\partial\delta\tilde{\bfs}_l} &= \bfP \frac{d\pi}{d\underline{\bfs}}\prl{{}_C\bfT^{-1} {}_O\bfT\prl{\underline{\bfs}_l + \delta\underline{\bfs}_l}}  {}_C\bfT^{-1} {}_O\bfT \frac{d \delta\tilde{\underline{\bfs}}_l}{d\delta\tilde{\bfs}_l}.
\end{align}
As in \eqref{eq:duldl}, $\frac{d \delta\tilde{\underline{\bfs}}_l}{d\delta\tilde{\bfs}_l} = \begin{bmatrix} \bfI_{3} \\ \mathbf{0}^\top \end{bmatrix}$, while $\frac{\partial {}_O\bfT\prl{\underline{\bfs}_l + \delta\underline{\bfs}_l}}{\partial{}_{O}\bfxi}$, evaluated at ${}^{}_{O}\hat{\bfT}$ and $\delta\hat{\bfs}_l$, can be obtained as:
\begin{align}
{}_O\bfT\prl{\underline{\bfs}_l + \delta\underline{\hat{\bfs}}_l}
&\approx 
{}_O\hat{\bfT} (\bfI_4 + {}_O\bfxi_\times) \prl{\underline{\bfs}_l + \delta\underline{\hat{\bfs}}_l} \\
&=
{}_O\hat{\bfT} \prl{\underline{\bfs}_l + \delta\underline{\hat{\bfs}}_l} + \underbrace{{}_O\hat{\bfT} \brl{ \prl{\underline{\bfs}_l + \delta\underline{\hat{\bfs}}_l}}^\odot}_{\text { Jacobian }} {}_O\bfxi. \notag
\end{align}
\vspace*{-3ex}

\subsection{Proof of Proposition \ref{prop:bb-jacobians}}
\label{sec:prop3_proof}

\begin{lemma}
\label{prop:ellipsoid_sdf}
Given a hyperplane $\ubfb = [\bfb^\top \, - b_h]^\top$ and an origin-centered ellispoid as a dual quadric $\bfQ^* = \diag(\bfU^{2}, -1)$, there are two hyperplanes parallel to $\ubfb$ that are tangent to the ellipsoid:
$
\ubft = 
\begin{bmatrix}
\bfb^\top & \pm \sqrt{ \bfb^\top \bfU^{2} \bfb }
\end{bmatrix}^\top
$
and the signed distance between $\ubfb$ and the closest tangent hyperplane is:
\begin{equation} \label{eq:tangent-distance}
\begin{aligned}
d(\ubfb,\bfU) \triangleq 
\frac{1}{\|\bfb\|}\prl{\sgn(b_h) \sqrt{ \bfb^\top \bfU^{2} \bfb } - b_h}.
\end{aligned}
\end{equation}
\end{lemma}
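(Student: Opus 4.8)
The plan is to reduce the entire statement to the dual-quadric tangency criterion introduced in the background section: a hyperplane $\underline{\bfpi}$ is tangent to the surface with dual representation $\bfQ^*$ exactly when $\underline{\bfpi}^\top \bfQ^* \underline{\bfpi} = 0$. For the origin-centered ellipsoid we have $\bfQ^* = \diag(\bfU^2,-1)$, so a candidate plane $\ubft = [\bft^\top\ t_h]^\top$ is tangent if and only if $\bft^\top \bfU^2 \bft - t_h^2 = 0$. Everything then follows by substitution and a short signed-distance computation.

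First I would pin down the tangent planes parallel to $\ubfb$. Being parallel to $\ubfb$ means sharing its normal $\bfb$, so such a plane reads $\ubft = [\bfb^\top\ t_h]^\top$. Plugging this into the tangency condition gives $t_h^2 = \bfb^\top \bfU^2 \bfb$, hence $t_h = \pm\sqrt{\bfb^\top \bfU^2 \bfb}$, which is exactly the pair of tangent hyperplanes claimed. Two distinct real solutions always exist because $\bfU^2$ is positive definite, so $\bfb^\top \bfU^2 \bfb > 0$ whenever $\bfb \neq \mathbf{0}$.

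Next I would convert to signed distances. A homogeneous plane $[\bfn^\top\ c]^\top$ encodes $\bfn^\top \bfx + c = 0$ and therefore lies at signed distance $-c/\|\bfn\|$ from the origin along the unit normal $\bfn/\|\bfn\|$. Thus $\ubfb = [\bfb^\top\ {-}b_h]^\top$ sits at signed distance $b_h/\|\bfb\|$, while the two tangent planes sit at $\mp\sqrt{\bfb^\top \bfU^2 \bfb}/\|\bfb\|$. The tangent plane nearest to $\ubfb$ is the one on the same side of the origin, i.e. at signed distance $\sgn(b_h)\sqrt{\bfb^\top \bfU^2 \bfb}/\|\bfb\|$; subtracting the signed position of $\ubfb$ yields
\begin{equation*}
d(\ubfb,\bfU) = \frac{1}{\|\bfb\|}\prl{\sgn(b_h)\sqrt{\bfb^\top \bfU^2 \bfb} - b_h},
\end{equation*}
which is precisely the stated formula.

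The only genuinely delicate part, and the step I expect to be the main obstacle, is the sign bookkeeping: correctly translating the homogeneous sign of the last coordinate into a geometric signed distance, and verifying that $\sgn(b_h)$ really selects the closer of the two tangent planes rather than the farther one. I would confirm the latter by comparing the two magnitudes $|{\pm}\sqrt{\bfb^\top \bfU^2 \bfb} - b_h|$ and checking, across every sign regime of $b_h$, that the choice matching $\sgn(b_h)$ minimizes the gap. Once the orientation conventions are fixed, the remaining algebra is entirely routine.
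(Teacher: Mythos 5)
Your proof is correct, but it takes a genuinely different route from the paper's. You obtain the tangent planes by substituting $[\bfb^\top\ t_h]^\top$ into the dual-quadric tangency criterion $\underline{\bfpi}^\top\bfQ^*\underline{\bfpi}=0$ already stated in Sec.~III, which immediately gives $t_h^2 = \bfb^\top\bfU^2\bfb$. The paper instead works in the primal: it parametrizes the candidate tangent as $[\bfb^\top\ {-\alpha}]^\top$, imposes that the plane normal $\bfb$ be parallel to the surface gradient $2\bfU^{-2}\bfx$, solves for the tangent point $\bfx = \pm\bfU^2\bfb/\sqrt{\bfb^\top\bfU^2\bfb}$, and recovers $\alpha = \pm\sqrt{\bfb^\top\bfU^2\bfb}$ by requiring $\bfx$ to lie on the plane. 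Your dual-form argument is shorter and reuses machinery the paper has already introduced (at the cost of treating the primal--dual correspondence as a black box); the paper's derivation is self-contained and additionally exhibits the tangent points themselves, which is geometrically informative though not needed for the statement. The distance computations coincide, but your handling of the sign selection is actually the more careful of the two: you explicitly compare $\lvert{\pm}\sqrt{\bfb^\top\bfU^2\bfb}-b_h\rvert$ to justify that $\sgn(b_h)$ picks the nearer tangent, whereas the paper compresses this into the shorthand $\min_\alpha(\alpha - b_h)$, which read literally would select the wrong branch.
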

\begin{proof}
 
Let the tangent parallel to $\ubfb$ be 
$ \ubft = \begin{bmatrix} \bfb^\top & - \alpha \end{bmatrix}^\top$.
Recall that the normal to an isosurface at any point $\bfx$ is the gradient of the isosurface $\nabla_\bfx \bfx^\top \bfU^{-2} \bfx = 2\bfU^{-2}\bfx$. For a tangent plane $\bft$, the plane normal and ellipsoid normal must be in the same direction, $\bfb \propto 2\bfU^{-2}\bfx$. Assuming $\beta \in \bbR$ as unknown constant, we can solve for $\bfx = \beta \bfU^{2} \bfb$. Because $\bfx$ lies on the ellipsoid $\beta^2 \bfb^\top \bfU^{2}\bfb  = 1$ or $\beta = \pm \frac{1}{\sqrt{\bfb^\top \bfU^{2} \bfb}}$.
If $\bfx$ also lies on the hyperplane $\ubft$, then $0 = \bfb^\top \bfx - \alpha = \beta \bfb^\top \bfU^{2} \bfb  - \alpha = \pm \sqrt{\bfb^\top \bfU^{2} \bfb} - \alpha$, and we get $\alpha = \pm \sqrt{ \bfb^\top \bfU^{2} \bfb }$. The perpendicular distance between two parallel hyperplanes $\brl{\bfb^\top\, -\alpha_1}^\top$ and $\brl{\bfb^\top\, -\alpha_2}^\top$ is $\frac{\alpha_1 - \alpha_2}{\|\bfb\|}$. The distance from the nearest tangent $\ubft$ to the hyperplane $\ubfb$ is 
$
d(\ubfb,\bfU) = 
\frac{ 1 }{\|\bfb\|}\min_\alpha (\alpha - b_h ) 
=  \frac{1}{\|\bfb\|}\prl{\sgn(b_h) \sqrt{ \bfb^\top \bfU^{2} \bfb } - b_h}. 
$
The signed distance is chosen such that subtracting the residual from $\ubft$ takes it closer to $\ubfb$. 
\end{proof}

\begin{lemma} \label{prop:ellipsoid_sdf_jacobians}
Given a hyperplane $\ubfb = [\bfb^\top \, - b_h]^\top$ and an origin-centered ellispoid as a dual quadric $\bfQ^* = \diag(\bfU^{2}, -1)$, the Jacobian of the distance $d(\ubfb,\bfU)$ in \eqref{eq:tangent-distance} with respect to $\ubfb$ is:
\begin{equation*}
\scaleMathLine{\frac{\partial d(\ubfb,\bfU)}{\partial \ubfb}
= 
\frac{\sgn(b_h)\bfb^\top\bfU^2 }{\|\bfb\| \sqrt{\bfb^\top \bfU^2 \bfb}}
\begin{bmatrix}
\left(
 \bfI_{3}
- \frac{\bfb\bfb^\top}{\|\bfb\|^2}
\right)
&
0
\end{bmatrix}
+ \frac{\brl{b_h\bfb^\top\  \|\bfb\|^2} }{\|\bfb\|^3}.}
\end{equation*}
%
\end{lemma}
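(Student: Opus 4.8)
The plan is to differentiate the closed-form distance \eqref{eq:tangent-distance} directly, keeping careful track of the fact that the fourth component of $\ubfb = [\bfb^\top\ -b_h]^\top$ is $-b_h$ rather than $b_h$. I would first split $d(\ubfb,\bfU) = f + g$ with $f \triangleq \frac{\sgn(b_h)}{\|\bfb\|}\sqrt{\bfb^\top\bfU^2\bfb}$ and $g \triangleq -\frac{b_h}{\|\bfb\|}$, so that the $1\times 4$ Jacobian is assembled from the block $\frac{\partial d}{\partial\bfb}\in\bbR^{1\times 3}$ (its first three columns) and the scalar $\frac{\partial d}{\partial(-b_h)}$ (its last column).

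For the first three columns I would apply the product and chain rules using the elementary identities $\frac{\partial\|\bfb\|}{\partial\bfb} = \frac{\bfb^\top}{\|\bfb\|}$ and $\frac{\partial\sqrt{\bfb^\top\bfU^2\bfb}}{\partial\bfb} = \frac{\bfb^\top\bfU^2}{\sqrt{\bfb^\top\bfU^2\bfb}}$. Writing $s\triangleq\sqrt{\bfb^\top\bfU^2\bfb}$, differentiating $f$ yields $\sgn(b_h)\bigl(\frac{\bfb^\top\bfU^2}{\|\bfb\|s} - \frac{s\,\bfb^\top}{\|\bfb\|^3}\bigr)$; factoring out $\frac{\sgn(b_h)}{\|\bfb\|s}$ and substituting $s^2 = \bfb^\top\bfU^2\bfb$ turns the bracket into $\bfb^\top\bfU^2\bigl(\bfI_3 - \frac{\bfb\bfb^\top}{\|\bfb\|^2}\bigr)$, which is exactly the projector appearing in the first term of the claim. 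Differentiating $g$ with respect to $\bfb$ contributes $\frac{b_h\bfb^\top}{\|\bfb\|^3}$, matching the first three entries of the second term.

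For the last column I would use the chain rule through $b_h = -(\ubfb)_4$, so that $\frac{\partial d}{\partial(-b_h)} = -\frac{\partial d}{\partial b_h}$. Here the only subtlety is the $\sgn(b_h)$ factor inside $f$: treating it as locally constant (its derivative vanishes away from $b_h=0$) leaves $\frac{\partial f}{\partial b_h}=0$ and $\frac{\partial g}{\partial b_h} = -\frac{1}{\|\bfb\|}$, whence the last column equals $\frac{1}{\|\bfb\|} = \frac{\|\bfb\|^2}{\|\bfb\|^3}$, matching the final entry of the stated row vector.

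I expect the main obstacle to be bookkeeping rather than any deep analytic step: correctly propagating the sign flip induced by the homogeneous fourth coordinate and justifying that the $\sgn(b_h)$ term contributes nothing under differentiation. Once those two points are handled, concatenating the $\bfb$-block with the last column reproduces the claimed Jacobian verbatim.
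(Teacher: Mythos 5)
Your proposal is correct and follows essentially the same route as the paper: both directly differentiate the closed-form distance \eqref{eq:tangent-distance}, treat $\sgn(b_h)$ as locally constant, and carefully account for the fact that the fourth homogeneous coordinate is $-b_h$ (the paper does this by substituting $b_h = -\underline{\mathbf{0}}^\top\ubfb$ and applying the product rule to $\tfrac{1}{\|\bfb\|}\,d_1$, while you split additively and differentiate the $\bfb$-block and the last column separately, which is only an organizational difference). All intermediate expressions you give check out against the paper's.
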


\begin{proof}
We rewrite $d(\ubfb,\bfU)$ explicitly in terms of $\ubfb$ by replacing $b_h = -\brl{\mathbf{0}^\top\ 1}\ubfb = - \underline{\mathbf{0}}^\top \ubfb$:
\begin{equation*}
d(\ubfb,\bfU)
= \frac{1}{\|\bfb\|}
\underbrace{
\left(
- \sgn(\underline{\bf0}^\top \ubfb) \sqrt{\ubfb^\top \diag(\bfU^2, 0) \ubfb}
+ \underline{\bf0}^\top \ubfb
\right)
}_{\triangleq d_1(\ubfb, \bfU)}.
\end{equation*}
Taking the derivative using the product rule leads to:
\begin{equation} \label{eq:bbox-jac-prod-rule}
\frac{\partial 
d(\ubfb,\bfU) }{\partial \ubfb}
=  
\frac{1}{\|\bfb\|}
\frac{\partial }{\partial \ubfb} d_1(\ubfb, \bfU)
+ 
d_1(\ubfb, \bfU)
\frac{\partial }{\partial \ubfb} \frac{1}{\|\bfb\|}.
\end{equation}
The derivative of $\frac{1}{\|\bfb\|}$ is:
\begin{equation*}
\begin{aligned}
\frac{\partial }{\partial \ubfb} \frac{1}{\|\bfb\|}
&=  
\frac{\partial }{\partial \ubfb} \frac{1}{\sqrt{\bfb^\top \bfb}}
=
\frac{\partial }{\partial \ubfb} \left( \ubfb^\top \diag(\bfI_{3}, 0) \ubfb\right)^{-\frac{1}{2}}
\\
&=  
\left(-\frac{1}{2(\bfb^\top \bfb)^{\frac{3}{2}}} \right)\frac{\partial \ubfb^\top \diag(\bfI_{3} , 0)\ubfb}{\partial \ubfb}
\\
&=  
-  \frac{\ubfb^\top \diag(\bfI_{3} , 0)}{\|\bfb\|^3} 
= 
-  \frac{\begin{bmatrix}\bfb^\top&  0\end{bmatrix}}{\|\bfb\|^3}.
\end{aligned}
\end{equation*}
The derivative of $d_1(\ubfb, \bfU)$ is:
\begin{equation*}
\begin{aligned}
\frac{\partial}{\partial \ubfb} d_1(\ubfb, \bfU)
&= - 
\sgn(\underline{\mathbf{0}}^\top \ubfb)
\frac{\ubfb^\top \diag(\bfU^2, 0)}
{\sqrt{\ubfb^\top \diag(\bfU^2, 0) \ubfb}}
+ \underline{\mathbf{0}}^\top.
\\
&\quad= 
\sgn(b_h)
\frac{\brl{\bfb^\top \bfU^2\ 0}}
{\sqrt{\bfb^\top \bfU^2 \bfb}}
+ \underline{\mathbf{0}}^\top 
.
\end{aligned}
\end{equation*}
Putting the derivatives in \eqref{eq:bbox-jac-prod-rule} leads to:
\begin{equation*}
\begin{aligned}
&\frac{\partial  d(\ubfb,\bfU) }{\partial \ubfb}
= 
\frac{
\|\bfb\|^2 \underline{\mathbf{0}}^\top
+ 
b_h\brl{\bfb^\top\  0}
}{\|\bfb\|^3}
\\
&\qquad+\sgn(b_h)
\frac{
\|\bfb\|^2 \brl{\bfb^\top \bfU^2\  0}
-\bfb^\top\bfU^2\bfb\brl{\bfb^\top\  0} 
}
{\|\bfb\|^3 \sqrt{\bfb^\top \bfU^2 \bfb} }
\\
&=
\frac{\sgn(b_h)\bfb^\top\bfU^2 }{\|\bfb\| \sqrt{\bfb^\top \bfU^2 \bfb}}
\begin{bmatrix}
\left(
 \bfI_{3}
- \frac{\bfb\bfb^\top}{\|\bfb\|^2}
\right)
&
0
\end{bmatrix}
+ \frac{\brl{b_h\bfb^\top\  \|\bfb\|^2} }{\|\bfb\|^3}.\qedhere
\end{aligned}
\end{equation*}
\end{proof}


We proceed with the proof of Proposition \ref{prop:bb-jacobians}. Note that ${}^b\bfe(\bfx, \bfo, {}^b\bfz) = d(\ubfb,\diag(\bfu+\delta \bfu))$ in \eqref{eq:tangent-distance} with $\underline{\bfb}= {}_O\bfT^\top  {}_C\bfT^{-\!\top} \bfP^\top {}^b\underline{\mathbf{z}}$. The expression for $\frac{\partial {}^b\bfe}{\partial \underline{\bfb}}$, evaluated at $\underline{\hat{\bfb}}$ and $\hat{\bfU}$, follows directly from Lemma \ref{prop:ellipsoid_sdf_jacobians}. The expressions for $\frac{\partial \underline{\bfb}}{\partial {}_C\bfxi}$ and $\frac{\partial \underline{\bfb}}{\partial {}_O\bfxi}$ are obtained using the perturbations ${}_C\bfT \approx {}_C\hat{\bfT}(\bfI_4+{}_C\bfxi_{\times})$ and ${}_O\bfT \approx {}_O\hat{\bfT}(\bfI_4+{}_O\bfxi_{\times})$, (7.159) in \cite[Ch.7]{BarfootBook}, and dropping second-order perturbation terms: 
\begin{equation*}
\begin{aligned}
\ubfb &\approx \prl{\bfI + {}_O\bfxi_\times^\top} {}_O\hat{\bfT}^\top {}_C\hat{\bfT}^{-\!\top} \prl{\bfI - {}_C\bfxi_{\times}^\top}\bfP^\top {}^b\underline{\mathbf{z}}\\
&\approx {}_O\hat{\bfT}^\top {}_C\hat{\bfT}^{-\!\top} \bfP^\top {}^b\underline{\mathbf{z}} + \underbrace{\brl{{}_O\hat{\bfT}^\top {}_C\hat{\bfT}^{-\top} \bfP^\top {}^b\underline{\mathbf{z}}}^{\circledcirc \top}}_{ \partial \underline{\bfb} / \partial {}_O\bfxi } {}_O\bfxi\\
&\quad\underbrace{-{}_O\hat{\bfT}^\top {}_C\hat{\bfT}^{-\top} \brl{\bfP^\top {}^b\underline{\mathbf{z}}}^{\circledcirc \top}}_{ \partial \underline{\bfb} / \partial {}_C\bfxi} {}_C\bfxi
\end{aligned}
\end{equation*}
Finally, $\frac{\partial {}^b\bfe}{\partial \delta\tilde{\bfu}}$ is obtained by the chain rule:
\begin{equation*}
\scaleMathLine{\frac{\partial {}^b\bfe}{\partial \delta\tilde{\bfu}} = \frac{\sgn(\hat{b}_h)}{2\|\hat{\bfb}\| \sqrt{\hat{\bfb}^\top\!\diag(\bfu\!+\!\delta\hat{\bfu})^2 \hat{\bfb}}} \frac{\partial}{\partial \delta\tilde{\bfu}} \hat{\bfb}^\top \!\diag(\bfu\!+\!\delta\hat{\bfu}\!+\!\delta\tilde{\bfu})^2 \hat{\bfb}.}
\end{equation*}

\subsection{Proof of Proposition \ref{prop:closed_form_mean_prop}}
\label{sec:imu-integral-proof}

\begin{lemma} \label{lemma:JH-rodrigues}
Let $\bfomega \in \bbR^3$, $\bfJ_L(\bfomega) \triangleq \sum_{n=0}^\infty \frac{\bfomega_\times^n}{(n+1)!}$, and $\bfH_L(\bfomega) \triangleq \sum_{n=0}^\infty \frac{\bfomega_\times^n}{(n+2)!}$. For $\bfomega \neq 0$, $\bfJ_L(\bfomega)$ and $\bfH_L(\bfomega)$ admit closed-form expressions, shown in \eqref{eq:JH-rodrigues}.
\end{lemma}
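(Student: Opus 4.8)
The plan is to exploit the fact that powers of a $3\times 3$ skew-symmetric matrix collapse into a low-dimensional span. Writing $\theta \triangleq \|\bfomega\|$ and using the standard identity $\bfomega_\times^2 = \bfomega\bfomega^\top - \theta^2\bfI_3$ together with $\bfomega_\times\bfomega = \mathbf{0}$, I would first establish the recursion $\bfomega_\times^3 = \bfomega_\times(\bfomega\bfomega^\top - \theta^2\bfI_3) = -\theta^2\bfomega_\times$. An easy induction then yields, for $\theta \neq 0$ and $k \geq 1$,
\[
\bfomega_\times^{2k+1} = (-1)^k\theta^{2k}\bfomega_\times, \qquad \bfomega_\times^{2k} = (-1)^{k-1}\theta^{2k-2}\bfomega_\times^2,
\]
so every term of either defining series is a scalar multiple of $\bfomega_\times$ or $\bfomega_\times^2$, with the $n=0$ term contributing the identity.

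Next I would substitute these reductions into $\bfJ_L(\bfomega) = \sum_{n\geq 0}\bfomega_\times^n/(n+1)!$ and $\bfH_L(\bfomega) = \sum_{n\geq 0}\bfomega_\times^n/(n+2)!$, peeling off the $n=0$ term and splitting the remainder by the parity of $n$. This regroups each matrix series into $c_0\bfI_3 + c_1\bfomega_\times + c_2\bfomega_\times^2$, where $c_1$ collects the odd-$n$ contributions, $c_2$ collects the even-$n$ contributions with $n\geq 2$, and each $c_i$ is a scalar power series in $\theta^2$ with factorial denominators. Absolute convergence is immediate, since every series is dominated termwise by $\sum_n \theta^n/n!$, so the rearrangement into these three groups is justified for all $\theta$.

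The crux is then a bookkeeping step: reindexing each scalar coefficient so that it matches a shifted Taylor expansion of $\sin\theta$ or $\cos\theta$. For $\bfJ_L$ the odd-$n$ coefficient becomes $\theta^{-2}\sum_{k\geq 0}(-1)^k\theta^{2k+2}/(2k+2)! = (1-\cos\theta)/\theta^2$, and the even-$n$ coefficient becomes $\theta^{-3}\sum_{k\geq 1}(-1)^{k-1}\theta^{2k+1}/(2k+1)! = (\theta - \sin\theta)/\theta^3$. The same manipulation applied to $\bfH_L$ gives $(\theta-\sin\theta)/\theta^3$ for the $\bfomega_\times$ term and $\theta^{-4}\bigl(\cos\theta - 1 + \tfrac{\theta^2}{2}\bigr) = \bigl(2(\cos\theta-1)+\theta^2\bigr)/(2\theta^4)$ for the $\bfomega_\times^2$ term, where the extra $\theta^2/2$ arises from removing the $k=0$ summand when the even series starts at $k=1$. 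Substituting $\theta = \|\bfomega\|$ recovers exactly \eqref{eq:JH-rodrigues}.

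I expect the main obstacle to be purely the index shifts and sign tracking in this last step, rather than anything structural: aligning each reindexed series with the correct truncation of $\sin$ or $\cos$ is where off-by-one and sign errors most easily appear, so I would verify each of the four coefficients by writing out its lowest-order terms and comparing against the direct expansion of the claimed closed form.
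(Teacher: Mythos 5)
Your proposal is correct and follows essentially the same route as the paper: both use the recursion $\bfomega_\times^3 = -\|\bfomega\|^2\bfomega_\times$ to reduce every power of $\bfomega_\times$ to a scalar multiple of $\bfomega_\times$ or $\bfomega_\times^2$, split the series by parity, and identify the resulting scalar coefficients with truncated $\sin$ and $\cos$ expansions. All four coefficients you compute, including the $\theta^2/2$ correction in the $\bfomega_\times^2$ term of $\bfH_L$, match \eqref{eq:JH-rodrigues}; the paper merely writes out the $\bfJ_L$ case and declares the $\bfH_L$ derivation equivalent.
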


\begin{proof}
Using that $\bfomega_\times^{2n+1} = (-1)^n\|\bfomega\|^{2n} \bfomega_\times$ for $n \geq 0$:
\begin{align*}
&\bfJ_L(\bfomega) = \bfI_3 + \sum_{n=1}^\infty \frac{\bfomega_\times^n}{(n+1)!}\\
&= \bfI_3 + \prl{\sum_{n=0}^\infty \frac{(-1)^n\|\bfomega\|^{2n}}{(2n+2)!}} \bfomega_\times + \prl{\sum_{n=0}^\infty \frac{(-1)^n\|\bfomega\|^{2n}}{(2n+3)!}}  \bfomega_\times^2\\
&= \bfI_3 + \prl{\frac{1-\cos\|\bfomega\|}{\|\bfomega\|^2}}\bfomega_\times + \prl{\frac{\|\bfomega\|-\sin\|\bfomega\|}{\|\bfomega\|^3}}\bfomega_\times^2.
\end{align*}
The derivation for $\bfH_L(\bfomega)$ is equivalent.
\end{proof}

\begin{lemma} \label{lemma:SO3-integration}
For $\bfomega \in \bbR^3$, $t \in \bbR$, the matrix $\exp(t\bfomega_\times)$ satisfies:
\begin{equation}
\begin{aligned}
    \int_0^t \int_0^s \exp(r\bfomega_\times)dr ds = \!\int_0^t \!s \bfJ_L(s \bfomega) ds = t^2 \bfH_L(t \bfomega).
\end{aligned}
\end{equation}
\end{lemma}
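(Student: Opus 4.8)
The plan is to prove both equalities by expanding the matrix exponential as its defining power series and integrating term by term. Recall that $\exp(r\bfomega_\times) = \sum_{n=0}^\infty r^n\bfomega_\times^n/n!$. Before manipulating the integrals, I would first justify the interchange of summation and integration: on the compact interval $[0,t]$ the partial sums of the exponential series converge uniformly, since the series is dominated by $\sum_n |t|^n\|\bfomega_\times\|^n/n! = \exp(|t|\,\|\bfomega_\times\|) < \infty$. Hence summation and integration may be swapped freely in what follows.

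For the first equality, I would compute the inner integral for fixed $s$, namely $\int_0^s \exp(r\bfomega_\times)\,dr = \sum_{n=0}^\infty \frac{s^{n+1}}{(n+1)!}\bfomega_\times^n = s\sum_{n=0}^\infty \frac{s^n\bfomega_\times^n}{(n+1)!} = s\,\bfJ_L(s\bfomega)$, where the last step invokes the definition of $\bfJ_L$ from Lemma~\ref{lemma:JH-rodrigues} with argument $s\bfomega$, using $(s\bfomega)_\times^n = s^n\bfomega_\times^n$. Integrating this identity over $s \in [0,t]$ then yields $\int_0^t\int_0^s \exp(r\bfomega_\times)\,dr\,ds = \int_0^t s\,\bfJ_L(s\bfomega)\,ds$, which is exactly the first equality.

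For the second equality, I would integrate the series once more: $\int_0^t s\,\bfJ_L(s\bfomega)\,ds = \int_0^t \sum_{n=0}^\infty \frac{s^{n+1}\bfomega_\times^n}{(n+1)!}\,ds = \sum_{n=0}^\infty \frac{t^{n+2}\bfomega_\times^n}{(n+2)(n+1)!}$. The key algebraic simplification is the factorial identity $(n+2)(n+1)! = (n+2)!$, which rewrites the summand as $t^{n+2}\bfomega_\times^n/(n+2)!$. Factoring out $t^2$ and recognizing $\sum_{n=0}^\infty t^n\bfomega_\times^n/(n+2)! = \bfH_L(t\bfomega)$ (again by the definition in Lemma~\ref{lemma:JH-rodrigues}) completes the chain and gives $t^2\,\bfH_L(t\bfomega)$.

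The whole argument is essentially bookkeeping with the power-series coefficients, so I do not expect any conceptual difficulty. The only step that genuinely requires care is the term-by-term integration of the matrix series, which is why I would dispatch the uniform-convergence justification at the outset; once that is in place, the remaining manipulations reduce to matching factorials against the definitions of $\bfJ_L$ and $\bfH_L$.
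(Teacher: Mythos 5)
Your proposal is correct and follows essentially the same route as the paper, which likewise integrates the Taylor series of $\exp(t\bfomega_\times)$ term by term and matches the resulting coefficients $1/(n+1)!$ and $1/(n+2)!$ against the series definitions of $\bfJ_L$ and $\bfH_L$ from Lemma~\ref{lemma:JH-rodrigues}; your version merely spells out the bookkeeping and the uniform-convergence justification that the paper leaves implicit.
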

\begin{proof}
The result follows by integrating the terms of the Taylor series of $\exp(t\bfomega_\times) = \sum_{n=0}^\infty \frac{t^n\bfomega_\times^n}{n!}$ and since $\bfJ_L(\bfomega)$ and $\bfH_L(\bfomega)$ are well defined by Lemma \ref{lemma:JH-rodrigues}.
\end{proof}

To obtain \eqref{eq:imu-integral}, we compute the solutions to the linear time-invariant (LTI) ordinary differential equations (ODEs) in \eqref{eq:imu-error-dynamics-nominal}. With $\bfomega = {}^i\bfomega_k - \hat{\bfb}_{g,k}$ and $t \in [0,\tau_k)$, the solution to:
\begin{equation}\label{eq:R-ode}
{}_I\dot{\hat{\bfR}} = {}_I\hat{\bfR}\bfomega_\times, \qquad {}_I\hat{\bfR}(0) = {}_I\hat{\bfR}_k,
\end{equation}
is ${}_I\hat{\bfR}(t) = {}_I\hat{\bfR}_k \exp(t \bfomega_\times)$ and, hence:
\begin{equation}
{}_I\hat{\bfR}_{k+1}^p = {}_I\hat{\bfR}(\tau_k) = {}_I\hat{\bfR}_k \exp\prl{\tau_k \bigl({}^i\bfomega_k \!- \hat{\bfb}_{g,k}\bigr)_{\times}}.
\end{equation}
Similarly, with $\bfa = {}^i\bfa - \hat{\bfb}_a$, $t \in [0,\tau_k)$, and initial condition ${}_I\hat{\bfv}(0) = {}_I\hat{\bfv}_{k}$, the solution of the LTI ODE for ${}_I\hat{\bfv}$ in \eqref{eq:imu-error-dynamics-nominal} is:
\begin{equation}
\begin{aligned}
{}_I\hat{\bfv}(t) &= {}_I\hat{\bfv}(0) + \int_0^t ({}_I\hat{\bfR}(s) \bfa + \bfg)\, ds \\
&= {}_I\hat{\bfv}_{k} + t {}_I\hat{\bfR}_k \bfJ_L(t\bfomega)\bfa + t\bfg,
\end{aligned}  
\end{equation}
where the second equality uses Lemma~\ref{lemma:SO3-integration}. Hence, ${}_I\hat{\bfv}_{k+1}^p ={}_I\hat{\bfv}(\tau_k)$ satisfies the second expression in \eqref{eq:imu-integral}. Also, by Lemma~\ref{lemma:SO3-integration}, for $t \in [0,\tau_k)$ with initial condition ${}_I\hat{\bfp}(0) = {}_I\hat{\bfp}_{k}$, the solution of the LTI ODE for ${}_I\hat{\bfp}$ in \eqref{eq:imu-error-dynamics-nominal} is:
\begin{equation}
\begin{aligned}
{}_I\hat{\bfp}(t) &= {}_I\hat{\bfp}(0) + \int_0^t {}_I\hat{\bfv}(s)\, ds \\
&= {}_I\hat{\bfp}_{k} + t {}_I\hat{\bfv}_{k} + t^2 {}_I\hat{\bfR}_k \bfH_L(t\bfomega)\bfa + \frac{t^2}{2}\bfg.
\end{aligned}  
\end{equation}
Hence, ${}_I\hat{\bfp}_{k+1}^p ={}_I\hat{\bfp}(\tau_k)$ satisfies the third expression in \eqref{eq:imu-integral}. Finally, $\hat{\bfb}_g(t) = \hat{\bfb}_g(0) = \hat{\bfb}_{g,k}$ and $\hat{\bfb}_a(t) = \hat{\bfb}_a(0) = \hat{\bfb}_{a,k}$ for all $t \in [0,\tau_k)$. The IMU pose history in \eqref{eq:imu-integral} is updated by adding the pose $({}_I\hat{\bfR}_k, {}_I\hat{\bfp}_{k})$ to the sliding window and dropping the oldest pose ${}_I\hat{\bfT}_{k-W}$.

\subsection{Proof of Proposition \ref{prop:closed_form_cov_prop}}
\label{sec:closed_form_cov_prop}

The transition matrix $\bfPhi(t,0)$ of \eqref{eq:smsckf_eq2} can be determined by computing the solution ${}^{}_{I}\tilde{\mathbf{x}}(t) = \bfPhi(t,0){}^{}_{I}\tilde{\mathbf{x}}(0)$ to the homogeneous system ${}^{}_{I}\dot{\tilde{\mathbf{x}}} = \bfF(t) {}^{}_{I}\tilde{\mathbf{x}}$ for an arbitrary initial condition ${}^{}_{I}\tilde{\mathbf{x}}(0) = ({}^{}_{I}\bftheta(0), {}^{}_{I}\tilde{\bfv}(0), {}^{}_{I}\tilde{\bfp}(0), \tilde{\bfb}_g(0), \tilde{\bfb}_a(0))$. Since the last two rows of $\bfF(t)$ are zero, the bias terms remain constant in the homogeneous system:
\begin{equation} \label{eq:phi-b}
\begin{aligned}
    \tilde{\bfb}_g(t) &= \tilde{\bfb}_g(0) = \begin{bmatrix}\mathbf{0} &\mathbf{0} &\mathbf{0} &\bfI_3 &\mathbf{0} \end{bmatrix} {}^{}_{I}\tilde{\mathbf{x}}(0),\\
    \tilde{\bfb}_a(t) &= \tilde{\bfb}_a(0) = \begin{bmatrix}\mathbf{0} &\mathbf{0} &\mathbf{0}  &\mathbf{0} &\bfI_3\end{bmatrix} {}^{}_{I}\tilde{\mathbf{x}}(0).
\end{aligned}
\end{equation}
Next, consider ${}^{}_{I}\dot{\bftheta}(t) = -\bfomega_\times {}^{}_{I}\bftheta(t) - \tilde{\bfb}_g(t)$ with $\bfomega = {}^i\bfomega_k - \hat{\bfb}_{g,k}$, which is a linear time-invariant (LTI) system in ${}^{}_{I}\bftheta(t)$. Using $\tilde{\bfb}_g(t) = \tilde{\bfb}_g(0)$ and Lemma~\ref{lemma:SO3-integration}, the system has solution:
\begin{align} \label{eq:phi-theta}
    {}^{}_{I}\bftheta(&t) = \exp(-t\bfomega_\times){}^{}_{I}\bftheta(0) - \int_0^t \exp(-(t-s)\bfomega_\times)ds \tilde{\bfb}_g(0) \notag\\
    &= \begin{bmatrix}\exp(-t\bfomega_\times) &\mathbf{0} &\mathbf{0} & -t \bfJ_L(-t \bfomega) &\mathbf{0} \end{bmatrix} {}^{}_{I}\tilde{\mathbf{x}}(0).
\end{align}
Next, consider ${}^{}_{I}\dot{\tilde{\bfv}}(t) = - {}^{}_{I}\hat{\bfR}(t) \bfa_{\times} {}^{}_{I}\bftheta(t) - {}^{}_{I}\hat{\bfR}(t)\tilde{\bfb}_a(t)$ with $\bfa = {}^i\bfa_k - \hat{\bfb}_{a,k}$, which is an LTI system in ${}^{}_{I}\tilde{\bfv}(t)$. Using that $\tilde{\bfb}_a(t) = \tilde{\bfb}_a(0)$, the LTI system has solution:
\begin{align} \label{eq:phi-v}
{}^{}_{I}\tilde{\bfv}(t) &= {}^{}_{I}\tilde{\bfv}(0) - \int_0^t {}^{}_{I}\hat{\bfR}(s) \bfa_{\times} {}^{}_{I}\bftheta(s) \, ds - \int_0^t {}^{}_{I}\hat{\bfR}(s) \, ds \tilde{\bfb}_a(0) \notag\\
&= \begin{bmatrix} \bfPhi_{\bfv\bftheta}(t) & \bfI_3 & \mathbf{0} &  \bfPhi_{\bfv\bfomega}(t) & \bfPhi_{\bfv\bfa}(t) \end{bmatrix}  {}^{}_{I}\tilde{\mathbf{x}}(0),
\end{align}
where:
\begin{align}
\bfPhi_{\bfv\bftheta}(t) &= - \int_0^t {}^{}_{I}\hat{\bfR}(s) \bfa_{\times} \exp(-s\bfomega_\times)\, ds, \label{eq:Xvth}\\
\bfPhi_{\bfv\bfomega}(t) &= \int_0^t s {}^{}_{I}\hat{\bfR}(s) \bfa_{\times} \bfJ_L(-s \bfomega)\, ds, \label{eq:Xvom}\\
\bfPhi_{\bfv\bfa}(t) &= - \int_0^t {}^{}_{I}\hat{\bfR}(s) \, ds = - t {}^{}_{I}\hat{\bfR}_k \bfJ_L(t \bfomega), \label{eq:Xva}
\end{align}
where \eqref{eq:Xva} follows from the solution to \eqref{eq:R-ode} and Lemma~\ref{lemma:SO3-integration}. To integrate \eqref{eq:Xvth}, we use the solution to \eqref{eq:R-ode}, the property $\bfR \bfa_{\times} \bfR^\top = [\bfR \bfa]_{\times}$ for $\bfR \in SO(3)$, $\bfa \in \bbR^3$, and Lemma~\ref{lemma:SO3-integration}:
\begin{align}
\bfPhi_{\bfv\bftheta}(t) &= - {}^{}_{I}\hat{\bfR}_k \int_0^t \exp(s\bfomega_\times) \bfa_{\times} \exp(s\bfomega_\times)^\top\, ds \\
&= - {}^{}_{I}\hat{\bfR}_k \brl{ \int_0^t \exp(s\bfomega_\times) \, ds \bfa }_{\times} = -t {}^{}_{I}\hat{\bfR}_k \brl{ \bfJ_L(t\bfomega) \bfa}_{\times}.\notag
\end{align}
To integrate \eqref{eq:Xvom}, observe that from $\bfomega_{\times}^3 = - \|\bfomega\|^2 \bfomega_{\times}$:
\begin{equation}
\bfI_3 - \frac{\bfomega_{\times}}{\|\bfomega\|^2} \prl{ \exp(\bfomega_{\times}) - \bfI_3 - \bfomega_{\times}} = \bfJ_L(\bfomega),
\end{equation}
and we can split \eqref{eq:Xvom} in two parts:
\begin{align} \label{eq:Xvom2}
\bfPhi_{\bfv\bfomega}(t) &= {}^{}_{I}\hat{\bfR}_k \int_0^t s \exp(s \bfomega_{\times}) \, ds\, \bfa_{\times} \prl{ \bfI_3 + \frac{\bfomega_{\times}^2}{\|\bfomega\|^2}} \\
&\quad + {}^{}_{I}\hat{\bfR}_k \int_0^t \exp(s \bfomega_{\times}) \frac{\bfa_{\times} \bfomega_{\times}}{\|\bfomega\|^2} \prl{\exp(s \bfomega_{\times})^\top - \bfI_3}  \, ds. \notag
\end{align}
By integrating the terms of the Taylor series of $s \exp(s \bfomega_{\times})$ and using $\bfomega_{\times}^3 = - \|\bfomega\|^2 \bfomega_{\times}$, we can verify that:
\begin{align} \label{eq:Delta}
\int_0^t &s \exp(s \bfomega_{\times}) \, ds =  \sum_{n=0}^\infty \frac{t^{n+2} \bfomega_{\times}^n}{(n+2)n!} =\frac{-1}{\|\bfomega\|^2} \sum_{n=0}^\infty \frac{t^{n+2} \bfomega_{\times}^{n+2}}{(n+2)n!} \notag\\
& = \frac{1}{\|\bfomega\|^2} \sum_{n=0}^\infty \prl{\frac{t^{n+2} \bfomega_{\times}^{n+2}}{(n+2)!} - \frac{t^{n+2} \bfomega_{\times}^{n+2}}{(n+1)!}} = \frac{\Delta(t)}{\|\bfomega\|^2}
\end{align}
where $\Delta(t)$ is defined in \eqref{eq:Phi-blocks}. The second integral in \eqref{eq:Xvom2} can be computed using $\bfa_{\times} \bfomega_{\times} = \bfomega \bfa^\top - (\bfa^\top \bfomega)\bfI_3$, $\exp(s \bfomega_{\times})\bfomega = \bfomega$, $\exp(s \bfomega_{\times})\exp(s \bfomega_{\times})^\top = \bfI_3$, and Lemma \ref{lemma:SO3-integration}, leading to:
\begin{align} \label{eq:Xvom3}
&\bfPhi_{\bfv\bfomega}(t) = {}^{}_{I}\hat{\bfR}_k \Delta(t) \frac{\bfa_{\times}}{\|\bfomega\|^2} \prl{ \bfI_3 + \frac{\bfomega_{\times}^2}{\|\bfomega\|^2}}\\
&\;\; + t {}^{}_{I}\hat{\bfR}_k\frac{\bfomega \bfa^\top}{\|\bfomega\|^2} \prl{\bfJ_L(-t\bfomega)-\bfI_3} + t{}^{}_{I}\hat{\bfR}_k\frac{\bfa^\top\bfomega}{\|\bfomega\|^2}\prl{\bfJ_L(t\bfomega)-\bfI_3}. \notag
\end{align}
Finally, consider ${}^{}_{I}\dot{\tilde{\bfp}}(t) = {}^{}_{I}\tilde{\bfv}(t)$, which is an LTI system in ${}^{}_{I}\tilde{\bfp}(t)$ with solution:
\begin{align} \label{eq:phi-p}
{}^{}_{I}\tilde{\bfp}(t) &= {}^{}_{I}\tilde{\bfp}(0) + \int_0^t {}^{}_{I}\tilde{\bfv}(s) \, ds \notag\\
&= \begin{bmatrix} \bfPhi_{\bfp\bftheta}(t) & t\bfI_3 & \bfI_3 &  \bfPhi_{\bfp\bfomega}(t) & \bfPhi_{\bfp\bfa}(t) \end{bmatrix}  {}^{}_{I}\tilde{\mathbf{x}}(0),
\end{align}
where:
\begin{align}
\bfPhi_{\bfp\bftheta}(t) &= \int_0^t \bfPhi_{\bfv\bftheta}(s) \, ds = - t^2{}^{}_{I}\hat{\bfR}_k \brl{ \bfH_L(t\bfomega) \bfa}_{\times}, \label{eq:Xpth}\\
\bfPhi_{\bfp\bfomega}(t) &= \int_0^t \bfPhi_{\bfv\bfomega}(s) \, ds, \label{eq:Xpom}\\
\bfPhi_{\bfp\bfa}(t) &= \int_0^t \bfPhi_{\bfv\bfa}(s) \, ds = - t^2{}^{}_{I}\hat{\bfR}_k \bfH_L(t\bfomega), \label{eq:Xpa}
\end{align}
where \eqref{eq:Xpth} and \eqref{eq:Xpa} follow from Lemma~\ref{lemma:SO3-integration}. To integrate \eqref{eq:Xpom}, we use that:
\begin{align}
\int_0^t \Delta(s)\, ds &= \int_0^t \exp(s\bfomega_{\times}) \, ds - \int_0^t s\exp(s\bfomega_{\times}) \, ds \bfomega_{\times} - t \bfI_3 \notag\\
&= t \bfJ_L(t\bfomega) - \frac{\bfomega_{\times} \Delta(t)}{\|\bfomega\|^2} - t \bfI_3,
\end{align}
where in the second integral we used \eqref{eq:Delta} and that $\bfomega_{\times}$ and $\Delta(t)$ commute. We integrate the second and third term in \eqref{eq:Xvom3} using Lemma \ref{lemma:SO3-integration} to obtain the final result for $\bfPhi_{\bfp\bfomega}(t)$ in \eqref{eq:Phi-blocks}. Since ${}^{}_{I}\tilde{\mathbf{x}}(0)$ was arbitrary, the rows of $\bfPhi(t,0)$ are provided by \eqref{eq:phi-theta}, \eqref{eq:phi-v}, \eqref{eq:phi-p}, and \eqref{eq:phi-b}.

\subsection{Zero-Velocity Update}
\label{sec:zero_velocity_update}

Zero-velocity conditions are frequently encountered in autonomous driving and autonomous flight applications, and thus the ability to determine whether the robot is static is important for reducing drift in VIO estimation \cite{geneva2019openvins, xiaochen2020lightweight}. 
In OrcVIO zero-velocity is detected similarly as in \cite{geneva2019openvins} by using pseudo zero inertial measurements and then checking the velocity magnitude.
For the zero-velocity update, we need to compute the residuals and the Jacobians of the inertial measurements with respect to the state.
Based on \eqref{eq:imu-dynamics} the residuals are:
\begin{equation}
\begin{aligned}
{}^{z}_{}\bfe(\bfx,{}^i\bfz)
&\triangleq
\left[\begin{array}{c}
\left(
{}^i\bfomega - \bfb_g 
\right)
-\boldsymbol{0}
\\
\left(
{}_I\bfR \prl{{}^i\bfa - \bfb_a} + \bfg
\right) 
-\boldsymbol{0}
\end{array}\right] 
\end{aligned}. 
\end{equation}
The corresponding Jacobians are presented as follows:
\begin{equation}
\begin{aligned}
\frac{\partial {}^{z}_{}\boldsymbol{e}}{\partial {}^{}_{I}{\bftheta}} 
&=
[\boldsymbol{0} \quad -{}_I\hat{\bfR} \left( {}^i\bfa - \hat{\bfb}_a \right)_\times]^\top,  \\ 
\frac{\partial {}^{z}_{}\boldsymbol{e}}{\partial \tilde{\bfb}_g} 
&= 
[-{\bfI}_3 \quad \boldsymbol{0}]^\top 
\quad 
\frac{\partial {}^{z}_{}\boldsymbol{e}}{\partial \tilde{\bfb}_a} 
= 
[\boldsymbol{0} \quad -{}_I\hat{\bfR}]^\top. 
\end{aligned}
\end{equation}


{\small
\bibliographystyle{cls/IEEEtran}
\bibliography{bib/ref.bib}
}

%

\begin{IEEEbiography}[{\includegraphics[width=1in,height=1.25in,clip,keepaspectratio]{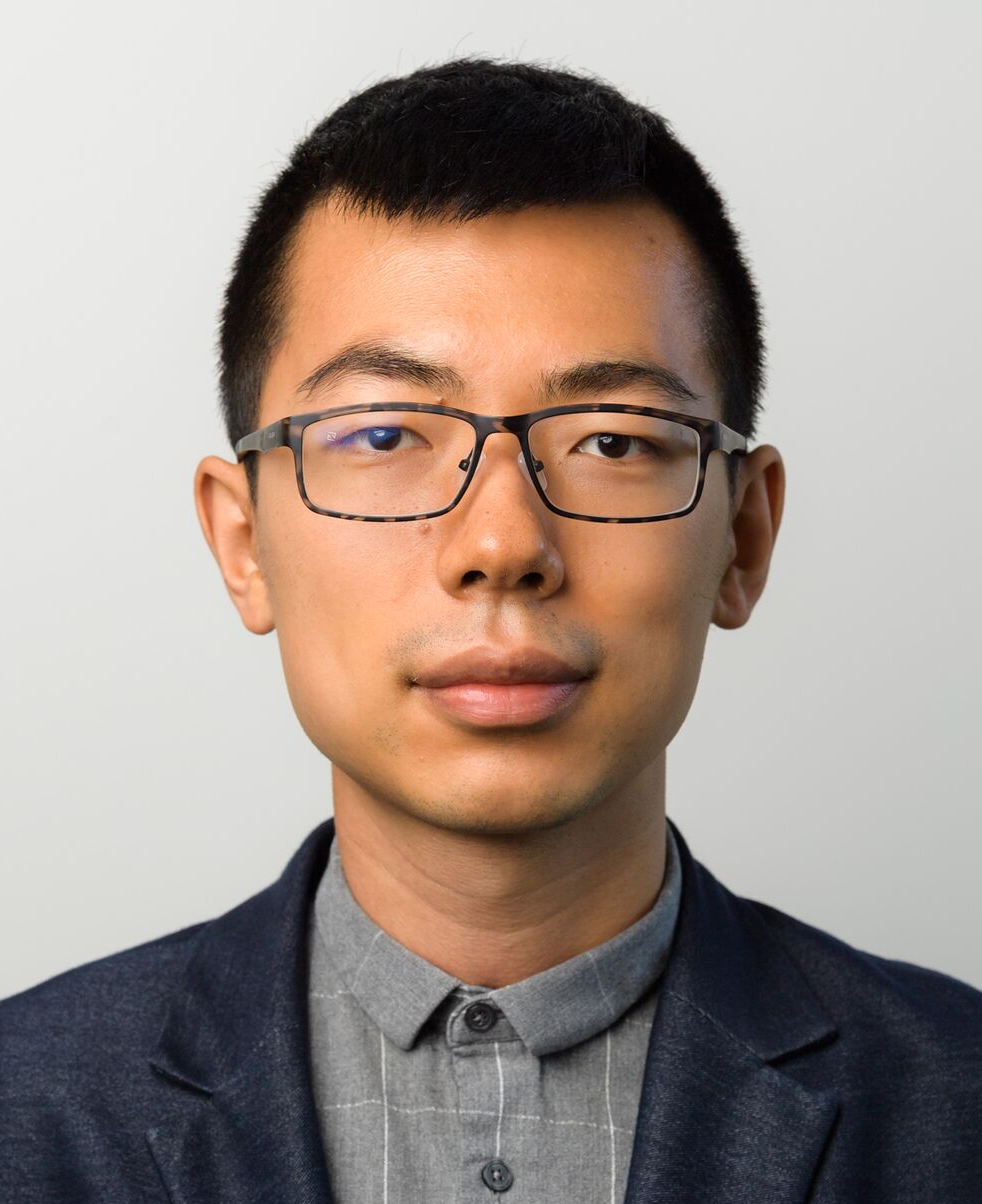}}]{Mo Shan} (S'20) is a PhD student of Electrical and Computer Engineering at the University of California San Diego. He obtained a B.S. degree in Electrical Engineering from National University of Singapore, Singapore, in 2014 and M.S. degree in Electrical Engineering from the University of California San Diego. His research focuses on semantic SLAM and visual inertial odometry.
\end{IEEEbiography}

\begin{IEEEbiography}
[{\includegraphics[width=1in,height=1.25in,clip,keepaspectratio]{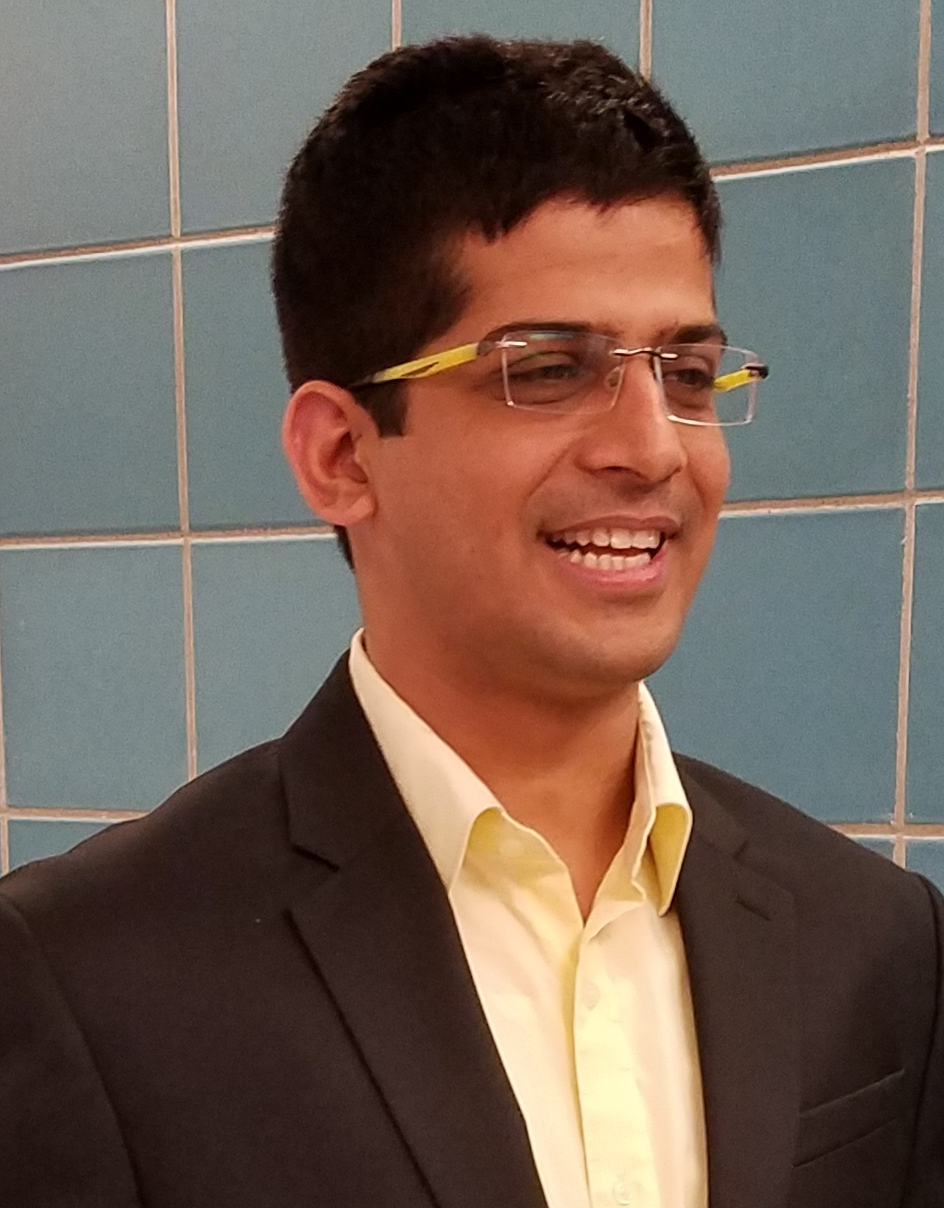}}]{Vikas Dhiman} (M'15) is a Postdoctoral Researcher at the Contextual Robotics Institute and the Existential Robotics Lab at the University of California, San Diego. His work focuses on localization, mapping, and control algorithms for robotics applications. He graduated in Electrical Engineering (2008) from Indian Institute of Technology, Roorkee, earned an M.S. (2014) from the University at Buffalo, and received a Ph.D. (2019) from the University of Michigan, Ann Arbor. 
\end{IEEEbiography}


\begin{IEEEbiography}[{\includegraphics[width=1in,height=1.25in,clip,keepaspectratio]{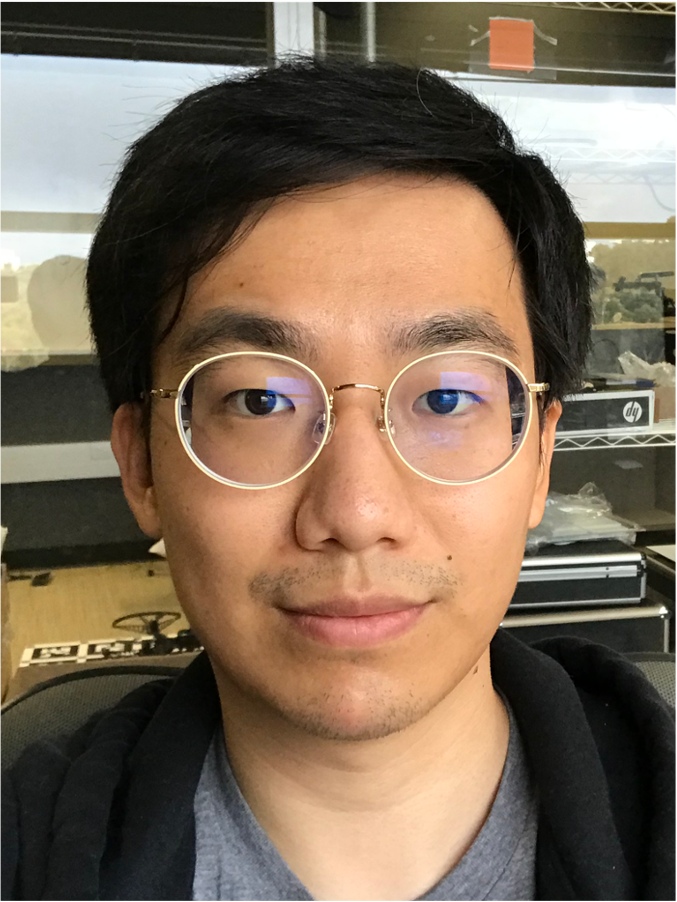}}]{Qiaojun Feng} (S'19) obtained a B.Eng. in Automation from Tsinghua University, Beijing, China, in 2017 and an M.S. in Electrical Engineering from the University of California San Diego, La Jolla, CA, in 2019. He is currently a Ph.D. student of Electrical and Computer Engineering at the University of California San Diego. His research focuses on robotics and autonomy, especially SLAM. He works on environment perception and map reconstruction, combining geometric and semantic information.
\end{IEEEbiography}

\begin{IEEEbiography}[{\includegraphics[width=1in,height=1.25in,clip,keepaspectratio]{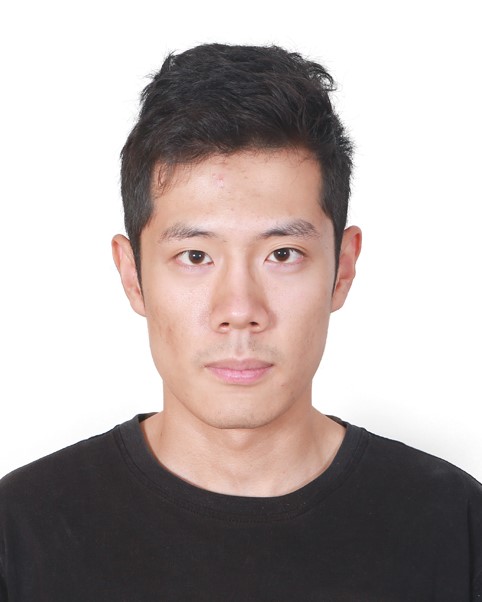}}]{Jinzhao Li} obtained a B.Eng. in Automation from Tsinghua University, Beijing, China, in 2018 and an M.S. in Electrical and Computer Engineering from the University of California San Diego, La Jolla, CA, in 2020. His research focuses on SLAM, visual inertial odometry, and object-level reconstruction.
\end{IEEEbiography}

\begin{IEEEbiography}[{\includegraphics[width=1in,height=1.25in,clip,keepaspectratio]{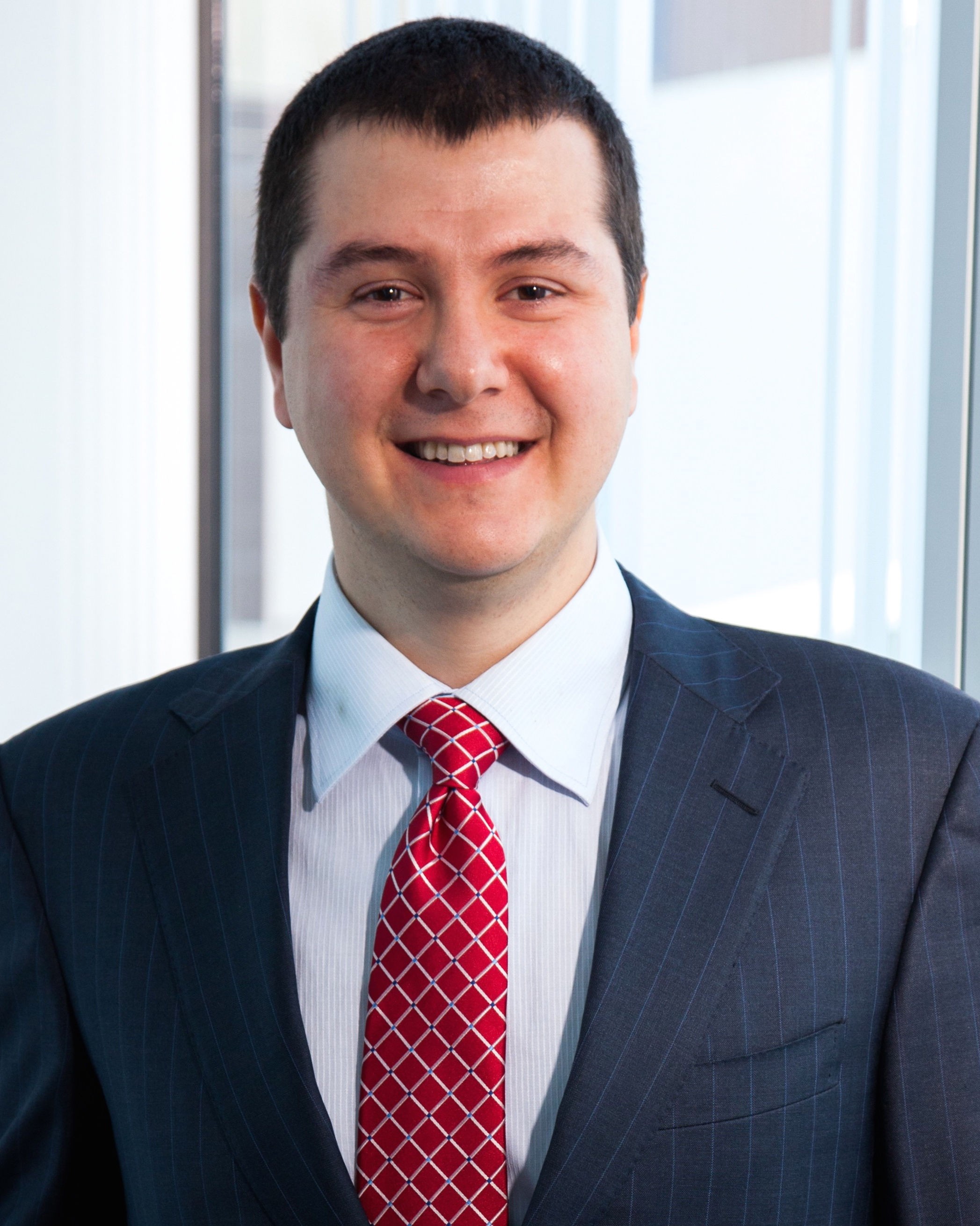}}]{Nikolay Atanasov}
(S'07-M'16) is an Assistant Professor of Electrical and Computer Engineering at the University of California San Diego. He obtained a B.S. degree in Electrical Engineering from Trinity College, Hartford, CT, in 2008 and M.S. and Ph.D. degrees in Electrical and Systems Engineering from the University of Pennsylvania, Philadelphia, PA, in 2012 and 2015, respectively. His research focuses on robotics, control theory, and machine learning, applied to active sensing using ground and aerial robots. He works on probabilistic environment models that unify geometry and semantics and on optimal control and reinforcement learning approaches for minimizing uncertainty in these models. Dr. Atanasov's work has been recognized by the Joseph and Rosaline Wolf award for the best Ph.D. dissertation in Electrical and Systems Engineering at the University of Pennsylvania in 2015 and the best conference paper award at the IEEE International Conference on Robotics and Automation in 2017.
\end{IEEEbiography}




\end{document}